\pdfoutput=1
\documentclass[11pt]{article}
\usepackage[textwidth=400pt,centering]{geometry}
\usepackage{fullpage}
\usepackage{microtype}
\usepackage{graphicx}
\usepackage{subfigure}
\usepackage{booktabs} 
\usepackage[sort]{natbib}
\usepackage[nottoc,notlot,notlof]{tocbibind}
\usepackage[vlined, linesnumbered, ruled]{algorithm2e}

\usepackage{amsmath, bm, amssymb, mathtools}
\usepackage{physics}

\def\eqref#1{equation~\ref{#1}}

\def\floor#1{\lfloor #1 \rfloor}
\def\1{\bm{1}}

\def\inp#1#2{\left\langle #1, #2 \right\rangle}

\DeclareMathAlphabet{\mathsfit}{\encodingdefault}{\sfdefault}{m}{sl}
\SetMathAlphabet{\mathsfit}{bold}{\encodingdefault}{\sfdefault}{bx}{n}

\def\gA{{\mathcal{A}}}
\def\gB{{\mathcal{B}}}

\def\gD{{\mathcal{D}}}
\def\gE{{\mathcal{E}}}

\def\gO{{\mathcal{O}}}
\def\gP{{\mathcal{P}}}

\def\gT{{\mathcal{T}}}

\def\sN{{\mathbb{N}}}

\def\sR{{\mathbb{R}}}

\newcommand{\E}{\mathbb{E}}

\newcommand{\Reg}{\mathrm{Reg}}

\newcommand{\I}{\mathbbm{1}}

\DeclareMathOperator*{\argmax}{arg\,max}
\DeclareMathOperator*{\argmin}{arg\,min}

\usepackage[hidelinks]{hyperref}
\hypersetup{
    colorlinks,
    linkcolor={red!50!black},
    citecolor={blue!50!black},
    urlcolor={blue!80!black}
}

\usepackage{csquotes}
\usepackage{amsthm, thmtools,thm-restate}
\usepackage[capitalize,noabbrev]{cleveref}
\theoremstyle{plain}
\newtheorem{theorem}{Theorem}

\newtheorem{lemma}[theorem]{Lemma}

\theoremstyle{definition}

\newtheorem{assumption}{Assumption}

\theoremstyle{remark}
\newtheorem{remark}[theorem]{Remark}

\usepackage[utf8]{inputenc} 
\usepackage[T1]{fontenc}    
\usepackage{url}            
\usepackage{nicefrac}       
\usepackage{xcolor}    

\usepackage{pifont}
\usepackage{bbm}
\usepackage{multirow}
\usepackage{braket}
\usepackage{enumitem}
\definecolor{darkblue}{rgb}{0,0.08,0.8}
\newcommand\numberthis{\addtocounter{equation}{1}\tag{\theequation}}

\usepackage{verbatim}

\newcommand{\ul}{\underline{\lambda}}
\newcommand{\uL}{\underline{L}}

\usepackage{calc}  
\usepackage{array} 

\title{Adaptive Learning Rates with Surrogate Probability \\ for Follow-the-Perturbed-Leader}
\author{Jongyeong Lee$^{1}$ \\ \fontsize{10}{12}\selectfont jongyeong@kist.re.kr \and Junya Honda$^{2,3}$ \\ \fontsize{10}{12}\selectfont honda@i.kyoto-u.ac.jp \and Shinji Ito$^{4,3}$ \\ \fontsize{10}{12}\selectfont shinji@mist.i.u-tokyo.ac.jp \and Chansoo Kim$^{1,5}$ \\ \fontsize{10}{12}\selectfont eau@ust.ac.kr}
\date{\fontsize{10}{12}\selectfont
$^1$ Korea Institute of Science and Technology 
$^2$ Kyoto University
$^3$ RIKEN AIP \\
$^4$ The University of Tokyo
$^5$ University of Science and Technology
}
\usepackage{times}

\begin{document}
\maketitle

\begin{abstract}%
    Follow-the-regularized-leader framework has shown effectiveness and flexibility in online learning problems, where the choice of learning rates are known to be crucial.
    Recently, adaptive learning rates defined in terms of the arm-selection probabilities, obtained by solving convex optimization, have achieved improved best-of-both-worlds (BOBW) guarantees in various bandit problems.
    In contrast, BOBW guarantees for its computationally efficient alternative, follow-the-perturbed-leader (FTPL), remain relatively limited since its optimization-free nature ironically makes the design of adaptive, probability-dependent learning rates non-trivial.
    To address this challenge, we propose an adaptive learning rate for FTPL by introducing surrogate probability functions that can be computed only from the available quantities, without requiring the exact probabilities.
    Based on these learning rates with surrogate functions, we provide the BOBW guarantee for FTPL with Pareto perturbations for any shape parameter $\alpha >1$, generalizing prior results restricted to specific choices of $\alpha=2$.
    We further show the BOBW guarantees for FTPL with adaptive learning rates in the bandit problem with expert advices.
    Our approach preserves the computational simplicity of FTPL while enabling probability-dependent adaptivity, and the surrogate-based methodology may be of independent interest in other algorithmic frameworks beyond FTPL and learning rate designs.
\end{abstract}

\section{Introduction}
The multi-armed bandit (MAB) problem is a fundamental problem for sequential decision making under uncertainty.
In this problem, at each round $t \in [T] :=\qty{1,\ldots, T}$, an agent selects an arm $i_t$ from a set of $K$ arms and observes the corresponding loss $\ell_{t,i_t}$ from $i_t$, where the loss vectors $\ell_t=(\ell_{t,1},\ldots, \ell_{t,K})\in [0,1]^K$ are determined by the environment.
A central challenge in bandit problems is to minimize the cumulative loss by learning the environments only with partial feedback, where two canonical types of environments have been extensively studied.

In the stochastic regime, losses are identically independently distributed (i.i.d.) from an unknown but fixed distribution~\citep{lai1985asymptotically, katehakis1995sequential}.
In contrast, in the adversarial regime, losses may be chosen by an adversary, possibly adaptively based on the agent's past actions, so that no distributional assumption can be made~\citep{auer2002nonstochastic}.
Since the true nature of the environment is usually unknown in practice, there has been considerable interest in policies that achieve (near-)optimal performance guarantees in both regimes, which is referred to as Best-of-Both-Worlds (BOBW) guarantee~\citep{bubeck2012best, seldin2017improved}.

A dominant approach to achieving BOBW guarantees would be the Follow-the-Regularized-Leader (FTRL) framework, which has successfully obtained BOBW guarantees across a wide range of online learning problems including the standard multi-armed bandits~\citep{zimmert2021tsallis, jin2023improved}, decoupled bandits~\citep{rouyer2020decoupled}, combinatorial-semi bandit~\citep{zimmert2019beating,ito21a}, and partial monitoring~\citep{tsuchiya2024simple}, to name a few.
This success can be attributed to its generality and flexibility: one can tailor the policy to the problem structure by carefully choosing the (convex) regularizers and learning rates.
However, this generality comes at a cost, as it requires solving a convex optimization problem at each round to compute the arm-selection distribution, which can be computationally demanding in practice.

This limitation has motivated a line of work aimed at avoiding per-round optimization while preserving desirable regret guarantees.
One approach is to approximate FTRL updates using simpler arithmetic operations, as in the Prod family of policies~\citep{cesa2007improved, gaillard2014second}, where \citet{zimmert2024productive} obtained BOBW guarantee by leveraging a kind of first order approximation of FTRL with Tsallis entropy.
Another prominent alternative is the Follow-the-Perturbed-Leader (FTPL) framework, which selects arms by adding random perturbations to cumulative losses~\citep{poland2005fpl,kalai2005ftpl}.
FTPL has gained attention due to its simplicity and (almost) optimization-free nature.
Moreover, there is a deep theoretical correspondence between FTRL and FTPL, where specific FTRL is associated with particular perturbation distributions in FTPL~\citep{abernethy2016perturbation, li2024optimism, lee2025}.
Recent work has also established BOBW guarantees for FTPL in standard MABs~\citep{pmlr-v201-honda23a, lee2024follow}, decoupled bandits~\citep{kim2025follow}, and combinatorial semi-bandits~\citep{zhan2025followtheperturbedleader,chen2026further}.

Despite these recent advances, progress on BOBW guarantees for FTPL still remains limited compared with that for FTRL.
In particular, recent advances in the FTRL framework exploit explicit arm-selection probabilities to design adaptive learning rates and employ hybrid regularizers, to obtain BOBW guarantees in various settings~\citep{jin2023improved,tsuchiya2024simple,zhao2025heavy}.
However, these techniques do not transfer straightforwardly to FTPL, where probabilities do not admit a closed form and are induced implicitly only through perturbations.
In other words, while perturbations remove the need for optimization, they conversely appear to degrade the adaptivity of FTPL that has been crucial to recent advances in FTRL framework with adaptive learning rates.

\paragraph{Contribution.}
In this paper, we address this gap by developing adaptive learning rates for FTPL that preserve computational efficiency while achieving BOBW guarantees, \emph{without requiring explicit arm-selection probabilities}.
Our approach is inspired by the stability-penalty matching (SPM)  methodology developed for FTRL frameworks~\citep{pmlr-v247-ito24a, nguyen2025data} and by recent uses of surrogate probabilities in FTPL~\citep{kim2025follow}.
The key technical idea is to introduce surrogate probability functions that replace true arm-selection probabilities and can be computed solely from the currently available quantities.

Based on SPM learning rates with surrogate probabilities, we first generalize existing BOBW guarantee of FTPL in the standard MAB.
In particular, we show that FTPL with Pareto perturbations of any shape $\alpha >1$ achieves BOBW guarantee, whereas previous results were restricted to the case $\alpha=2$~\citep{lee2024follow}.
This result aligns with those obtained for FTRL with $\gamma$-Tsallis entropy for general $\gamma \in (0,1)$~\citep{jin2023improved, pmlr-v247-ito24a}, extending beyond the case $\gamma=1/2$~\citep{zimmert2021tsallis}.
We further extend the BOBW guarantee for FTPL with SPM learning rates to the bandit problems with expert advice~\citep{auer2002nonstochastic}, which is also referred to as contextual bandits~\citep{dann2023blackbox}.

One of the main advantage of our approaches over FTRL-based approaches lies in its simplicity and computational efficiency, especially when the hybrid regularizers are considered.
While FTRL only with Tsallis entropy admits a solution that can be formulated as one-dimensional optimization problems, which can be efficiently computed via Newton’s method or bisection method~\citep{zimmert2021tsallis}, recent improvements in regret guarantees often rely on hybrid regularizers~\citep{tsuchiya2023further, jin2023improved, pmlr-v247-ito24a, nguyen2025data}.
These regularizers may require solving a convex optimization problem at every round.
In contrast, our policy follows the standard FTPL framework with Pareto perturbations and computes learning rates directly from currently available quantities, completely avoiding convex optimizations.

Besides its computational efficiency, the use of surrogate probability functions that do not belong to the probability simplex may be of independent interest.
Although our analysis is grounded in FTPL, this surrogate-based approach could also be applicable to other algorithmic frameworks beyond FTPL.
More broadly, it could extend to settings where explicit probability vectors are used, not limited to the design of adaptive learning rates.
For example, in heavy-tailed bandits, \citet{huang2022adaptive} addressed the effect of extreme observations by skipping large losses, where the skipping thresholds are chosen adaptively based on the arm-selection probability.
We expect that our approach could be used to replace such explicit arm-selection probabilities with surrogate probability functions.

\section{Preliminaries}
In this section, we introduce notation and formulate the problem.
Then, we introduce the intuition behind stability-penalty matching (SPM) methods in FTRL frameworks.

\subsection{Problem formulation}
In bandit settings, the environment determines the loss vector $\ell_t \in [0,1]^K$ and the agent selects an arm $i_t$ at each round, where the performance of the agent's policy is measured by the pseudo-regret.
When $w_t$ denotes the arm-selection probabilities of policy at round $t$, the pseudo-regret is defined by
\begin{equation*}
    \Reg(T) = \E\qty[\sum_{t=1}^T \inp{\ell_t}{w_t - e_{i^*}}], \, i^* \in \argmin_{i \in [K]} \E\qty[\sum_{t=1}^T \ell_{t,i}].
\end{equation*}
Here, $i^*$ denotes the optimal arm in hindsight and is assumed to be unique following the prior studies~\citep{lee2024follow, pmlr-v247-ito24a}.
Since only partial feedback is observable, an estimator $\hat{\ell}_t$ of the loss vector $\ell_t$ can be used, which is specified later.

In this paper, we considers two possible environments, the adversarial regime~\citep{auer2002nonstochastic} and adversarial regime with self-bounding constraints~\citep{zimmert2021tsallis}.
In the adversarial regime, $\ell_t$ is determined in an adversarial way possibly depending on the history, $\{(\ell_s, i_s)\}_{s=1}^{t-1}$.
The adversarial regime with a $(\Delta, C, T)$ self-bounding constraint is an environment where the regret can be bounded from below as follows:
\begin{equation*}
    \Reg(T) \geq \Reg'(T) - C, \text{ where } \Reg'(T) = \E\qty[\sum_{t=1}^T \Delta_{i_t}] = \E\qty[\sum_{t=1}^T \sum_{i=1}^K \Delta_{i}w_{t,i}]. 
\end{equation*}
This regime also includes the stochastic environments with adversarial corruption~\citep{wei2018adversarial}, where each $\Delta_i \geq 0$ is equivalent to the suboptimality gap of arm $i$ and $C$ is the magnitude of corruption.
Note that the unique optimal $i^*$ assumption implies $\Delta_{i}>0$ holds for all $i\ne i^*$, where we denote $\Delta_{\min} = \min_{i\ne i^*} \Delta_i$ in the adversarial regime with $(\Delta,C,T)$ self-bounding constraint.

\subsection{Follow-the-Perturbed-Leader}
Let $\hat{L}_t = \sum_{s=1}^{t-1}\hat{\ell}_{s}$ be the cumulative loss estimator up to round $t-1$.
Then, FTPL is a policy that selects an arm $i_t$ according to
\begin{equation}\label{def: ftpl it}
    i_t = \argmin_{i\in [K]} \qty{\hat{L}_{t,i} - \frac{r_{t,i}}{\eta_{t}}} = \argmin_{i\in [K]} \qty{\eta_t \hat{\uL}_{t,i} - r_{t,i}}, \text{ where } r_{t,i} \stackrel{\text{i.i.d.}}{\sim} \gD, \, \forall i \in [K].
\end{equation}
Here, the underline denotes the gap of a vector from its minimum, i.e., $\ul=\lambda - \1 \min_{i\in [K]} \lambda_i$ for all-one vector $\1$, and the learning rate $\eta_t$ will be defined later.
In this paper, we consider perturbations generated from the shifted Pareto distribution with shape $\alpha$, also known as the Lomax distribution with shape $\alpha$ and scale $1$, whose density function $f$ and distribution function $F$ are defined by
\begin{equation}\label{def: f and F of Lomax}
    f(z) = \frac{\alpha}{(z+1)^{\alpha+1}}, \text{ and } F(z) = 1- \frac{1}{(1+z)^\alpha}, \, \forall z \geq 0.
\end{equation}
Then, the arm-selection probability given $\hat{L}_t$ can be written as $w_{t,i} = \mathrm{Pr}[i_t =i | \hat{L}_t] = \phi_i(\eta_t \hat{L}_t)$, where for $\lambda \in [0,\infty)^K$
\begin{align}\label{def: ftpl wt}
    \phi_i(\lambda) := \Pr_{r_1, \ldots, r_K \sim \gD}\qty[i_t = i\middle | \lambda] = \int_{0}^\infty f(z+\ul_i) \prod_{j\ne i} F(z+\ul_j) \dd z.
\end{align}
While the importance-weighted (IW) estimator $\hat{\ell}_{t,i} =\I[i_t=i]\ell_{t,i}/w_{t,i}$ is commonly used, $w_t$ of FTPL in (\ref{def: ftpl wt}) generally does not admit a closed form, which complicates its direct use.
Therefore, it is standard to estimate $1/w_{t,i}$ in FTPL via resampling-based procedures~\citep{abernethy2016perturbation, pmlr-v201-honda23a}.
In particular, \citet{neu2016importance} proposed the geometric resampling (GR) method, which produces an unbiased estimator of $1/w_{t,i_t}$ by repeatedly sampling perturbations $r_t'$ from the perturbation distribution until the FTPL rule selects the same $i_t$ arm with resampled $r_t'$.

\subsection{Stability-penalty matching learning rates for FTRL}\label{sec: FTRL spm}
In the standard regret decomposition of FTRL, it is well known that the regret is bounded from above as~\citep[Exercise 28.12]{lattimore2020bandit}
\begin{align}\label{eq: Reg FTRL}
    \Reg(T) \lesssim \underbrace{\sum_{t=1}^T \eta_t z_t}_{\text{stability term}} +\underbrace{ \sum_{t=1}^T \qty(\frac{1}{\eta_{t+1}} - \frac{1}{\eta_{t}})h_{t+1}}_{\text{penalty term}},
\end{align}
where the formulations of $z_t$ and $h_t$ depend on the problem setting and the regularizer function of FTRL.
While the learning rates $\eta_t$ may be designed as a function of either $z_t$ or $h_t$, recent advances in the analysis of FTRL suggest using $\eta_t$ so that the contributions of the stability and penalty terms to be of the same order, which is referred to as stability–penalty matching (SPM)~\citep{jin2023improved,tsuchiya2023stability}.
In particular, for $\beta_t := \eta_t^{-1}$, the update of SPM learning rates roughly takes the form of
\begin{equation}\label{eq: SPM for FTRL}
  \beta_{t+1} = \beta_t + \frac{z_t}{\beta_t h_{t+1}} \implies \Reg(T) \lesssim \sum_{t=1}^T \frac{z_t}{\beta_t},
\end{equation}
where the appropriate choices of $z_t$ and $h_t$ have shown the BOBW guarantees in various bandit problems~\citep{pmlr-v247-ito24a, zhao2025heavy}.
While the idea of SPM is intuitive, a technical obstacle arises from the appearance of $h_{t+1}$ in the update rule since computing $h_{t+1}$ usually requires information from the next round, which itself depends on $\beta_{t+1}$.
To resolve this circular dependence, prior FTRL policies employed hybrid regularizers to ensure that $h_{t+1}=\gO(h_t)$, thereby justifying the use of $h_t$ in place of $h_{t+1}$ in the update of $\beta_{t+1}$ in (\ref{eq: SPM for FTRL})~\citep{pmlr-v247-ito24a, nguyen2025data}.

In the standard FTRL analysis, $z_t$ usually depends on the arm-selection probabilities $w_{t,i}$ and $h_t$ is determined by the value of the regularizer at $w_{t,i}$.
For example, in the multi-armed bandits, FTRL with $\gamma$-Tsallis entropy satisfies $z_t \lesssim \sum_i (\E[w_{t,i}])^{1-\gamma}$ and $h_t \approx \sum_i(\E[w_{t,i}])^{\gamma}$~\citep{zimmert2021tsallis}.
Hence, the update of $\beta_{t+1}$ in \citet{pmlr-v247-ito24a} explicitly relies on the values of $w_{t,i}$ at each round, which are obtained by solving the associated convex optimization in FTRL.

\section{Adaptive learning rates for FTPL in multi-armed bandits}\label{sec: mab}
In this section, we propose an adaptive learning rate under SPM principle tailored to FTPL, which can be computed only from quantities available at the current round.

Given $\hat{L}_t$, we define $\sigma_{t,i}$ as the rank of $\hat{L}_{t,i}$ among $\{\hat{L}_{t,j}\}_{j \in [K]}$, where $\sigma_{t,i}=1$ if $\hat{L}_{t,i}$ is the smallest and $\sigma_{t,i}=K$ to the largest with arbitrary tie-breaking rule.
Let $j_t$ be the arm satisfying $\sigma_{t,j_t}=1$ after tie-breaking.
Note that even if multiple arms satisfy $\hat{\uL}_{t,i} = 0$, the tie-breaking rule ensures that there is a unique arm with $\sigma_{t,i}=1$.

\subsection{FTPL with conditional geometric resampling}\label{sec: FTPL CGR}
Since $w_t$ of FTPL does not admit the closed form in general, constructing the IW loss estimator $\hat{\ell}_t$ requires estimating $1/w_{t,i_t}$.
In the standard multi-armed bandit setting, we adopt conditional geometric resampling (CGR)~\citep{chen2025geometric}, which improves the computational efficiency of the original GR~\citep{neu2016importance}.
Beyond its computational advantages, CGR (precisely, CGR II-biased) provides a bounded loss estimator without sacrificing BOBW guarantees, which also simplifies the analysis, especially for $\alpha \in (1,2)$.
For $\alpha \geq 2$, the use of CGR II-biased is not essential as it is also possible to obtain BOBW guarantee with the original GR by applying the results in previous BOBW analysis~\citep{pmlr-v201-honda23a,lee2024follow}.

The main idea of CGR is to avoid generating new perturbations that clearly violate the termination condition, under which the FTPL rule in (\ref{def: ftpl it}) never selects $i_t$.
This is achieved by restricting the perturbation distributions during the resampling step to satisfy certain necessary conditions.
In particular, we employ the CGR II that generates the fresh perturbations from appropriately truncated conditional distributions so that resampled perturbation for the selected arm $r_{t,i_t}'$ is larger than $\eta_t \hat{\uL}_{t,i_t}$ as well as $r_{t,j}'$ for all arms with $\sigma_{t,j} < \sigma_{t,i_t}$.
More details including explicit sampling distributions and explicit policy is provided in Appendix~\ref{app: CGR}.

Let $M_t$ denote the number of resampling steps until the same arm $i_t$ is selected again by FTPL rule with these resampled perturbations and $G_t$ denote the maximum number of allowed resampling steps.
Then, it was shown that $M_t \sigma_{t,i_t}/(1-F^{\sigma_{t,i_t}}(\eta_t \hat{\uL}_{t,i_t}))$ is an unbiased estimator of $w_{t,i_t}^{-1}$ when $G_t = \infty$~\citep{chen2025geometric}.
When $G_t$ is finite, this early stopping introduces bias and thus incurs additional term in the regret.
However, with an appropriate choice of $G_t$, such additional regret term is at most $\log T$.
In Appendix~\ref{app: CGR bias}, we show that $G_t = K\log t$ is sufficient for Pareto perturbations.
Moreover, our analysis can be extended to general Fr\'echet-type distributions, in contrast to the analysis in~\citet[Lemma 8]{chen2025geometric}, which relies on properties of the Fr\'echet distribution.

For $\alpha \in (1,2)$, we additionally introduce an event $\gE_{t,\alpha}$, where we further reduce the resampling budget $G_t$ to avoid redundant resampling, defined by
\begin{equation}\label{def: event for decomposition for small alpha}
    \gE_{t,\alpha} := \bigg\{\sum_{i \ne j_t} \frac{1}{(1+\eta_t\hat{\uL}_{t,i})^\alpha} < \frac{1}{2}\bigg\},\, \forall \alpha \in (1,2).
\end{equation}
Simply speaking, this event roughly corresponds to the case where the probability of selecting the current best arm $j_t$ is at least $1/2$.
Hence, when both $i_t = j_t$ and $\gE_{t,\alpha}$ occur, we reduce the resampling budget $G_t$ from $K\log t$ to $2\log t$ in order to prevent redundant resampling that can induce artificially large loss estimates.
This event also plays an important role in the regret decomposition including $\alpha \geq 2$, where the definition of $\gE_{t,\alpha}$ for $\alpha \geq 2$ involves an $\alpha$-dependent threshold that is slightly larger than $1/2$. 
The pseudo-code of overall policy is given in Algorithm~\ref{alg: FTPL MAB}.

\begin{algorithm}[t]
   \caption{FTPL with conditional geometric resampling II-biased and SPM learning rates}
   \label{alg: FTPL MAB}
   \SetAlgoVlined
   \DontPrintSemicolon
   \SetKwInOut{Input}{Input}
   \Input{$K \in \sN$, $\alpha >1$, $\beta_1>0$, $\hat{L}_1 = 0$.}
   \For{$t=1, 2,\ldots$}{
        Sample $r_t = (r_{t,1}, \ldots, r_{t,K})$ i.i.d. from the Pareto distribution with shape $\alpha$ in (\ref{def: f and F of Lomax}).
        
        Select $i_t \in \argmin_{i\in[K]} \{\hat{L}_{t,i} - \beta_tr_{t,i}\}$ and observe $\ell_{t,i_t}$. \tcp*{FTPL}

        Find $j_t \in \argmin_i \hat{L}_{t,i}$ and set $M_{t}:=0$ and $G_t := K \log t$.

        \textbf{if} $i_t = j_t$, $\alpha \in (1,2)$, and $\I[\gE_{t,\alpha}]=1$ in (\ref{def: event for decomposition for small alpha}) \textbf{then} $G_t := 2\log t$.

        \Repeat{$i_t = \argmin_{i\in[K]} \{\hat{L}_{t,i} - \beta_t r_{t,i}'\}$ \textnormal{or} $M_t \geq G_t$.}{
            $M_t := M_t +1$. \tcp*{CGR II-biased}

            Sample $r_t'$ from the appropriately defined conditional distribution, details in Appendix~\ref{app: CGR}.  
            


        }
        Set $\hat{\ell}_{t,i_t} =M_t \sigma_{t,i_t}\ell_{t,i_t}/(1-F^{\sigma_{t,i_t}}(\eta_t\hat{\uL}_{t,i_t}))$ and update $\hat{L}_{t+1} = \hat{L}_t + \hat{\ell}_{t,i_t}e_{i_t}$. 
        
        Set $\beta_{t+1}$ by the update rule of (\ref{def: lr in MAB}) based on $z_t, h_t$ in (\ref{def: hz in MAB}) and $q_t$ in (\ref{def: surrogate}).
   }
\end{algorithm}
\subsection{The surrogate probability function}
As discussed in Section~\ref{sec: FTRL spm}, SPM learning rates are designed to balance the stability and penalty terms, which are expressed explicitly as functions of the arm-selection probabilities $w_t$ in the FTRL framework~\citep{zimmert2021tsallis}.
Accordingly, the update for SPM learning rates $\beta_{t+1}$ in FTRL naturally relies on $w_t$~\citep{jin2023improved,pmlr-v247-ito24a}.
In contrast, since $w_t$ of FTPL does not admit a closed form, existing BOBW analyses for FTPL instead express the stability and penalty terms by the cumulative loss estimators $\hat{L}_t$ and learning rates $\eta_t$~\citep{pmlr-v201-honda23a, lee2024follow}.
This motivates replacing $w_t$ with suitable surrogate quantities that appear in the corresponding regret bounds.
In this paper, we consider the following two surrogate functions, defined for any $i\in [K]$ and $t\in \sN$ by
\begin{equation}\label{def: surrogate}
    p_{t,i} = \min \qty(\frac{1}{(1+\eta_t \hat{\uL}_{t,i})^{\alpha}}, \frac{1}{\sigma_{t,i}}), \text{ and }
    q_{t,i}= \min \qty(\frac{1}{(1+\eta_{t-1} \hat{\uL}_{t,i})^{\alpha}}, \frac{1}{\sigma_{t,i}}).
\end{equation}
When $\eta_t$ is non-increasing, $q_{t,i}\leq p_{t,i}$ always hold by definition, and one can easily show $w_{t,i}\leq p_{t,i}$, whose proof is given in Appendix~\ref{app: ineq} for completeness.
Therefore, $p_{t,i}$ can be seen as a surrogate probability function.
In particular, Lemma~\ref{lem: stab and pen} in Appendix~\ref{app: pt discuss} shows that the stability and penalty terms can be bounded from above in terms of $p_{t,i}$.
This observation naturally motivates the use of $p_{t,i}$ instead of $w_{t,i}$, where a similar approach was explored in decoupled bandits~\citep{kim2025follow}.

While this approach is analytically reasonable, directly using $p_t$ to design SPM learning rates as in (\ref{eq: SPM for FTRL}) unfortunately leads to technical difficulties.
As discussed in Section~\ref{sec: FTRL spm}, updating $\beta_{t+1}$ requires the value of the penalty term at round $t+1$, whose definition depends on $p_{t+1,i}$ and therefore implicitly on $\beta_{t+1}$ itself.
Although this circular dependency can, in principle, be resolved by iterative computation, doing so would introduce unpredictable computational overhead, which~loses motivation to use FTPL.
In the FTRL framework, circular dependency is resolved by employing hybrid regularizers, which leverages properties of the Bregman divergence and the exact probability $w_t$.
In contrast, since the surrogate $p_{t,i}$ directly depends on the cumulative loss $\hat{L}_t$, it is not straightforward to derive a clean relation between $p_{t+1,i}$ and $p_{t,i}$.
To circumvent these difficulties, we instead work with the surrogate $q_t$, which relies on the previous learning rate.
While a detailed discussion of the role of $p_t$ and its limitations is given in Appendix~\ref{app: pt discuss}, the following statement supports the use of $q_{t}$ instead of $p_t$. 
\begin{lemma}[Informal]\label{lem: pt and qt}
    For all $i\in [K]$, $q_{t,i}\leq p_{t,i}\leq 2q_{t,i}$ for certain learning rates $\eta_t$. 
\end{lemma}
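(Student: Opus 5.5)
The lower bound $q_{t,i}\le p_{t,i}$ follows from monotonicity alone. Since $\eta_t\le\eta_{t-1}$ (equivalently $\beta_{t}\ge\beta_{t-1}$), we have $(1+\eta_t\hat{\uL}_{t,i})^{-\alpha}\ge(1+\eta_{t-1}\hat{\uL}_{t,i})^{-\alpha}$. Both surrogates share the second argument $1/\sigma_{t,i}$ of the minimum, so the inequality carries over to the minimum.

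For the upper bound, the key observation is that the ratio of the first arguments can be controlled by the ratio of learning rates. For any $x\ge0$ and $0<\eta_t\le\eta_{t-1}$,
\begin{equation*}
\frac{1+\eta_{t-1}x}{1+\eta_t x}\le\frac{\eta_{t-1}}{\eta_t}=\frac{\beta_t}{\beta_{t-1}} .
\end{equation*}
This holds because the left side is increasing in $x$ and tends to $\eta_{t-1}/\eta_t$ as $x\to\infty$. Hence $(1+\eta_t\hat{\uL}_{t,i})^{-\alpha}\le(\beta_t/\beta_{t-1})^{\alpha}(1+\eta_{t-1}\hat{\uL}_{t,i})^{-\alpha}$.

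Now combine with the minimum. Write $a=(1+\eta_{t-1}\hat{\uL}_{t,i})^{-\alpha}$, $b=1/\sigma_{t,i}$, and $c=(\beta_t/\beta_{t-1})^\alpha\ge1$. Then
\begin{equation*}
p_{t,i}\le\min(ca,b)\le c\min(a,b)=c\,q_{t,i}.
\end{equation*}
So it suffices that the learning rate satisfies $\beta_t\le2^{1/\alpha}\beta_{t-1}$ for every $t$. This is the precise meaning I would give to ``certain learning rates'': the result becomes a statement about the growth of the SPM rule (\ref{def: lr in MAB}).

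The main step is therefore verifying that the SPM update increases $\beta$ by at most a constant factor per round. The update has the form $\beta_{t+1}=\beta_t+z_t/(\beta_t h_t)$, built from $q_t$. I would bound $z_t/h_t$ uniformly, since both are sums of powers of $q_{t,i}\in(0,1]$ with $q_{t,j_t}=1$, so that $h_t\ge1$. This gives an increment of order $\gO(K^{c'}/\beta_t)$ or so. The ratio $\beta_{t+1}/\beta_t\le1+\gO(\cdot)/\beta_t^2$ is then at most $2^{1/\alpha}$ provided $\beta_1$ is chosen large enough, for example $\beta_1^2\gtrsim$ the uniform bound on $z_t/h_t$ divided by $(2^{1/\alpha}-1)$.

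If the actual rule in (\ref{def: lr in MAB}) already includes a clipping or a $\max$ that enforces $\beta_{t+1}\le 2^{1/\alpha}\beta_t$, this step is immediate. Otherwise, the obstacle is bounding $z_t/h_t$ independently of $t$. I would try using $\sum_i q_{t,i}^{1-1/\alpha}\le\sum_{k\le K}k^{-(1-1/\alpha)}$, which follows from the rank constraint $q_{t,i}\le1/\sigma_{t,i}$, together with $h_t\ge1$.

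Finally, one needs $\eta_t$ to be non-increasing, so that $c\ge1$ and the lower bound holds. This is automatic because every increment $z_t/(\beta_t h_t)$ is nonnegative.
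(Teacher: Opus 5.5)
Your reduction is the same as the paper's proof and is correct. Monotonicity of $\eta_t$ gives $q_{t,i}\le p_{t,i}$. The first arguments of the two minima differ by a factor at most $(\beta_t/\beta_{t-1})^\alpha$. And $\min(ca,b)\le c\min(a,b)$ for $c\ge1$, a step the paper leaves implicit. For $\alpha\ge2$ the rule (\ref{def: lr in MAB}) contains the clipping $\min(2^{1/\alpha}\beta_t,\cdot)$, so your hedge applies and that case is done. The gap is the case $\alpha\in(1,2)$, which is where Lemma~\ref{lem: formal pt and qt} has real content. There your verification of the growth condition fails on two counts.

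First, $h_t\ge1$ is false. Both $z_t$ and $h_t$ in (\ref{def: hz in MAB}) sum only over $i\ne j_{t+1}$, so the entry $q_{t+1,j_{t+1}}=1$ never enters, and $h_t$ can be arbitrarily small. What you need instead is a termwise comparison. Each summed $q_{t+1,i}\le 1/\sigma_{t+1,i}\le 1/2$, and for $\alpha\le2$ we have $1/\alpha\ge 1-1/\alpha$. Hence $q^{1/\alpha}\le q^{1-1/\alpha}$ and $z_t/h_t\le\alpha-1<1$, with no $K$-dependence. For $\alpha>2$ this comparison reverses: $z_t/h_t$ is unbounded as the non-leading $q$'s shrink. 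So your fallback of a uniform bound on $z_t/h_t$ cannot work there, and the clipping is necessary rather than convenient.

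Second, for $\alpha\in(1,2)$ the rule is $\beta_{t+1}=\beta_t+\max\bigl(z_t/(\beta_t h_t),\,\tfrac{4}{(2^{1/\alpha}-1)t}\bigr)$, and you drop the second branch. It contributes $\tfrac{4}{(2^{1/\alpha}-1)t\beta_t}$ to $\beta_{t+1}/\beta_t$. This is first order in $1/\beta_t$, not $\gO(1/\beta_t^2)$, and your condition on $\beta_1^2$ does not control it. Indeed, with $\beta_1$ at its admissible minimum $2\alpha K^{1/2-1/\alpha}<4$, you get $\beta_2/\beta_1>1+4/\beta_1>2\ge2^{1/\alpha}$, so the growth condition already fails at the first update.

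The paper resolves this with Lemma~\ref{lem: lr lb in MAB}, namely $\beta_t\ge 4\log t/(2^{1/\alpha}-1)$. Combined with $z_t/h_t\le1$, this gives $\beta_{t+1}/\beta_t\le 1+\tfrac{1}{16\log^2 t}+\tfrac{1}{t\log t}\le 1.36$ for $t\ge3$, and hence $(\beta_{t+1}/\beta_t)^\alpha<1.36^2<2$. This is exactly why the formal lemma is restricted to $t\ge3$ when $\alpha\in(1,2)$. Taking $\beta_1\ge 4/(2^{1/\alpha}-1)^2$ would restore the bound for all $t$, but that proves it only for that restricted choice of $\beta_1$, not the stated range.
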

Hence, in this paper, $q_t$ will define the actual policy while $p_t$ still serves as an analytical tool.

\subsection{Learning rates with the surrogate probability for stability-penalty matching}
The expected regret of FTPL with CGR II-biased can be decomposed into two terms, one for the regret by the policy itself and the other from the bias of the estimator~\citep[Lemma 7]{chen2025geometric} 
\begin{align*}
    \Reg(T) \leq \sum_{t=1}^T \E\qty[\inp{\hat{\ell}_t}{w_t - e_{i^*}}] + \Reg_{\text{CGR}}(T) := \Reg_{\text{FTPL}}(T)+ \Reg_{\text{CGR}}(T),
\end{align*}
where Appendix~\ref{app: CGR bias} provides the explicit form of $\Reg_{\text{CGR}}(T)$ and shows that it is bounded by $\log T$.
The first term is the main regret term, which can be decomposed as~\citep[Lemma~4]{kim2025follow}: 
\begin{align}\label{eq: Reg FTPL}
   \Reg_{\text{FTPL}}(T) \lesssim \underbrace{\sum_{t=1}^T \E\qty[ \inp{\hat{\ell}_t}{\phi(\eta_t \hat{L}_t) -\phi(\eta_t \hat{L}_{t+1}) }]}_{\text{stability term}} +\underbrace{ \sum_{t=1}^T \qty(\beta_{t+1} - \beta_t)\E\qty[r_{t+1,i_{t+1}}- r_{t+1,i^*}]}_{\text{penalty term}}. 
\end{align}
To design SPM learning rates following the idea as in (\ref{eq: SPM for FTRL}), it is crucial to identify appropriate $z_t$ and $h_t$, which are closely related to the bounds on the stability and penalty terms, respectively.
To this end, we define $z_t$ and $h_t$ by
\begin{equation}\label{def: hz in MAB}
    z_t = \alpha \sum_{i\ne j_{t+1}} q_{t+1,i}^{1/\alpha}, \text{ and } h_t = \frac{\alpha}{\alpha-1}\sum_{i\ne j_{t+1}} q_{t+1,i}^{1-\frac{1}{\alpha}}. 
\end{equation}
Recall that, by construction, $q_{t+1}$ depends only on the \emph{previous} learning rate $\eta_t$ and $\hat{L}_{t+1} =\hat{L}_t + \hat{\ell}_t$, both of which are available at the end of round $t$.
Hence, we index $z_t$ and $h_t$ by $t$ to align with the notation used for FTRL as in (\ref{eq: SPM for FTRL}).
To clarify the motivation behind these definitions, we compare them with the case of FTRL with $\gamma$-Tsallis entropy in multi-armed bandits.
In this setting, \citet{pmlr-v247-ito24a} defined
\begin{equation}\label{def: hz of FTRL}
  z_t^{\text{FTRL}} = \frac{1}{1-\gamma} \sum_{i=1}^K \tilde{w}_{t,i}^{1-\gamma} ,\text{ and }h_t^{\text{FTRL}} = \frac{1}{\gamma} \qty(\sum_{i=1}^K w_{t,i}^{\gamma}-1),
\end{equation}
where $\tilde{w}_{t,i} =\min(w_{t,i}, 1-w_{t,j_t})$ and $j_t$ denotes the current best arm at round $t$.
These quantities are constructed so that the regret bound can be expressed as in (\ref{eq: Reg FTRL}), with $z_t^{\text{FTRL}}$ and $h_{t+1}^{\text{FTRL}}$.
Moreover, \citet{pmlr-v247-ito24a} introduced hybrid regularizers to ensure that the arm-selection probability changes smoothly, i.e., $w_{t+1,i}=\gO(w_{t,i})$.
Therefore, the correspondence between $\gamma$-Tsallis entropy and Fr\'echet-type perturbation with shape $1/(1-\gamma)$ suggests that the analogous quantities would be of the formulation $\alpha w_{t,i}^{1/\alpha}$ or $\alpha w_{t+1,i}^{1/\alpha}$ when defining the FTPL counterpart of $z_t^{\text{FTRL}}$.
Here, Lemma~\ref{lem: pt and qt} implicitly reveals that our constructions of $z_t$ and $h_t$ in (\ref{def: hz in MAB}) satisfy the latter expression, $w_{t+1,i}$, through the dependence of $q_{t+1}$ on $\hat{L}_{t+1}$ and its relation to $p_{t+1}$.

With these definitions in (\ref{def: surrogate}) and (\ref{def: hz in MAB}), the FTPL regret in (\ref{eq: Reg FTPL}) can be roughly rewritten for any learning rate $\beta_t$ as
\begin{equation*}
    \Reg_\text{FTPL}(T) \lesssim \gO\qty(\sum_{t=1}^T \frac{z_t}{\beta_{t}} + (\beta_{t+1}-\beta_t) h_t  ),
\end{equation*}
which is analogous to those of FTRL in (\ref{eq: Reg FTRL}), with the main difference on \underline{the appearance of $h_t$} instead of $h_{t+1}$.
This shift is due to the use of the surrogate $q_{t+1,i}$ in $h_t$, which depends on the previous learning rate $\eta_{t}$ and ensures both $z_t$ and $h_t$ are measurable at the end of round $t$, thereby avoiding issues realted to $h_{t+1}$.
Based on this decomposition, we design the following adaptive learning rates to equalize the order of the stability and penalty terms, consistent with the idea of SPM:
\begin{equation}\label{def: lr in MAB} 
        \beta_{t+1} = \begin{cases}
        \min(2^{\frac{1}{\alpha}}\beta_t, \beta_t + \frac{z_t}{\beta_t h_t}) , & \text{if } \alpha \geq 2,\\
        \beta_t + \max\qty(\frac{z_t}{\beta_t h_t}, \frac{4}{2^{1/\alpha}-1}\frac{1}{t}) , & \text{if } \alpha \in (1,2),
    \end{cases}
    \text{ and } \beta_1 \geq 2\alpha K^{\frac{1}{2}-\frac{1}{\alpha}}.\\
\end{equation}
While FTRL approaches employed additional regularizer such as log-barrier~\citep{jin2023improved} or complement Tsallis entropy~\citep{pmlr-v247-ito24a} based on the parameter of Tsallis entropy to obtain generalized BOBW guarantees, we instead incorporate an additional term into the update of learning rates.
For $\alpha \in (1,2)$, additional $\Theta(1/t)$ term is introduced to ensure $\beta_t=\Omega(\log t)$, which is required for our analysis to address extreme cases where the resampling procedure is repeated excessively even when $w_{t,i_t}$ is sufficiently large.
Combined with the budget of resampling steps $G_t=2\log t$ in CGR II with $\gE_{t,\alpha}$ in (\ref{def: event for decomposition for small alpha}), this choice simplifies the analysis.
For $\alpha \geq 2$, we restrict the learning rate updates so that $\beta_t$ cannot increase too rapidly between rounds, which guarantees Lemma~\ref{lem: pt and qt}.

In particular, with these definitions, we obtain the following results, which provide the same structure to those for FTRL with SPM learning rates in (\ref{eq: SPM for FTRL}).
\begin{lemma}\label{lem: regret decomposition all}
With $\beta_t$ in (\ref{def: lr in MAB}) and $h_t, z_t$ defined in (\ref{def: hz in MAB}), Algorithm~\ref{alg: FTPL MAB} with $\alpha>1$ satisfies
    \begin{align*}
    \Reg_{\textnormal{FTPL}}(T) \leq \begin{cases}
     \sum_{t=1}^{T-1}  \gO\qty(\frac{z_t}{\beta_t})+ t_0(\alpha,K) + \gO\qty(\frac{\alpha^3}{(\alpha-1)^2}\sqrt{K}), &\text{if } \alpha \geq 2 ,\\
        \sum_{t=1}^{T-1} \gO\qty(\frac{z_t}{\beta_t})+ \gO\big(\frac{\alpha^2K^{1/\alpha}}{(\alpha-1)} \log T\big)+ \gO\qty(\frac{\alpha^3}{(\alpha-1)^2}\sqrt{K})+2 , &\text{if } \alpha \in (1,2),
    \end{cases}
\end{align*}
where $t_0(\alpha,K) = \gO(\alpha^2 K^2 \log^2(\alpha K))$.
\end{lemma}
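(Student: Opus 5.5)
We start from the decomposition (\ref{eq: Reg FTPL}), $\Reg_{\text{FTPL}}(T)\lesssim$ stability $+$ penalty, and bound each term separately by quantities built from $p_t$. We then use Lemma~\ref{lem: pt and qt} to replace $p$ by $q$, which yields the $z_t$ and $h_t$ of (\ref{def: hz in MAB}).

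\textbf{Penalty term.} We show $\E[r_{t+1,i_{t+1}}-r_{t+1,i^*}\mid\hat L_{t+1}]\lesssim \frac{\alpha}{\alpha-1}\sum_{i\ne j_{t+1}} p_{t+1,i}^{1-1/\alpha}$. The proof writes the expected maximal perturbation as an integral of $1-\prod_j F(\cdot)$ restricted to arms other than the leader, which is the Pareto tail computation of Lemma~\ref{lem: stab and pen}. By Lemma~\ref{lem: pt and qt}, $p_{t+1,i}\le 2q_{t+1,i}$. Since $1-1/\alpha<1$, this gives a penalty contribution of order $(\beta_{t+1}-\beta_t)h_t$; the index is $h_t$ and not $h_{t+1}$ because $q_{t+1}$ uses $\eta_t$. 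Lemma~\ref{lem: pt and qt} needs $\beta_{t+1}\le 2^{1/\alpha}\beta_t$:
\begin{itemize}[leftmargin=*]
\item For $\alpha\ge2$ the $\min$ in (\ref{def: lr in MAB}) enforces this directly.
\item For $\alpha\in(1,2)$ it follows once $\beta_t$ is large, because $z_t/(\beta_t h_t)\lesssim K^{1/\alpha}/\beta_t$ and the $\Theta(1/t)$ floor is small relative to $\beta_t=\Omega(\log t)$.
\end{itemize}
When the update is not capped, $(\beta_{t+1}-\beta_t)h_t=z_t/\beta_t$, which is the SPM identity. When it is capped ($\alpha\ge2$), $\beta_{t+1}-\beta_t=(2^{1/\alpha}-1)\beta_t\le z_t/(\beta_t h_t)$, so the same bound holds. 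For $\alpha\in(1,2)$, the extra $1/t$ branch costs $\sum_t h_t/t\lesssim \frac{\alpha}{\alpha-1}K^{1/\alpha}\log T$, which together with constants gives the $\gO(\frac{\alpha^2K^{1/\alpha}}{\alpha-1}\log T)$ term.

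\textbf{Stability term.} Following the FTPL BOBW analyses, we bound $\E[\langle\hat\ell_t,\phi(\eta_t\hat L_t)-\phi(\eta_t\hat L_{t+1})\rangle]$ by $\eta_t\sum_{i\ne j_t}\alpha\, p_{t,i}^{1/\alpha}$ up to constants, using Lemma~\ref{lem: stab and pen}. We then need to pass from round-$t$ surrogates indexed at $j_t$ to $q_{t+1}$ indexed at $j_{t+1}$, so that the bound becomes $z_t/\beta_t$. We split on the event $\gE_{t,\alpha}$:
\begin{itemize}[leftmargin=*]
\item On $\gE_{t,\alpha}$ the leader has probability at least $1/2$. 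The term from $j_t$ is controlled by $\sum_{i\ne j_t}$ via $1-w_{t,j_t}\lesssim\sum_{i\ne j_t}p_{t,i}$. Here the reduced budget $G_t=2\log t$ keeps $\hat\ell_{t,j_t}\lesssim \log t$, and with $\beta_t=\Omega(\log t)$ this makes $\eta_t\hat\ell_t=\gO(1)$, so $p_{t+1}\asymp p_t$ and the leader does not change in a harmful way.
\item Off $\gE_{t,\alpha}$ we have $\sum_{i\ne j_t}p_{t,i}\ge 1/2$, so replacing $j_t$ by $j_{t+1}$ in the sum changes it only by a constant factor.
\end{itemize}
Relating $p_{t,i}$ to $q_{t+1,i}$ uses monotonicity of $\hat L$ (losses only increase) together with Lemma~\ref{lem: pt and qt}. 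The additive $\gO(\frac{\alpha^3}{(\alpha-1)^2}\sqrt K)$ comes from the initial rounds, where $\beta_1\ge2\alpha K^{1/2-1/\alpha}$ bounds the first terms and the last-round penalty $\beta_T h$.

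\textbf{Main obstacle and early rounds.} The main obstacle is the case $\alpha\ge2$, where the cap may bind for many rounds and Lemma~\ref{lem: pt and qt} only applies after $\beta_t$ has grown. My plan is to show that the cap binds only while $\beta_t\lesssim \alpha K\log(\alpha K)$. Since $\beta$ grows geometrically when capped, this takes $\gO(\log)$ rounds, but the crude bound on regret before the comparison becomes valid is about $t_0=\gO(\alpha^2K^2\log^2(\alpha K))$, which is where $t_0(\alpha,K)$ comes from. I would first try bounding the regret of those rounds trivially by their number. For $\alpha\in(1,2)$ the corresponding difficulty is heavy-tailed resampling, which is handled by the event $\gE_{t,\alpha}$ and the CGR budget; the residual failure probability contributes the constant $2$.
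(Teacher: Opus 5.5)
Your penalty bound and your use of the SPM update to turn $(\beta_{t+1}-\beta_t)h_t$ into $z_t/\beta_t$ (plus the $1/t$ branch for $\alpha\in(1,2)$) are right, but the stability step has a genuine gap. You bound the round-$t$ stability, which is controlled by $p_t$ (built from $\hat L_t$), by $z_t$, which is built from $q_{t+1}$ (built from $\hat L_{t+1}$). Lemma~\ref{lem: pt and qt} does not give this. It compares $p_{t+1}$ and $q_{t+1}$, which share the same $\hat L_{t+1}$ and differ only through $\eta_{t+1}$ versus $\eta_t$. The comparison you need is $p_{t,i}\lesssim q_{t+1,i}$ across one loss update at fixed $\eta_t$, and it fails pointwise. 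Monotonicity of $\hat L$ points the wrong way here. For the played arm $i_t\ne j_t$, $\hat{\uL}_{t+1,i_t}\ge\hat{\uL}_{t,i_t}$ and its rank can only worsen. CGR also gives $\hat\ell_{t,i_t}\ge M_t\ell_{t,i_t}(1+\eta_t\hat{\uL}_{t,i_t})^{\alpha}$, so $q_{t+1,i_t}/p_{t,i_t}$ can be arbitrarily small. The reduced budget on $\gE_{t,\alpha}$ caps only the leader's estimate, and it does not exist for $\alpha\ge2$, so it cannot give $p_{t+1}\asymp p_t$. An in-expectation version would be a separate argument you have not supplied, and a delicate one once leader changes are included.

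The paper avoids any cross-round comparison by shifting the index. Each round's stability is first bounded within that same round by $\gO(\alpha\sum_{i\ne j_s}p_{s,i}^{1/\alpha}/\beta_s)$. This uses Lemma~\ref{lem: stab and pen} for $i\ne j_s$ and Lemma~\ref{lem: jt decomposition MAB} for the leader; the latter is the only place $\gE_{s,\alpha}$ and $G_s$ enter. The round-$(t+1)$ term is then at most $\alpha\sum_{i\ne j_{t+1}}p_{t+1,i}^{1/\alpha}/\beta_t\le 2^{1/\alpha}z_t/\beta_t$, using $\beta_{t+1}\ge\beta_t$ and $p_{t+1,i}\le 2q_{t+1,i}$. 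Round $1$ costs only $\alpha\sum_{n=2}^K n^{-1/\alpha}/\beta_1\le\frac{\alpha^2}{\alpha-1}K^{1-1/\alpha}/\beta_1$, which is why the sum in the statement stops at $T-1$.

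Your accounting of the additive terms also misplaces their sources.
\begin{itemize}
\item For $\alpha\ge2$ the cap makes Lemma~\ref{lem: pt and qt} hold for every $t$, so nothing has to wait for the cap to stop binding. For $\alpha>2$ it need not stop anyway, since $z_t/h_t$ is unbounded as all $q_{t+1,i}\to0$.
\item The term $t_0(\alpha,K)$ comes from the leader's stability on $\gE_{t,\alpha}$. There $\hat\ell_{t,j_t}$ can be as large as $G_t=K\log t$, while the argument adapted from \citet{lee2024follow} needs $\hat\ell_{t,j_t}\le\zeta_\alpha\beta_t$ with $\zeta_\alpha=1-2^{-1/(\alpha+1)}$. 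With $\beta_t\ge\sqrt{\beta_1^2+2(t-1)}$ (Lemma~\ref{lem: lr lb in MAB}) this holds for $t\ge t_0$, and earlier rounds are paid trivially.
\item For $\alpha\in(1,2)$, your bound $z_t/(\beta_t h_t)\lesssim K^{1/\alpha}/\beta_t$ would give $\beta_{t+1}\le 2^{1/\alpha}\beta_t$ only once $\beta_t^2\gtrsim K^{1/\alpha}$. With $\beta_1=\Theta(\alpha K^{1/2-1/\alpha})$ and $\beta_t$ growing like $\log t$, that means $\exp(\Theta(K^{1/(2\alpha)}))$ rounds. The paper instead uses $z_t/h_t\le\alpha-1<1$, because $q_{t+1,i}\le 1/2$ for $i\ne j_{t+1}$ and $1/\alpha\ge 1-1/\alpha$. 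This gives $\beta_{t+1}/\beta_t\le 1.36$ for $t\ge3$.
\item The ``$+2$'' is the trivial regret of the first rounds, not a resampling failure probability. On $\gE_{t,\alpha}\cap\{i_t=j_t\}$ the bound $\hat\ell_{t,j_t}\le 2\log t\le\zeta_\alpha\beta_t$ (here $\zeta_\alpha=(2^{1/\alpha}-1)/2$) holds deterministically.
\item The $\sqrt K$ term is round-$1$ stability plus the initial term $\beta_1(\frac{\alpha}{\alpha-1})^2K^{1/\alpha}$ of the decomposition. The last-round penalty carries the factor $\beta_{T+1}-\beta_T$, which is small, not $\beta_T$.
\end{itemize}
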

Although Lemma~\ref{lem: regret decomposition all} includes $t_0(\alpha, K)$ term for $\alpha \geq 2$, which depends on $\alpha$ and $K$, the dependence on $K$ can be eliminated.
Specifically, when $i_t = j_t$ and $w_{t,i_t}$ is sufficiently large, setting $G_t = \Theta(\log t)$ instead of $K \log t$ can eliminate such dependency.
This condition can be verified in the same way as the case $\alpha \in (1,2)$ by introducing suitable events $\mathcal{E}_{t,\alpha}$ as in (\ref{def: event for decomposition for small alpha}).

Therefore, the main leading term of the regret upper bound becomes the first term $\sum_t z_t/\beta_t$.
By appropriately adapting the arguments in Lemmas~9 and~10 of \citet{pmlr-v247-ito24a}, we obtain the following lemma.
\begin{lemma}\label{lem: spm regret}
    For Algorithm~\ref{alg: FTPL MAB} with $\alpha >1$ and $\beta_t$ defined in (\ref{def: lr in MAB}), it holds that
    \begin{equation*}
         \sum_{t=1}^T \frac{z_t}{\beta_t} \leq \gO\qty(\min\qty{\sqrt{\log T \sum_{t=1}^T h_tz_t }+ \sqrt{h_{\max}z_{\max}}, \sqrt{h_{\max}\sum_{t=1}^T z_t}}) + \gO\qty(\frac{\alpha z_{\max}}{\beta_1}),
    \end{equation*}
    where $z_{\max}$ and $h_{\max}$ satisfy $z_t \leq z_{\max}$ and $h_t \leq h_{\max}$ for all $t\in \sN$.
\end{lemma}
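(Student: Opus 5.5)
We separate the two branches of the update rule (\ref{def: lr in MAB}) and in each one reduce to the standard SPM recursion $\beta_{t+1}=\beta_t+z_t/(\beta_t h_t)$. The first bound inside the minimum will come from the squared-increment argument of \citet[Lemma 9]{pmlr-v247-ito24a}, and the second from the elementary potential argument of \citet[Lemma 10]{pmlr-v247-ito24a}. Throughout, write $u_t := z_t/(\beta_t h_t)$ for the SPM increment.

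\emph{Case $\alpha\in(1,2)$.} Here $\beta_{t+1}-\beta_t\ge u_t$, so $z_t/\beta_t\le h_t(\beta_{t+1}-\beta_t)$.

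For the second bound, note that $\beta_{t+1}^2-\beta_t^2\ge 2\beta_t(\beta_{t+1}-\beta_t)\ge 2z_t/h_t\ge 2z_t/h_{\max}$. Hence $\beta_t\ge\sqrt{2\sum_{s<t}z_s/h_{\max}}$, and the usual summation $\sum_t z_t/\sqrt{\sum_{s<t}z_s+z_{\max}}\lesssim\sqrt{\sum_t z_t}$ gives $\sqrt{h_{\max}\sum_t z_t}$. The first term $t=1$, together with the shift $s<t$ versus $s\le t$, is paid for by $z_{\max}/\beta_1$.

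For the first bound, I use $\beta_{t+1}^2-\beta_t^2\ge 2z_t/h_t$ in the form $z_t/\beta_t=\sqrt{h_tz_t}\cdot\sqrt{z_t/h_t}/\beta_t$. By Cauchy--Schwarz,
$\sum_t z_t/\beta_t\le\sqrt{\sum_t h_tz_t}\,\sqrt{\sum_t z_t/(h_t\beta_t^2)}$.
The second factor satisfies $\sum_t (z_t/h_t)/\beta_t^2\le\sum_t(\beta_{t+1}^2-\beta_t^2)/(2\beta_t^2)$. To turn this into $\log(\beta_{T+1}/\beta_1)$, I need the ratio $\beta_{t+1}/\beta_t$ to be bounded, which holds whenever $u_t\lesssim\beta_t$. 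When instead $u_t>\beta_t$, i.e.\ $z_t>\beta_t^2h_t$, I will bound $z_t/\beta_t\le\sqrt{z_th_t}$ directly. Such large steps at least double $\beta_t^2$, so their number before $\beta_t^2$ exceeds $z_{\max}/h_{\min}$ is logarithmic. The standard way to absorb them, following Ito, is to write $\sum_t z_t/\beta_t\lesssim\sqrt{h_{\max}z_{\max}}+\dots$. Finally, $\log\beta_{T+1}=\gO(\log T)$ because $\beta_t$ grows at most polynomially: $u_t\le z_{\max}/(\beta_1h_{\min})$ with $h_t\gtrsim z_t$-type lower bounds, and the forced $1/t$ increment adds only $\gO(\log T)$. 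The extra $\max$ with $\frac{4}{2^{1/\alpha}-1}\frac1t$ only enlarges $\beta$, which helps every bound above because each step only used $\beta_{t+1}-\beta_t\ge u_t$ and upper bounds on $\log\beta_{T+1}$.

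\emph{Case $\alpha\ge2$.} The cap $\beta_{t+1}\le2^{1/\alpha}\beta_t$ breaks the inequality $\beta_{t+1}-\beta_t\ge u_t$ only when $u_t>(2^{1/\alpha}-1)\beta_t$, i.e.\ $z_t/\beta_t>(2^{1/\alpha}-1)\beta_th_t$. In those rounds $\beta$ is multiplied by $2^{1/\alpha}$. I split rounds into capped and uncapped ones.

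Uncapped rounds are handled exactly as above; the ratio $\beta_{t+1}/\beta_t\le 2^{1/\alpha}$ is now automatically bounded, which makes the logarithmic telescoping clean.

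For capped rounds, I use $z_t/\beta_t\le z_{\max}/\beta_t$ with $\beta_t$ growing geometrically along these rounds, at least by the factor $2^{1/\alpha}$ each. This gives a geometric series bounded by $z_{\max}/((1-2^{-1/\alpha})\beta_1)=\gO(\alpha z_{\max}/\beta_1)$, which explains the factor $\alpha$ in the statement.

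\emph{Main obstacle.} The hardest part is the $\sqrt{\log T\sum h_tz_t}$ bound: both controlling $\log(\beta_{T+1}/\beta_1)=\gO(\log T)$ and handling the rounds with large relative increments. I would first try the simple approach: bound $\beta_T\le\beta_1+\sum_t u_t\le\beta_1+Tz_{\max}/(\beta_1h_{\min})$, using that $h_t$ is bounded below whenever $z_t>0$, since $q^{1-1/\alpha}\ge q^{1/\alpha}\cdot q^{1-2/\alpha}$ for $q\le1$. This yields $\log\beta_{T+1}=\gO(\log T)$ up to constants in $\alpha,K$ absorbed into $\log T$.
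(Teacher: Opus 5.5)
Your overall structure is sound: split off rounds where $\beta$ jumps by a constant factor, then run an SPM summation on the rest. But two steps fail as written.

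\textbf{First gap: the large-increment rounds for $\alpha\in(1,2)$.} Your inequality there points the wrong way. If $u_t>\beta_t$, then $\beta_t^2<z_t/h_t$, hence $z_t/\beta_t>\sqrt{z_th_t}$, not $\le$. What is true is $z_t/\beta_{t+1}\le\sqrt{z_th_t}$. That does not help, because $\beta_{t+1}/\beta_t$ is unbounded in exactly these rounds. Counting these rounds is not enough either, and it again uses an uncontrolled $h_{\min}$. You need the geometric decay of $z_{\max}/\beta_t$ along them, i.e.\ the same argument you already use for the capped rounds when $\alpha\ge 2$. The paper does this with $\gT=\{t:\beta'_{t+1}\ge\sqrt2\beta'_t\}$ for the auxiliary sequence $\beta'_t=\sqrt{\beta_1^2+2\sum_{s<t}z_s/h_s}\le\beta_t$. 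This gives $\sum_{t\in\gT}z_t/\beta_t\le(2+\sqrt2)z_{\max}/\beta_1$.

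\textbf{Second, more serious gap: controlling $\log(\beta_{T+1}/\beta_1)$.} Your $\sqrt{\log T\sum_t h_tz_t}$ bound rests on $\log(\beta_{T+1}/\beta_1)=\gO(\log T)$, which you do not establish for $\alpha\ge2$.
\begin{itemize}
\item There $h_t$ has no positive lower bound, since the $q_{t+1,i}$ can be arbitrarily small.
\item Your relation $q^{1-1/\alpha}\ge q^{1/\alpha}q^{1-2/\alpha}$ is an identity. For $\alpha\ge2$ it only yields $q^{1-1/\alpha}\le q^{1/\alpha}$, i.e.\ $h_t\le z_t/(\alpha-1)$, which is the wrong direction.
\item So $z_t/h_t$ is unbounded, and the update rule by itself only guarantees $\beta_{T+1}\le 2^{T/\alpha}\beta_1$.
\item Your Cauchy--Schwarz plus log-telescoping step then yields only about $\sqrt{(T/\alpha)\sum_t h_tz_t}$.
\end{itemize}
For $\alpha\in(1,2)$ the claim is fine, since $z_t/h_t\le\alpha-1$.

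\textbf{How the paper avoids this.} The paper never controls $\beta_{T+1}$. Off $\gT$ it uses $\beta_t\ge\beta'_t\ge\beta'_{t+1}/\sqrt2$. For $\alpha\ge2$ it uses instead $\beta_t\ge 2^{-1/\alpha}\beta_{t+1}$ together with $\beta_{t+1}^2\ge\beta_1^2+2\sum_{s\in\gT^c\cap[t]}z_s/h_s$. Either way this reduces to $\sum_t z_t/\sqrt{\sum_{s\le t}z_s/h_s}$, and then Lemma~\ref{lem: ito 10} applies. That lemma holds for arbitrary nonnegative sequences. Its $\log T$ comes from grouping rounds dyadically by $X_t=\sum_{s\le t}z_s/h_s$ inside the window $[z_{\max}/h_{\max},\,T^2z_{\max}/h_{\max})$. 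Rounds below or above this window cost only $\gO(\sqrt{h_{\max}z_{\max}})$.

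\textbf{How to repair your route.} Truncate instead of telescoping all the way. Rounds with $\beta_t\ge T\beta_1$ each contribute at most $z_{\max}/(T\beta_1)$, so at most $z_{\max}/\beta_1$ in total. Telescope $\log\beta$ only over the remaining bounded-ratio rounds, all of which have $\beta_t<T\beta_1$. This gives the factor $\log(\gO(T))$ you need.
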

Therefore, in the adversarial regime, it is sufficient to provide the bound of $h_{\max}\sum_{t}z_t$.
For the adversarial regime with $(\Delta, C, T)$ self-bounding constraint, the key step is to show $h_tz_t \leq \omega(\Delta)\cdot \inp{w_{t+1}}{\Delta}$ for some constants $\omega(\Delta)$, although we need to control additional terms due to the use of surrogate probabilities.
\begin{theorem}\label{thm: bobw mab}
    For the $K$-armed bandit problem, Algorithm~\ref{alg: FTPL MAB} with $\beta_t$ in (\ref{def: lr in MAB}), and any $\alpha >1$ achieves the following bounds simultaneously.
    In the adversarial regime, we have
    \begin{align*}
        \Reg(T) \leq \gO\qty(\frac{\alpha^2}{\alpha-1}\sqrt{KT}).
    \end{align*}
    In the adversarial regime with $(\Delta, C, T)$ self-bounding constraint, we have
    \begin{align*}
       \Reg(T) &\leq \gO\qty(\omega(\Delta)\log T + \sqrt{C\omega(\Delta)\log T}+ c(\alpha,K)),
    \end{align*}
    where $\omega(\Delta) = \gO\qty(\frac{\alpha^4}{(\alpha-1)^2} \frac{K}{\Delta_{\min}})$ and $c(\alpha,K)$ denotes the constant depending on $\alpha, K$.
\end{theorem}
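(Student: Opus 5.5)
The plan is to combine the decomposition $\Reg(T)\le \Reg_{\text{FTPL}}(T)+\Reg_{\text{CGR}}(T)$ with Lemma~\ref{lem: regret decomposition all} and Lemma~\ref{lem: spm regret}. The remaining work is to bound $z_t$, $h_t$ and the product $h_tz_t$ in each regime. Throughout, $\Reg_{\text{CGR}}(T)=\gO(\log T)$ by Appendix~\ref{app: CGR bias}. The additive terms $t_0(\alpha,K)$, $\gO(\frac{\alpha^3}{(\alpha-1)^2}\sqrt{K})$ and $\gO(\frac{\alpha^2 K^{1/\alpha}}{\alpha-1}\log T)$ are lower order in the adversarial regime, or absorbed into $c(\alpha,K)$ and $\omega(\Delta)\log T$ in the stochastic one.

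\emph{Adversarial regime.} I will use the second branch of Lemma~\ref{lem: spm regret}, $\sqrt{h_{\max}\sum_t z_t}$. Since $q_{t+1,i}\le 1/\sigma_{t+1,i}$ and the ranks $\sigma_{t+1,i}$ over $i\ne j_{t+1}$ are exactly $2,\dots,K$, we have the deterministic bounds
\begin{equation*}
z_t\le \alpha\sum_{k=2}^K k^{-1/\alpha}\lesssim \frac{\alpha^2}{\alpha-1}K^{1-1/\alpha},\qquad h_t\le \frac{\alpha}{\alpha-1}\sum_{k=2}^K k^{-(1-1/\alpha)}\lesssim \alpha K^{1/\alpha}.
\end{equation*}
Then $h_{\max}\sum_t z_t\lesssim \frac{\alpha^3}{\alpha-1}KT$. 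This gives $\gO(\frac{\alpha^{3/2}}{\sqrt{\alpha-1}}\sqrt{KT})\le\gO(\frac{\alpha^2}{\alpha-1}\sqrt{KT})$. The term $\alpha z_{\max}/\beta_1$ is $\gO(\alpha^2\sqrt K/(\alpha-1))$ because $\beta_1\ge 2\alpha K^{1/2-1/\alpha}$.

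\emph{Self-bounding regime.} I will use the first branch, $\sqrt{\log T\sum_t h_tz_t}+\sqrt{h_{\max}z_{\max}}$. The key step is to show
\begin{equation*}
h_tz_t\lesssim \frac{\alpha^4}{(\alpha-1)^2}\frac{K}{\Delta_{\min}}\sum_{i\ne i^*}\Delta_i\,\E[w_{t+1,i}]
\end{equation*}
up to additive terms summing to a constant. The argument runs as follows.
\begin{itemize}
\item By Cauchy--Schwarz/Hölder, $h_tz_t\le \frac{\alpha^2}{\alpha-1}\,(K-1)\sum_{i\ne j_{t+1}} q_{t+1,i}$, using $q^{1/\alpha}q^{1-1/\alpha}=q$ after bounding $\sum a_i\sum b_i\le (K-1)\sum a_i b_i$ fails in general. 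So I will instead bound each factor by $\sqrt{K}\,(\sum_{i\ne j}q_{t+1,i})^{\theta}$ via Jensen and multiply, obtaining roughly $K\sum_{i\ne j_{t+1}}q_{t+1,i}$ when $\sum_{i\ne j}q\le 1$.
\item When $j_{t+1}=i^*$, I will pass from $q_{t+1,i}$ to $p_{t+1,i}$ using Lemma~\ref{lem: pt and qt}, so $q\le p$. I then need the reverse inequality $p_{t+1,i}\lesssim \mathrm{poly}(\alpha)\,w_{t+1,i}$ for $i\ne j_{t+1}$ on the good event where the leader's probability is large, which is a lower bound on $\phi_i$ for Pareto perturbations. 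On that event, $\sum_{i\ne i^*}q_{t+1,i}\lesssim \sum_{i\ne i^*} w_{t+1,i}\le \Delta_{\min}^{-1}\langle w_{t+1},\Delta\rangle$.
\item When $j_{t+1}\ne i^*$, $q_{t+1,i^*}\le 1$ and $\sum_{i\ne j_{t+1}}q_{t+1,i}\lesssim 1\lesssim 1-w_{t+1,j_{t+1}}$ up to constants, which again is bounded by $\sum_{i\ne i^*}w_{t+1,i}$ times a constant.
\end{itemize}
With this, the standard self-bounding trick finishes the argument:
\begin{equation*}
\Reg(T)\le (1+\lambda)\Reg-\lambda\Reg\le \sqrt{\omega\log T\cdot\Reg'(T)}+c-\lambda(\Reg'(T)-C).
\end{equation*}
Optimizing over $\lambda$ yields $\gO(\omega\log T+\sqrt{C\omega\log T})+c(\alpha,K)$. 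The shift from $w_t$ to $w_{t+1}$ costs at most one extra round.

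The main obstacle is the lower bound $w_{t+1,i}\gtrsim p_{t+1,i}$ for suboptimal arms. This is where the surrogate may be loose. Without such a bound, $\sum_t h_tz_t$ is not controlled by $\Reg'$. I would first prove, for $i\ne j$ with $\sigma_i=k$, that $\phi_i(\lambda)\ge c_\alpha\min\big((1+\ul_i)^{-\alpha},1/k\big)\cdot\prod_{j}F(\ul_j)$-type bounds. I would do this by restricting the integral in (\ref{def: ftpl wt}) to $z\in[0,1+\ul_i]$ and using that all arms ranked above $i$ contribute $F\ge$ constant once $z$ dominates their gaps. A secondary difficulty is rounds where the leader changes. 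There I would use that $j_{t+1}\ne i^*$ forces $\hat L_{t+1,i^*}\ge \hat L_{t+1,j}$, so $w_{t+1,i^*}\le 1/2$, and then $h_tz_t\le h_{\max}z_{\max}\lesssim \sum_{i\ne i^*}w_{t+1,i}$ times $\alpha^3K/(\alpha-1)$.
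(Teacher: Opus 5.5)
Your skeleton matches the paper's: Lemmas~\ref{lem: regret decomposition all} and~\ref{lem: spm regret} plus $\Reg_{\text{CGR}}(T)\le\log T$, deterministic bounds on $z_{\max}$ and $h_{\max}$ in the adversarial regime, and in the self-bounding regime a split into a good event (where the surrogate is dominated by $w$) and its complement (where $\inp{\Delta}{w_{t+1}}\gtrsim\Delta_{\min}$). Your power-mean step $h_tz_t\le\frac{\alpha^2}{\alpha-1}(K-1)\sum_{i\ne j_{t+1}}q_{t+1,i}$ is valid: apply Jensen to each factor, and $\sum q\le 1$ is not needed. It is a legitimate simplification of the paper's $\Delta$-weighted H\"older step. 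It also makes your split on $j_{t+1}=i^*$ unnecessary, since $\sum_{i\ne j_{t+1}}w_{t+1,i}=1-w_{t+1,j_{t+1}}\le\sum_{i\ne i^*}w_{t+1,i}$.

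\textbf{The gap.} The gap is the step you flag as the main obstacle, the lower bound $w\gtrsim p$, and your sketch for it would not work:
\begin{itemize}
\item A factor $\prod_jF(\ul_j)$ is identically zero, because the leader has $\ul_{j_t}=0$ and $F(0)=0$.
\item Restricting to $z\in[0,1+\ul_i]$ still contains the region near $z=0$ where the leader's factor $F(z)$ vanishes.
\item Letting each higher-ranked arm contribute a constant yields only $c^{\sigma_{t,i}-1}$, and it ignores the lower-ranked arms.
\end{itemize}
No argument based on ranks alone can succeed: with all estimates tied, $w_{t,i}=1/K$ but $p_{t,i}=1/\sigma_{t,i}$. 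What you need is a condition that bounds the \emph{entire} product $\prod_{j\ne i}F(z+\eta_t\hat{\uL}_{t,j})$ from below, uniformly for $z\ge 2^{1/\alpha}-1$. You also need a proof that whenever this condition fails, $w_{t,j_t}$ is bounded away from $1$. That second half is also what handles the case $j_{t+1}=i^*$ off the good event, which your bullets never treat.

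\textbf{How to close it.} The paper takes $\gD_{t,\alpha}=\{\sum_{i\ne j_t}(2^{1/\alpha}+\eta_t\hat{\uL}_{t,i})^{-\alpha}\le 1/2\}$. On $z\ge 2^{1/\alpha}-1$ it uses $\prod(1-x_i)\ge 1-\sum x_i$ to get $w_{t,i}\ge\frac18(1+\eta_t\hat{\uL}_{t,i})^{-\alpha}$ (Lemma~\ref{lem: improved q and w}). It then imports Lemma~\ref{lem: lem 11 in kim} to get $w_{t,j_t}\le(1+e^{-1/2})/2$ on $\gD_{t,\alpha}^c$.

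A self-contained fix in your own terms is also available:
\begin{itemize}
\item Let the good event be $\{w_{t,j_t}\ge 3/4\}$, and put $G(z)=\prod_{j\ne j_t}F(z+\eta_t\hat{\uL}_{t,j})$. Then $G$ is nondecreasing and $w_{t,j_t}=\int_0^\infty f(z)G(z)\,dz$.
\item If $G(z_0)<1/2$ at $z_0=2^{1/\alpha}-1$, then $w_{t,j_t}<1-F(z_0)/2=3/4$. So on the good event $G\ge 1/2$ on $[z_0,\infty)$.
\item This gives $w_{t,i}\ge\frac14(2^{1/\alpha}+\eta_t\hat{\uL}_{t,i})^{-\alpha}\ge\frac18(1+\eta_t\hat{\uL}_{t,i})^{-\alpha}\ge\frac18 q_{t,i}$ for $i\ne j_t$.
\item Off the good event, $\sum_{i\ne i^*}w_{t,i}\ge 1-w_{t,j_t}>1/4$.
\end{itemize}

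\textbf{Two smaller slips.} In your third bullet, $1\lesssim 1-w_{t+1,j_{t+1}}$ is false when a wrong leader has probability near one. The correct reason is the one in your last paragraph: $w_{t+1,i^*}\le w_{t+1,j_{t+1}}$ forces $w_{t+1,i^*}\le 1/2$. Also, $h_t\lesssim\alpha K^{1/\alpha}$ fails as $\alpha\downarrow 1$, since tied estimates give $h_t\ge\frac{\alpha}{\alpha-1}\frac{K-1}{K}K^{1/\alpha}$. The correct bound $h_{\max}\le\frac{\alpha^2}{\alpha-1}K^{1/\alpha}$ still yields the stated $\gO(\frac{\alpha^2}{\alpha-1}\sqrt{KT})$.
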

To the best of our knowledge, this is the first result establishing BOBW guarantees for FTPL with Fr\'echet-type perturbations for general $\alpha > 1$.
While our analysis recovers the adversarial results of \citet{lee2024follow}, one limitation is that it does not recover the optimal gap-dependent bound $\omega(\Delta) = \gO(\sum_{i\ne i^*}1/\Delta_i)$ when $\alpha = 2$, in contrast to the results of \citet{pmlr-v247-ito24a}.

This gap is due to the use of surrogate probabilities $p_t, q_t$, which, unlike $w_t$, do not necessarily lie in the probability simplex.
The suboptimal dependence on $\Delta$ comes from the worst case where we cannot provide a lower bound on $w_{t}$ in terms of $p_{t}$ without the dependence on $K$.
A representative example is when $\hat{\uL}_{t,i}=0$ for all $i$, where $w_{t,i}=1/K$ but $p_{t,i}=1/\sigma_{t,i}$ for all $i$.
Although it is possible to recover the optimal $\Delta$ dependence by allowing such $K$-dependent bounds, doing so would introduce additional $K$ term in the current bounds.
Therefore, there is room for improvement both in the choice of surrogate probabilities and in the current BOBW analysis of FTPL.
In particular, it may be possible to avoid the use of overly conservative surrogates.

To compare our results with those obtained under explicit probability computation in the FTRL framework, we recall the known relationship between $\gamma$-Tsallis entropy and Fr\'echet-type perturbations with shape $\alpha$, where the correspondence $\alpha \approx 1 / (1 - \gamma)$ has been observed~\citep{kim2019, lee2025}.
Under this correspondence, the quantity $\omega(\Delta)$ in \citet{pmlr-v247-ito24a} can be seen as a bound of order $\gO(\frac{\alpha^2K}{(\alpha-1)\Delta_{\min}})$, which shows a more favorable dependence on $\alpha$ than our result.
Such factor is again due to the use of surrogate probabilities in (\ref{def: surrogate}), where the worst case summations such as $\sum_{n=2}^K n^{-1/\alpha}$ introduce additional $\alpha$-dependent terms that do not appear when working directly with the probability vectors on the simplex.

From a computational perspective, our approach can be more efficient than FTRL methods, especially those with hybrid regularizers that cannot be reduced to one-dimensional optimizations~\citep{jin2023improved, pmlr-v247-ito24a}.
In particular, Algorithm~\ref{alg: FTPL MAB} follows the standard FTPL with i.i.d.~Pareto perturbations, where the only difference is in the design of learning rates.
The dominant computational complexity is from CGR~II and sorting $\hat{L}_t$, leading to an average complexity of $\gO(K\log K)$.
Indeed, \citet{chen2025geometric} showed that FTPL with CGR~II runs faster than FTRL with Tsallis entropy, even though it can be computed efficiently.
By avoiding both hybrid regularization and convex optimization, and using CGR~II-biased algorithm, our policy remains computationally efficient while preserving the desired regret guarantees in terms of $K$ and $T$ for general $\alpha>1$.


\section{Application of SPM learning rates for FTPL in bandit problems with expert advices}\label{sec: contextual}
\begin{algorithm}[t]
   \caption{FTPL with geometric resampling and SPM learning rates for contextual bandits}
   \label{alg: FTPL contextual}
   \SetAlgoVlined
   \DontPrintSemicolon
   \SetKwInOut{Input}{Input}
   \Input{$K, N \in \sN$, $\alpha \geq 2$, $\beta_1>0$, $\hat{L}_1 = 0$.}
   \For{$t=1,2,\ldots$}{
      Sample $r_t = (r_{t,1}, \ldots, r_{t,K})$ i.i.d. from the Pareto distribution with shape $\alpha$ in (\ref{def: f and F of Lomax}).
      
      Select $i_t \in \argmin_{i\in[K]} \{\hat{L}_{t,i} - \beta_tr_{t,i}\}$ and observe advices $\{\pi_{t,i}\}_{i\in [K]}$ 

      Select $a_t \sim \pi_{t,i_t}$, observe $\ell_{t,i_t}$, and set $M_t := 0$.
      
      \Repeat{$a_t = a'$.}{
         $M_t:= M_t+1$.  \tcp*{Geometric resampling}
         
         Sample $r_t' = (r_{t,1}', \ldots, r_{t,K}')$ i.i.d. from the Pareto distribution with shape $\alpha$ in (\ref{def: f and F of Lomax}).

         Select $i' \in \argmin_{i\in[K]} \{\hat{L}_{t,i} - \beta_t r_{t,i}'\}$ and $a'\sim \pi_{t,i'}$.
      }
     Set $\hat{\ell}_{t,i} =M_t \pi_{t,i,a_t} \ell_{t,a_t}$ for all $i \in [K]$ and update $\hat{L}_{t+1} = \hat{L}_t + \hat{\ell}_t$.
     
     Set $\beta_{t+1}$ by the update rule of (\ref{def: lr in MAB}) based on $z_t, h_t$ in (\ref{def: hz in contextual}) and $q_t$ in (\ref{def: surrogate}). 
   }
\end{algorithm}
In this section, we extend the SPM learning rates for FTPL to the bandits with expert advice, also known as contextual bandit settings, where there are $K$ experts and $N$ arms~\citep{auer2002nonstochastic,dann2023blackbox}. 
At each round, expert $i\in [K]$ provides an advice distribution $\pi_{t,i} \in \gP_N$ over $N$ arms, where $\gP_N$ denotes the $(N-1)$-dimensional probability simplex.
The agent selects an expert $i_t \in [K]$ according to FTPL rule in (\ref{def: ftpl it}), observes the advice from all the expert $\{\pi_{t,i}\}_i$, and then selects $a_t\in [N]$ following the distribution of $\pi_{t,i_t}$. 
Let $\tilde{\ell}_{t,i} = \E[\sum_{a=1}^K\pi_{t,i,a} \ell_{t,a}]$ denote the expected loss of the expert $i$ at round $t$.
Then, the regret with respect to the best experts $i^*$ is given by
\begin{align*}
    \Reg(T) = \E\qty[\sum_{t=1}^T \sum_{a=1}^N \pi_{t,i_t,a}\ell_{t,a}] - \min_{i\in [K]}\E\qty[\sum_{t=1}^T \sum_{a=1}^N \pi_{t,i,a}\ell_{t,a}] =  \E\qty[\sum_{t=1}^T \tilde{\ell}_{t,i_t}] - \E\qty[\sum_{t=1}^T \tilde{\ell}_{t,i^*}].
\end{align*}
Let $w_{t,i}$ be the probability that FTPL rule in (\ref{def: ftpl it}) selects an expert $i$ for given $\hat{L}_t$ in contextual bandits and $P_t \in \gP_N$ denote the marginal distribution over arms given the expert-selection probability $w_t$ and the advice $\{\pi_{t,i}\}_{i}$, i.e., $P_{t,a}=\sum_{i=1}^K w_{t,i} \pi_{t,i,a}$.
Since we observe only the loss from the selected arm $a_t$, one can consider the IW estimator of $\tilde{\ell}_{t,i}$, given by $\ell_{t,a_t}\pi_{t,i,a_t}/ P_{t,a_t}$.
Note that, although we choose only one expert $i_t$, the loss estimators can be constructed for all experts since we observe $\pi_{t,i,a_t}$ from all experts $i$, in contrast to multi-armed bandits.

Since $P_{t,a_t}$ cannot be computed in the general FTPL framework, we consider the original geometric resampling, which samples $i_t'$ with resampled perturbations and $a_t' \sim \pi_{t,i,i_t'}$ until the same $a_t$ is selected.
Then, we construct an unbiased estimator $\hat{\ell}_{t,i} = M_t\ell_{t,a_t}\pi_{t,i,a_t}$, where $M_t$ denotes the number of resampling steps at round $t$.
The policy for contextual bandit is given in Algorithm~\ref{alg: FTPL contextual}.

In general, the same regret decomposition can be obtained as the standard multi-armed bandits given in~(\ref{eq: Reg FTPL}).
The main difference arises in the analysis of the stability term due to the difference in estimator $\hat{\ell}_t$.
Nevertheless, most of the techniques developed in Section~\ref{sec: mab}, as well as those from previous BOBW analysis can extend to contextual bandits with minor modifications.
For the sake of analysis, we impose a mild assumption on the advice distributions $\pi_{t,i,a}$, as follows.
\begin{assumption}\label{asm: on advice}
    For all $t\in \sN$, $i \in [K]$, and $a \in [N]$, $\pi_{t,i,a} \in [\nu, 1] \cup \{0\}$, i.e., the advice probability is uniformly bounded from below by some constants $\nu\in (0,1/N)$ if it is not zero. 
\end{assumption}
Note that Algorithm~\ref{alg: FTPL contextual} does not require the information of $\nu$, which implies that Assumption~\ref{asm: on advice} is used only in the analysis.
For the stability term, we have the following results, which corresponds to Lemma~\ref{lem: stab and pen} for the standard multi-armed bandits in Appendix~\ref{app: pt discuss}.
\begin{lemma}\label{lem: stability in contextual overall}
    For any $t \in \sN$, Algorithm~\ref{alg: FTPL contextual} with $\alpha \geq 2$ satisfies that 
    \begin{equation*}
       \E\qty[\inp{\hat{\ell}_t}{\phi(\eta_t\hat{L}_t) - \phi(\eta_t \hat{L}_{t+1})}\middle | \hat{L}_t] \leq \gO\qty(\frac{\alpha N}{\beta_t}\max_{i \ne j_t}p_{t,i}^{1/\alpha})  + g_t(\alpha;\nu),
    \end{equation*}
    where $p_t$ in (\ref{def: surrogate}) and $g_t(\alpha)$ is a function satisfying $\sum_t g_t(\alpha;\nu) = \gO(\alpha^2/\nu)$ if Assumption~\ref{asm: on advice} holds.
\end{lemma}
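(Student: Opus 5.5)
We bound the conditional stability term directly. Conditioned on $\hat{L}_t$ and on the advice, $\hat{\ell}_t = M_t \ell_{t,a_t}\pi_{t,\cdot,a_t}$, and geometric resampling gives $M_t \mid a_t \sim \mathrm{Geom}(P_{t,a_t})$, so $\E[M_t^2\mid a_t]\le 2/P_{t,a_t}^2$. We do not split by which expert was drawn. Instead we condition on the realized arm $a_t=a$, which has probability $P_{t,a}$. Given $a$, the update $\hat{L}_{t+1}-\hat{L}_t = M_t\ell_{t,a}\pi_{t,\cdot,a}$ is nonnegative and proportional to the fixed vector $\pi_{t,\cdot,a}\in[0,1]^K$. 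Hence
\[
\inp{\hat{\ell}_t}{\phi(\eta_t\hat{L}_t)-\phi(\eta_t\hat{L}_{t+1})} = \int_0^{1} \inp{\hat{\ell}_t}{-\nabla\phi(\eta_t\hat{L}_t + s\eta_t\hat{\ell}_t)\,\eta_t\hat{\ell}_t}\, \dd s ,
\]
and we treat this as the FTPL stability of a ``multi-coordinate'' loss increment. We then use the structure of $\nabla\phi$ for Fr\'echet-type (Pareto) perturbations. The off-diagonal entries $\partial\phi_i/\partial\lambda_j$ are nonnegative for $i\neq j$, the diagonal entries are negative, and the rows sum to zero. This gives
\[
\inp{x}{-\nabla\phi(\lambda)x}\le \sum_i x_i^2 \abs{\partial_i\phi_i(\lambda)} \quad\text{for } x\ge0 .
\]
For the diagonal we invoke the MAB estimate behind Lemma~\ref{lem: stab and pen}, namely $\abs{\partial_i\phi_i(\lambda)}\lesssim \alpha\,\phi_i(\lambda)^{1+1/\alpha}$, and the monotonicity of $\phi_i$ along nonnegative increments.

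Taking expectations then leaves terms of the form
\[
\eta_t\sum_a P_{t,a}\,\E[M_t^2\mid a]\sum_i \pi_{t,i,a}^2\,\alpha\, w_{t,i}^{1+1/\alpha}
\;\lesssim\; \frac{\alpha}{\beta_t}\sum_a \frac{\sum_i \pi_{t,i,a}^2 w_{t,i}^{1+1/\alpha}}{P_{t,a}} .
\]
For each $a$ we use $\pi_{t,i,a}^2 w_{t,i}\le \pi_{t,i,a}w_{t,i}$ and $\sum_i\pi_{t,i,a}w_{t,i}=P_{t,a}$. This reduces the ratio to a $P_{t,a}$-weighted average of $w_{t,i}^{1/\alpha}$, which is at most $\max_i w_{t,i}^{1/\alpha}$. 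Summing over the $N$ arms gives the factor $N$.

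To obtain $\max_{i\neq j_t}$, we separate the contribution of the leader $j_t$. For $i\ne j_t$ we use $w_{t,i}\le p_{t,i}$ (Appendix~\ref{app: ineq}), which yields $\frac{\alpha N}{\beta_t}\max_{i\ne j_t}p_{t,i}^{1/\alpha}$. For the leader, we replace the crude bound $\abs{\partial_{j}\phi_{j}}\lesssim\alpha\phi_j^{1+1/\alpha}$ by the sharper MAB bound $\abs{\partial_{j_t}\phi_{j_t}}\lesssim \alpha\sum_{i\ne j_t}p_{t,i}$, a ``$1-w_{t,j_t}$'' type quantity. This is what makes the gap-dependent bound possible.

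\textbf{Main obstacle.} The difficulty is the large-$M_t$ regime, where $\eta_t\hat{\ell}_t$ is so large that $\phi$ at the intermediate point cannot be compared with $\phi(\eta_t\hat{L}_t)$. This happens when $P_{t,a_t}$ is tiny. For $\alpha\ge2$ we use the standard FTPL device: split on whether $\eta_t M_t\pi_{t,i,a}\le 1$ (then $\phi$ changes by at most a constant factor), or otherwise bound the term trivially by $\hat{\ell}_{t,i}w_{t,i}$ and control its tail. Under Assumption~\ref{asm: on advice}, $P_{t,a}\ge\nu\,w_{t,i}$ for any $i$ with $\pi_{t,i,a}>0$. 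The geometric tail $\Pr[M_t>m]=(1-P_{t,a})^m$ then makes the contribution of this regime depend only on $\nu$ and $\alpha$, with a decay in $t$ coming from $\beta_t$ growing. We collect these residuals into $g_t(\alpha;\nu)$. The step we expect to be hardest is showing $\sum_t g_t=\gO(\alpha^2/\nu)$ rather than merely $\gO(\log T/\nu)$. For this we would first try to exploit the increase $\beta_{t+1}\ge\beta_t$ and the truncation structure $\min(2^{1/\alpha}\beta_t,\cdot)$ of (\ref{def: lr in MAB}) to get a summable $\beta_t^{-\alpha}$-type tail.
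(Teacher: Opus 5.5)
\textbf{There are genuine gaps, first for the non-leader experts.} The quadratic-form inequality itself is correct. The problem is that, once you write the increment as a path integral of $-\nabla\phi$ along $\pi_{t,\cdot,a}$, you must control $\abs{\partial_i\phi_i}$ at the intermediate points $\eta_t\hat{L}_t+s\eta_t\hat{\ell}_t$, and you cannot.
\begin{itemize}
\item Both $\phi_i$ and $-\phi_i'=\int_0^\infty(-f')(z+\ul_i)\prod_{j\ne i}F(z+\ul_j)\dd z$ are nondecreasing in the other coordinates, so they can grow along a nonnegative direction. ``Monotonicity of $\phi_i$ along nonnegative increments'' is therefore false.
\item The condition $\eta_tM_t\pi_{t,i,a}\le 1$ does not keep $\phi_i$ within a constant factor. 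The other coordinates move by up to $\pi_{t,j,a}/\pi_{t,i,a}\le 1/\nu$ times more.
\item The estimate behind Lemma~\ref{lem: stab and pen} is $-\phi_i'\le e\alpha\,p_{t,i}^{1/\alpha}\phi_i$, stated with the surrogate. The bound $\abs{\partial_i\phi_i}\lesssim\alpha\phi_i^{1+1/\alpha}$ is not what that lemma gives, and it does not hold uniformly in $K$.
\end{itemize}
These failures are what force your large-$M_t$ split for every expert, and that split does not give a summable residual. The bound $P_{t,a}\ge\nu w_{t,i}$ gives no decay when $w_{t,i}$ is tiny. Suppose arms $2,\dots,N$ are recommended only by experts of negligible weight, with $P_{t,a}\approx 1/\beta_t$. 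Then your trivial bound $\hat{\ell}_{t,i}w_{t,i}$ on $\{M_t>\beta_t/\pi_{t,i,a}\}$ sums to order $N/\beta_t$. This is neither summable in $t$ nor multiplied by $\max_{i\ne j_t}p_{t,i}^{1/\alpha}$.

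The missing idea is the paper's reduction. Since $\phi_i$ is nondecreasing in $\lambda_j$ for $j\ne i$, we have $\phi_i(\eta_t\hat{L}_{t+1})\ge\phi_i(\eta_t(\hat{L}_t+\hat{\ell}_{t,i}e_i))$. Each coordinate thus becomes a one-coordinate MAB increment, along which $-\phi_i'$ is nonincreasing. This bounds the term by $\eta_t\hat{\ell}_{t,i}^2(-\phi_i'(\eta_t\hat{L}_t))$ for increments of any size. Then $\E[M_t^2\mid a]\le 2/P_{t,a}^2$ and $\pi^2\le\pi$ give $\frac{2e\alpha}{\beta_t}p_{t,i}^{1/\alpha}\sum_a\frac{w_{t,i}\pi_{t,i,a}}{P_{t,a}}$ (Lemma~\ref{lem: stability in contextual}). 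No tail term arises and no quadratic-form argument is needed.

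\textbf{The leader step also fails.} Using $\E[\hat{\ell}_{t,j_t}^2\mid \hat{L}_t]\le 2\sum_a\pi_{t,j_t,a}^2/P_{t,a}$, your bound $\abs{\partial_{j_t}\phi_{j_t}}\lesssim\alpha\sum_{i\ne j_t}p_{t,i}$ gives a contribution of order $\frac{\alpha}{\beta_t}\sum_{i\ne j_t}p_{t,i}\sum_a\frac{\pi_{t,j_t,a}^2}{P_{t,a}}$. Take $\pi_{t,j_t}$ a point mass and the other $K-1$ experts all with $p_{t,i}=\xi/(K-1)$. Your bound is then at least of order $\frac{\alpha}{\beta_t}\xi$, while the target is $\frac{\alpha N}{\beta_t}(\xi/(K-1))^{1/\alpha}$.

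Moreover, any ``$1-w_{t,j_t}$'' type bound must be paired with a lower bound on $w_{t,j_t}$, to cancel the $1/w_{t,j_t}$ coming from the second moment. This is why the paper splits on $\bar{\gE}_{t,\alpha}$:
\begin{itemize}
\item On $\bar{\gE}_{t,\alpha}^c$ it uses $p_{t,j_t}=1\le\frac{2}{\xi_\alpha}\sum_{i\ne j_t}p_{t,i}^{1/\alpha}$.
\item On $\bar{\gE}_{t,\alpha}\cap\{\hat{\ell}_{t,j_t}\le\zeta_\alpha\beta_t\}$ it uses the small-increment bound of Lemma~25 of \citet{lee2024follow}.
\item Only the leader's large-loss event generates $g_t$. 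There $w_{t,j_t}\ge 1-\xi_\alpha$ gives $P_{t,a_t}\ge(1-\xi_\alpha)\nu$, and Lemma~\ref{lem: honda} with $\beta_t=\Omega(\sqrt{t})$ gives $\sum_t g_t=\gO(\alpha^2/\nu)$.
\end{itemize}
So the summability comes from the $\sqrt{t}$ growth of $\beta_t$, not from a $\beta_t^{-\alpha}$ tail produced by the truncation $\min(2^{1/\alpha}\beta_t,\cdot)$.

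Be aware, however, that on $\bar{\gE}_{t,\alpha}^c$ the paper's route only yields $\sum_{i\ne j_t}p_{t,i}^{1/\alpha}$. Its final display then collapses this sum to $\max_{i\ne j_t}p_{t,i}^{1/\alpha}$ without justification, even though the non-leader $p_{t,i}$ can all be of order $1/K$ there. So the leader term in the $\max$ form is not settled by the paper's argument either.
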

Based on this observation, we adopt the learning rate $\beta_t$ defined in~(\ref{def: lr in MAB}) with following $z_t$ and $h_t$:
\begin{equation}\label{def: hz in contextual}
    z_t =\alpha  N\max_{i\ne j_{t+1}} q_{t+1,i}^{1/\alpha}, \text{ and } h_t = \frac{\alpha}{\alpha-1} \sum_{i\ne j_{t+1}} q_{t+1,i}^{1-1/\alpha},
\end{equation}
where $h_t$ coincides with that of multi-armed bandits in~(\ref{def: hz in MAB}). 
This choice is natural, as the penalty term remains unchanged.
With these choices, Algorithm~\ref{alg: FTPL contextual} obtains the following BOBW guarantee.
\begin{theorem}\label{thm: bobw contextual}
    For contextual bandits of $N$ arms with $K$ experts under Assumption~\ref{asm: on advice}, Algorithm~\ref{alg: FTPL contextual} with $\beta_t$ in (\ref{def: lr in MAB}) with $h_t,z_t$ in (\ref{def: hz in contextual}) and $\beta_1 =\gO(\alpha N)$, and any $\alpha \geq 2$ achieves the following bounds simultaneously.
    In the adversarial regime, we have
    \begin{align*}
        \Reg(T) \leq \gO\qty(\sqrt{\frac{\alpha^3}{\alpha-1}NK^{1/\alpha} T}).
    \end{align*}
    In the adversarial regime with a $(\Delta, C, T)$ self-bounding constraint, we have
    \begin{align*}
       \Reg(T) &\leq \gO\qty(\omega(\Delta)\log T + \sqrt{C\omega(\Delta)\log T} +\sqrt{\frac{\alpha^3 NK^{1/\alpha}}{\alpha-1}} + \alpha^2/\nu),
    \end{align*}
    where $\omega(\Delta) = \gO\qty(\frac{\alpha^3}{\alpha-1} \frac{NK^{1/\alpha}}{\Delta_{\min}})$.
\end{theorem}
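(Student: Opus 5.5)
We mirror the proof of Theorem~\ref{thm: bobw mab}, adapted to the contextual setting. Since the estimator is unbiased and no resampling budget is imposed, there is no $\Reg_{\text{CGR}}$ term. The first step is the decomposition (\ref{eq: Reg FTPL}) applied to the expert-level losses $\tilde\ell_t$ and full-vector estimators $\hat\ell_t$. The penalty term depends only on the perturbation and on $\beta_t$, so it is identical to the multi-armed case. We therefore reuse the penalty bound of Lemma~\ref{lem: stab and pen} together with Lemma~\ref{lem: pt and qt} ($p_{t+1}\le 2q_{t+1}$, valid for $\alpha\ge 2$ thanks to the cap $\beta_{t+1}\le 2^{1/\alpha}\beta_t$). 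This bounds the penalty by $\sum_t \gO(\beta_{t+1}-\beta_t)h_t$ with $h_t$ as in (\ref{def: hz in contextual}).

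For the stability term we invoke Lemma~\ref{lem: stability in contextual overall}. Its main part $\gO(\alpha N\max_{i\ne j_t}p_{t,i}^{1/\alpha}/\beta_t)$ must be converted into $\gO(z_{t-1}/\beta_t)$ or $\gO(z_t/\beta_t)$. For this we again use $p_{t,i}\le 2q_{t,i}$ and handle the index shift exactly as in Lemma~\ref{lem: regret decomposition all} for $\alpha\ge2$, at the cost of an additive $t_0(\alpha,K)$-type constant. Summing the $g_t$ terms contributes $\gO(\alpha^2/\nu)$. The SPM rule (\ref{def: lr in MAB}) gives $(\beta_{t+1}-\beta_t)h_t\le z_t/\beta_t$, except on rounds where the $2^{1/\alpha}$ cap is active; those are again absorbed into constants. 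Altogether,
\[
\Reg(T)\lesssim \sum_t z_t/\beta_t+\alpha^2/\nu+\text{const}.
\]
Lemma~\ref{lem: spm regret} then applies verbatim, since its proof uses only the update rule and the boundedness of $z_t,h_t$, with the $z_{\max}/\beta_1$ term being $\gO(1)$ because $\beta_1=\Theta(\alpha N)$.

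\textbf{Adversarial regime.} Bound $z_t\le \alpha N$ (as $q\le1$). Also $h_t\le \frac{\alpha}{\alpha-1}\sum_{n=2}^K n^{-(1-1/\alpha)}=\gO(\frac{\alpha^2}{\alpha-1}K^{1/\alpha})$, using $q_{t,i}\le 1/\sigma_{t,i}$. Then $\sqrt{h_{\max}\sum_t z_t}=\gO(\sqrt{\alpha^3 NK^{1/\alpha}T/(\alpha-1)})$.

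\textbf{Self-bounding regime.} We use the first branch of Lemma~\ref{lem: spm regret}. The key step is
\[
h_tz_t\le \omega(\Delta)\,\langle w_{t+1},\Delta\rangle
\]
up to lower-order terms. Write $z_t\le \alpha N\max_{i\ne j_{t+1}}q_{t+1,i}^{1/\alpha}$ and $h_t\lesssim \frac{\alpha^2}{\alpha-1}K^{1/\alpha}$. It then suffices to show $\max_{i\ne j_{t+1}} q_{t+1,i}^{1/\alpha}\lesssim (1-w_{t+1,i^*})/\Delta_{\min}\cdot\Delta_{\min}$, i.e., to lower bound the mass on suboptimal experts by the largest surrogate among non-leaders. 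Since $q\le p$ and $w_{t+1,i}$ is comparable to $p_{t+1,i}$ when $\sigma_{t+1,i}=2$ (the arm with the largest surrogate outside the leader), one gets
\[
q_{t+1,i}^{1/\alpha}\le q_{t+1,i}^{1/\alpha}\cdot\mathbb 1[\cdot]\lesssim \alpha\,(1-w_{t+1,i^*}) + \mathbb 1[j_{t+1}\ne i^*]\text{-type terms}.
\]
Here $1-w_{t+1,i^*}\le \langle w_{t+1},\Delta\rangle/\Delta_{\min}$. The event $j_{t+1}\ne i^*$ is handled as in the MAB proof: there the leader has probability bounded away from one, so $1-w_{t+1,i^*}=\Omega(1)$. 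This yields $\sum_t h_tz_t\le \omega(\Delta)\Reg'(T)$ plus constants. The standard self-bounding argument then finishes: $\Reg=(1+\lambda)\Reg-\lambda\Reg$, use $\Reg\ge\Reg'-C$, and optimize $\lambda$. This produces $\omega(\Delta)\log T+\sqrt{C\omega(\Delta)\log T}$, while the $\sqrt{h_{\max}z_{\max}}$ term gives $\sqrt{\alpha^3NK^{1/\alpha}/(\alpha-1)}$.

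The main obstacle is this last step. We need a lower bound on the true suboptimal mass $1-w_{t+1,i^*}$ (or $\langle w_{t+1},\Delta\rangle$) in terms of $\max_{i\ne j_{t+1}}q_{t+1,i}^{1/\alpha}$, even though $q$ is not on the simplex and the power $1/\alpha<1$ inflates small values. To handle this, I would first compare $w_{t+1,i}$ with $(1+\eta\hat{\uL}_{t+1,i})^{-\alpha}$ for the second-ranked arm through the integral (\ref{def: ftpl wt}). I would then use the concavity bound $x^{1/\alpha}\le x+\text{const}$, giving up only the $\alpha$-factors reflected in $\omega(\Delta)$.
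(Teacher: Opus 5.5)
\textbf{There is a genuine gap in the self-bounding step.} Your reduction to $\sum_t z_t/\beta_t$ and your adversarial bound agree with the paper, but the self-bounding argument does not go through as proposed.

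You replace $h_t$ by its worst case $h_{\max}\approx\frac{\alpha^2}{\alpha-1}K^{1/\alpha}$. This discards exactly the factor that makes the argument work. The quantity $z_t$ scales like the suboptimal mass to the power $1/\alpha$, and $h_t$ scales like that mass to the power $1-1/\alpha$; only their product is linear in it.

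What you are then left to prove is $\max_{i\ne j_{t+1}}q_{t+1,i}^{1/\alpha}\lesssim 1-w_{t+1,i^*}$, up to lower-order terms. This is false in precisely the near-stochastic regime that matters. If $j_{t+1}=i^*$ and every suboptimal expert has $w_{t+1,i}\asymp q_{t+1,i}\asymp\epsilon$, the left side is of order $\epsilon^{1/\alpha}$ while the right side is about $K\epsilon$.

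Your proposed repair $x^{1/\alpha}\le x+\mathrm{const}$ pays a constant per round, so $\sum_t h_tz_t=\Omega(T)$. Even the best one can do with $h_t$ frozen at $h_{\max}$ is $\sum_t h_tz_t\lesssim T^{1-1/\alpha}\Reg'(T)^{1/\alpha}$, by H\"older over $t$. That yields only a polynomial-in-$T$ bound.

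Separately, $w_{t+1,i}\asymp p_{t+1,i}$ for the second-ranked expert does not hold unconditionally. If all $\hat{L}_{t+1,i}$ are tied, then $w_{t+1,i}=1/K$ but $p_{t+1,i}=1/2$. So some concentration event is needed.

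\textbf{The missing idea} is to keep $h_t$ and convert \emph{both} factors into $w_{t+1}$ on the event $\gD_{t+1,\alpha}$. On this event, Lemma~\ref{lem: improved q and w} gives $q_{t+1,i}\le p_{t+1,i}\le 8w_{t+1,i}$ for every $i\ne j_{t+1}$. H\"older then recombines the exponents $1/\alpha$ and $1-1/\alpha$ into $h_tz_t\lesssim\omega\,\langle\Delta,w_{t+1}\rangle$.

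Off this event, Lemma~\ref{lem: lem 11 in kim} gives $w_{t+1,j_{t+1}}\le(1+e^{-1/2})/2$. Each such round costs at most $h_{\max}z_{\max}\le\frac{\alpha^3NK^{1/\alpha}}{\alpha-1}$. Their expected number is at most $(\Reg(T)+C)/(0.31\Delta_{\min})$, which produces the $\frac{\alpha^3NK^{1/\alpha}}{(\alpha-1)\Delta_{\min}}$ part of $\omega(\Delta)$.

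Your instinct to control the maximum through the aggregate suboptimal mass is in fact the right one for the stated $K^{1/\alpha}$, provided you keep the power. Use $\max_{i\ne j_{t+1}}w_{t+1,i}\le\sum_{i\ne i^*}w_{t+1,i}$ and $\sum_{i\ne i^*}w_{t+1,i}^{1-1/\alpha}\le K^{1/\alpha}(\sum_{i\ne i^*}w_{t+1,i})^{1-1/\alpha}$. Then on $\gD_{t+1,\alpha}$,
\begin{equation*}
h_tz_t\le\frac{8\alpha^2N}{\alpha-1}\Bigl(\sum_{i\ne i^*}w_{t+1,i}\Bigr)^{\frac{1}{\alpha}}K^{\frac{1}{\alpha}}\Bigl(\sum_{i\ne i^*}w_{t+1,i}\Bigr)^{1-\frac{1}{\alpha}}\le\frac{8\alpha^2NK^{1/\alpha}}{(\alpha-1)\Delta_{\min}}\langle\Delta,w_{t+1}\rangle .
\end{equation*}
The paper instead bounds the maximum by $\sum_{i\ne i^*}w_{t+1,i}^{1/\alpha}$. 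Its H\"older constant $\omega''(\Delta)$ is then of order $\frac{\alpha^2}{\alpha-1}\frac{NK}{\Delta_{\min}}$ for equal gaps. So this sharper step is what actually supports the claimed $K^{1/\alpha}$.

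\textbf{Two smaller points.}
First, no $t_0(\alpha,K)$ constant arises here. In the multi-armed case it came from the resampling budget $G_t=K\log t$. Here the leader's stability is handled by Lemma~\ref{lem: jt decomposition contextual} with the summable $g_t$.
Second, $(\beta_{t+1}-\beta_t)h_t\le z_t/\beta_t$ holds on every round for $\alpha\ge2$, because of the $\min$ in (\ref{def: lr in MAB}); no rounds need to be excepted.
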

While our results match those of \citet{pmlr-v247-ito24a} in terms of $N, K$ and $T$, the additional $\sqrt{\alpha}$ factor in our bounds introduces an extra $\log K$ dependence when $\alpha$ is set to minimize $\alpha^2 K^{1/\alpha}$ term.
Specifically, setting $\alpha = \Theta(\log K)$ provides an adversarial regret of $\gO(\sqrt{NT\log^2 K })$ and a regret of $\gO(N\log^2 K \log T/\Delta_{\min})$ under self-bounding constraints.
In contrast, the corresponding bounds in \citet{dann2023blackbox} and \citet{pmlr-v247-ito24a}, which use explicit probabilities, are $\gO(\sqrt{N T\log K})$ in the adversarial regime and $\gO(N\log K \log T/\Delta_{\min})$ for adversarial regime with self-bounding constraints.
These results show one limitation of surrogate-based approaches, especially when the shape parameter, the parameter of the distribution, is determined by the problem-dependent constants.

In the contextual bandit setting, our bounds do not improve upon the best previously known BOBW guarantees.
Nevertheless, a key strength of our approach lies in its generality through the use of SPM learning rates: both Algorithm~\ref{alg: FTPL MAB} for multi-armed bandits and Algorithm~\ref{alg: FTPL contextual} for contextual bandits share the same selection rule including perturbation distributions, differing only in the loss estimation and update of learning rate, which naturally reflects the differences between the problem settings.\footnote{While Algorithm~\ref{alg: FTPL MAB} employs CGR II-biased algorithm, the similar regret bounds can be obtained by using the original GR when $\alpha\geq 2$.
This observation indicates that Algorithms~\ref{alg: FTPL MAB} and~\ref{alg: FTPL contextual} indeed share the same algorithmic structure.}
Moreover, our analysis expresses regret in terms of surrogate quantities $p_t, q_t$ for both settings, making the framework largely agnostic to the specific bandit model, as achieved in FTRL framework~\citep{pmlr-v247-ito24a, nguyen2025data}.
Therefore, we expect the similar analytical approach can be extended beyond the classical bandit setting, where similar regret decompositions and bounds can be obtained with minor modifications.

\section{Conclusion and future work}
We proposed adaptive learning rates for FTPL based on the SPM principle, which was originally developed for the FTRL framework with explicit arm-selection probabilities.
We showed that the SPM methodology can be extended to FTPL by appropriately choosing surrogate probabilities, and established the BOBW guarantees for FTPL with general perturbation parameters both in standard multi-armed bandits and in bandit problems with expert advice.
Although our analysis is grounded in the FTPL framework, we expect that our approach, the use of surrogate probabilities, could be applicable to other algorithmic frameworks as well.
In particular, surrogate probabilities may offer a way to replace explicit probability computations in settings where FTRL-based methods are applicable, potentially leading to more computationally efficient alternatives.

These observations open several directions for future work on efficient BOBW policies in settings not covered in this paper, such as graph bandits, linear bandits, partial monitoring, and bandits with heavy-tailed losses.
In the heavy-tailed setting, for example, surrogate probabilities may be used in place of arm-selection probabilities to define adaptive threshold values for loss estimators~\citep{huang2022adaptive}.
Recently, \citet{zhao2025heavy} further employed both the adaptive thresholding and the SPM learning rate in the FTRL framework to obtain the BOBW guarantee in heavy-tailed linear bandits.
However, handling negative losses introduces additional challenges in the current BOBW analysis of FTPL, since large negative loss can make a previously suboptimal arm to become the best arm after being selected, a case that does not occur in settings with nonnegative losses.

In addition, while some FTRL policies employ arm-dependent learning rates~\citep{jin2023improved, nguyen2025data}, it remains unclear whether such designs can be extended to FTPL. 
From the FTPL perspective, arm-dependent learning rates correspond to FTPL with arm-dependent perturbation distributions and arm-independent learning rates, that is, FTPL with non-i.i.d.~perturbations.
Clarifying whether comparable guarantees, especially in the analysis of the stability term, can be obtained in this setting is an interesting open question.

\bibliographystyle{plainnat}
\bibliography{ref}

\newpage
\appendix


\section{Details on conditional geometric resampling}\label{app: CGR}
Here, we provide more details on CGR II.
The main idea of CGR is not to generate new perturbations that clearly violates the termination condition.
To do this, it restricts the distributions of perturbations in the resampling steps by considering certain necessary condition $\gA_{t}$.
In CGR II, we used the following condition
\begin{equation}\label{def: event for CGR}
    \gA_t = \qty{r_{t,i_t}' = \max_{i  : \sigma_{t,i}\leq \sigma_{t,i_t}}r_{t,i}', \, r_{t,i_t}' \geq \eta_t \hat{\uL}_{t,i_t}},
\end{equation}
where we force the new perturbation from the actually selected arm at round $t$ to be larger than those of arms with smaller cumulative loss estimates and also to be larger than $\eta_t \hat{\uL}_{t,i_t}$.
This is equivalent to sample $r_{t,i_t}'$ with the distribution whose distribution function is
\begin{equation}\label{eq: F of it CGR}
    F_{i_t}(x; \eta_t\hat{\uL}_{t,i_t}) = \frac{F^{\sigma_{t,i_t}}(x)-F^{\sigma_{t,i_t}}(\eta_t \hat{\uL}_{t,i_t})}{1-F^{\sigma_{t,i_t}}(\eta_t \hat{\uL}_{t,i_t})} , \, x \geq \eta_t \hat{\uL}_{t,i_t}.
\end{equation}
After sampling $r_{t,i_t}'$, CGR II samples the perturbations for $i \in \{j: \sigma_{t,j} < \sigma_{t,i_t}\}$ i.i.d.~from the truncated distribution over $[0, r_{t,i_t}')$ whose distribution function is given by
\begin{equation}\label{eq: F of other CGR}
    F_i(x; r_{t,i_t}') = F(x)/F(r_{t,i_t}'), \, x\in [0, r_{t,i_t}'].
\end{equation}
For the remaining arms, $\{j: \sigma_{t,j} > \sigma_{t,j}\}$, the perturbations are sampled independently from the original perturbation distribution.
Then, Algorithm~\ref{alg: FTPL MAB} can be explicitly written as Algorithm~\ref{alg: FTPL MAB real}.

\begin{algorithm}[t]
   \caption{FTPL with conditional geometric resampling II-biased and SPM learning rates}
   \label{alg: FTPL MAB real}
   \DontPrintSemicolon
   \SetAlgoLined
   \SetKwInOut{Input}{Input}
   \Input{$K \in \sN$, $\alpha >1$, $\beta_1>0$, $\hat{L}_1 = 0$.}
   \For{$t=1, 2,\ldots$}{
        Sample $r_t = (r_{t,1}, \ldots, r_{t,K})$ i.i.d. from the Pareto distribution with shape $\alpha$ in (\ref{def: f and F of Lomax}).
        
        Select $i_t \in \argmin_{i\in[K]} \{\hat{L}_{t,i} - \beta_tr_{t,i}\}$ and observe $\ell_{t,i_t}$.

        Find $j_t = \argmin_i \hat{L}_{t,i}$ and set $M_{t}:=0$, and $G_t = K \log t$.

        \textbf{if} $i_t = j_t$, $\alpha \in (1,2)$, and $\I[\gE_{t,\alpha}]=1$ in (\ref{def: event for decomposition for small alpha}) \textbf{then} $G_t = 2\log t$.

        \Repeat{$i_t = \argmin_{i\in[K]} \{\hat{L}_{t,i} - \beta_t r_{i}'\}$ \textnormal{or} $M_t \geq G_t$}{
            $M_t := M_t +1$. \tcp*{CGR II-biased}
            
            Sample $r_{t,i_t}'$ from truncated distribution over $[\eta_t\hat{\uL}_{t,i_t},\infty)$ as in (\ref{eq: F of it CGR}).

            Sample $\{r_{t,i}: \sigma_{t,i} < \sigma_{t,i_t} \}$ i.i.d. from truncated distribution over $[0, r_{t,i_t}']$ as in (\ref{eq: F of other CGR}).

            Sample $\{r_{t,i}: \sigma_{t,i} > \sigma_{t,i_t}\}$ i.i.d.~from Pareto with shape $\alpha$ in (\ref{def: f and F of Lomax}).
        }
        Set $\hat{\ell}_{t,i_t} =M_t \sigma_{t,i_t}/(1-F^{\sigma_{t,i_t}}(\eta_t\hat{\uL}_{t,i_t}))$ and update $\hat{L}_{t+1} = \hat{L}_t + \hat{\ell}_t$. 
        
        Set $\beta_{t+1}$ by the update rule of (\ref{def: lr in MAB}) based on $z_t, h_t$ in (\ref{def: hz in MAB}) and $q_t$ in (\ref{def: surrogate}).
   }
\end{algorithm}


\section{Intuition and difficulties behind the surrogate-based learning rates}\label{app: pt discuss}
Here, we discuss the intuition behind on the choice of $q_{t,i}$, and definitions of $z_t$ and $h_t$.
In the standard multi-armed bandits, we can obtain the following bounds by applying the results of \citet{lee2024follow}.
\begin{lemma}\label{lem: stab and pen}
    For any $t \in \sN$, FTPL with Pareto distribution with $\alpha >1$ and monotonically decreasing learning rates $\eta_t=1/\beta_t$ satisfies that for any $i \in [K]$
    \begin{align*}
        \E\qty[\hat{\ell}_{t,i}(\phi_i(\eta_t \hat{L}_t) - \phi_i(\eta_t(\hat{L}_t+ \hat{\ell}_{t,i}e_i)) \middle | \hat{L}_t] &\leq \frac{2e\alpha}{\beta_t} p_{t,i}^{\frac{1}{\alpha}}, \\
        \E\qty[\I[i_t =i] r_{t,i}] &\leq \frac{2\alpha}{\alpha-1}p_{t,i}^{1-\frac{1}{\alpha}}.
    \end{align*}
\end{lemma}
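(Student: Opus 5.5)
We prove the two inequalities separately. Throughout, write $\lambda = \eta_t \hat{\uL}_t$ and use the integral representation (\ref{def: ftpl wt}) of $\phi_i$.

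\emph{Stability bound.} Only coordinate $i$ of the cumulative loss moves, and $\hat{\ell}_{t,i}$ is nonzero only when $i_t=i$. So the left-hand side equals $\E[\hat{\ell}_{t,i}(\phi_i(\lambda)-\phi_i(\lambda+\eta_t\hat{\ell}_{t,i}e_i))\mid \hat{L}_t]$. Write $\phi_i(\lambda+xe_i)$ as a function of the shift $x$. Its derivative in $x$ is $\int f'(z+\ul_i+x)\prod_{j\ne i}F(z+\ul_j)\dd z$, and its absolute value is controlled by $\frac{\alpha+1}{1+z+\ul_i+x}$ times the integrand. We then follow \citet{lee2024follow}:
\begin{itemize}
\item Bound the ratio $\phi_i(\lambda+xe_i)/\phi_i(\lambda)$ from below by $(1+x/(1+\ul_i))^{-(\alpha+1)}$-type factors.
\item Take the expectation over the geometric (CGR) estimator $\hat{\ell}_{t,i}$. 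Its conditional law given $i_t=i$ has mean $1/w_{t,i}$ and second moment at most $2/w_{t,i}^2$, as for a geometric variable.
\item This reduces the quantity to $\eta_t \cdot \E[\hat{\ell}^2_{t,i} w_{t,i}\cdot \text{(relative decrease)}]$, which is of order $\frac{1}{\beta_t}\cdot\frac{\int \frac{\alpha+1}{1+z+\ul_i}f(z+\ul_i)\prod_j F \dd z}{w_{t,i}}$.
\end{itemize}
The key inequality, from \citet{lee2024follow}, is a bound on this ratio of weighted to unweighted integrals by $e\alpha\, w_{t,i}^{1/\alpha}$-type terms. We then replace $w_{t,i}$ by the surrogate using $w_{t,i}\le p_{t,i}$ from Appendix~\ref{app: ineq}, and monotonicity of $x\mapsto x^{1/\alpha}$ gives $\frac{2e\alpha}{\beta_t}p_{t,i}^{1/\alpha}$. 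The factor $e$ comes from bounding $(1+1/\alpha)^{\alpha}\le e$ when the shift is comparable to the scale $1+\ul_i$. The factor $2$ comes from the geometric second moment.

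\emph{Penalty bound.} Compute $\E[\I[i_t=i]r_{t,i}]=\int_0^\infty (z+\ul_i) f(z+\ul_i)\prod_{j\ne i}F(z+\ul_j)\dd z$, which is at most $\int_{\ul_i}^\infty u f(u)\prod_{j\ne i}F(u-\ul_i+\ul_j)\dd u$. We prove it in two cases, giving the two branches of the minimum defining $p_{t,i}$.
\begin{itemize}
\item First drop all $F$ factors. This gives $\int_{\ul_i}^\infty u f(u)\dd u\le \frac{\alpha}{\alpha-1}(1+\ul_i)^{1-\alpha}$, using $u\le 1+u$ and $\int_{a}^\infty \alpha(1+u)^{-\alpha}\dd u=\frac{\alpha}{\alpha-1}(1+a)^{1-\alpha}$. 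This is $\frac{\alpha}{\alpha-1}\big((1+\ul_i)^{-\alpha}\big)^{1-1/\alpha}$.
\item Second, use the $\sigma_{t,i}-1$ arms $j$ with $\ul_j\le\ul_i$. Their factors satisfy $F(z+\ul_j)\ge F(z)$ only in the wrong direction, so instead keep $F(z+\ul_j)\le 1$ for the others and exploit symmetry. The quantity is bounded by $\E[\max_{k\le \sigma_{t,i}} r_k\,\I[\text{arm } i \text{ is argmax among them}]]=\frac{1}{\sigma_{t,i}}\E[\max_{k\le\sigma_{t,i}} r_k]$. For Pareto this is at most $\frac{\alpha}{\alpha-1}\sigma_{t,i}^{1/\alpha}\cdot\frac{1}{\sigma_{t,i}}\cdot c$, which matches $(1/\sigma_{t,i})^{1-1/\alpha}$ up to the constant $2$.
\end{itemize}
Taking the better of the two cases yields $\frac{2\alpha}{\alpha-1}p_{t,i}^{1-1/\alpha}$.

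The main obstacle is the second case of the penalty bound. Making the comparison rigorous needs a coupling: the arms ranked above $i$ have smaller $\hat{L}$, so $i$ wins only if $r_{t,i}-\ul_i\ge r_{t,j}-\ul_j$, and this condition implies $r_{t,i}\ge r_{t,j}$. Combined with a tail estimate $\E[\max_{k\le n} r_k]\le \Gamma(1-1/\alpha)n^{1/\alpha}\le\frac{\alpha}{\alpha-1}\cdot 2\, n^{1/\alpha}$, this gives the bound. The analogous order-statistics argument, with $\sigma_{t,i}$ in place of $(1+\ul_i)^\alpha$, is also needed for the $1/\sigma_{t,i}$ branch of the stability bound.
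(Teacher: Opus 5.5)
Your penalty bound is correct and is essentially the paper's argument. The coupling ($i_t=i$ forces $r_{t,i}\ge r_{t,j}$ for every $j$ with $\sigma_{t,j}<\sigma_{t,i}$) is the probabilistic form of the paper's bound $\prod_{j\ne i}F(z+\ul_j)\le F(z+\ul_i)^{\sigma_{t,i}-1}$. The resulting quantity $\frac{1}{\sigma}\E[1+\max_{k\le\sigma}r_k]=B(1-\frac1\alpha,\sigma)$ is then bounded by Gautschi's inequality, as you indicate.

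The gap is in the stability bound. You correctly reach $\frac{2}{\beta_t}\cdot\frac{-\phi_i'(\lambda)}{\phi_i(\lambda)}$. Your first bullet is not needed for this: $-\phi_i'$ is decreasing in $\lambda_i$, so $\phi_i(\lambda)-\phi_i(\lambda+xe_i)\le x(-\phi_i'(\lambda))$ with no loss.

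You then propose to bound this ratio by $e\alpha\,w_{t,i}^{1/\alpha}$ and pass to $p_{t,i}$ via $w_{t,i}\le p_{t,i}$. No bound of the form $-\phi_i'/\phi_i\le C\alpha\,\phi_i^{1/\alpha}$ holds, and it is not what \citet{lee2024follow} provide. Here is a counterexample:
\begin{itemize}
\item Let $i$ be the unique leader, so $\ul_i=0$ and $\sigma_{t,i}=1$.
\item Place $n=K-1\approx c(1+M)^\alpha$ other arms at $\ul_j=M$, with $c=\frac{\alpha}{2}\log M$.
\item Then $F(z+M)^n\le (e+o(1))F(M)^n$ with $F(M)^n\approx M^{-\alpha/2}$ for $z\le(1+M)/(c\alpha)$. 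The remaining $f$-mass is $O((\alpha^2\log M/M)^{\alpha})=o(M^{-\alpha/2})$, so $w_{t,i}=\Theta(M^{-\alpha/2})$.
\item Meanwhile $-\phi_i'\ge\int_0^1\frac{\alpha+1}{1+z}f(z)F(M)^n\,dz\ge\frac{(\alpha+1)(1-2^{-\alpha})}{2}F(M)^n$.
\item Hence $-\phi_i'/\phi_i$ stays bounded away from zero while $\alpha w_{t,i}^{1/\alpha}\to 0$.
\end{itemize}
The lemma survives here only because $p_{t,i}=1$. The surrogate cannot be reached through $w_{t,i}$.

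The missing ingredient, which the paper uses, is Lemma~9 of \citet{lee2024follow}: $-\phi_i'(\lambda)/\phi_i(\lambda)$ is nondecreasing in every $\lambda_j$, $j\ne i$. This lets you raise the $\sigma_{t,i}-1$ better-ranked arms to $\ul_i$ and send the worse-ranked arms to $+\infty$. The ratio then becomes $(\alpha+1)B(x;1+\frac1\alpha,\sigma_{t,i})/B(x;1,\sigma_{t,i})$ with $x=(1+\ul_i)^{-\alpha}$, which admits two bounds:
\begin{itemize}
\item It is at most $e\alpha x^{1/\alpha}$, by their inequality (36). This, not a shift argument, is where the factor $e$ comes from.
\item It is at most $(\alpha+1)B(1+\frac1\alpha,\sigma_{t,i})/B(1,\sigma_{t,i})\le 2\alpha\Gamma(1+\frac1\alpha)\sigma_{t,i}^{-1/\alpha}$.
\end{itemize}
These are exactly the two branches of $p_{t,i}^{1/\alpha}$.

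Your closing suggestion that an order-statistics argument handles the $1/\sigma_{t,i}$ branch does not go through as in the penalty case. This reduction increases both $-\phi_i'$ and $\phi_i$, so you cannot bound the numerator and denominator separately. The monotonicity of the ratio is precisely what is required.
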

This lemma implies that we can rewrite the regret of FTPL in (\ref{eq: Reg FTPL}) in terms of $p_{t,i}$ as follows:
\begin{equation}\label{eq: reg FTPL with naive SPM}
    \Reg_{\text{FTPL}}(T) \lesssim \gO\qty(\sum_{t=1}^T \frac{\sum_{i=1}^K p_{t,i}^{\frac{1}{\alpha}}}{\beta_t} + \sum_{t=1}^T (\beta_{t+1} - \beta_t)\sum_{i\ne i^*} p_{t+1,i}^{1-\frac{1}{\alpha}}).
\end{equation}
Since $i^*$ is unknown, one may recover a formulation analogous to (\ref{eq: Reg FTRL}) by setting $z_t$ and $h_t$ by
\begin{equation}\label{eq: naive SPM MAB}
    z_t' = \alpha \sum_{i=1}^K p_{t,i}^{1/\alpha} \text{ and }h_t' =  \frac{\alpha}{\alpha-1} \sum_{i=1}^K p_{t,i}^{1-1/\alpha},
\end{equation}
which have a structure very similar to those of FTRL in (\ref{def: hz of FTRL}).
However, directly using these quantities to design SPM learning rates as in (\ref{eq: SPM for FTRL}) incurs two technical difficulties.
Firstly, as briefly discussed in Section~\ref{sec: FTRL spm}, updating $\beta_{t+1}$ requires the value of $h_{t+1}'$, whose definition depends on $p_{t+1,i}$ and hence on $\beta_{t+1}$.
In principle, one can compute such $\beta_{t+1}$ even with this $p_{t+1,i}$ by iteratively solving this circular dependence.
However, doing so would introduce undesirable computational cost, which loses the motivation to use FTPL.
To justify the use of $h_{t+1}'$, one therefore need to show $p_{t+1,i}=\gO(p_{t,i})$ and then replace $h_{t+1}'$ with $h_t'$ as done in FTRL~\citep{pmlr-v247-ito24a, nguyen2025data}.
Secondly, the definitions of $z_t'$ and $h_t'$ involve summation over all arms, which introduces an additional obstacle when relating these quantities to $w_{t}$ in adversarial regime with self-bounding constraints.
In particular, $p_{t,j_t}$ is always $1$ by the construction, whereas, in the FTRL framework, the corresponding term can be replaced with $\min(1-w_{t,j_t}, w_{t,j_t})$ term~\citep{pmlr-v247-ito24a}.

To address the difficulties above, our definitions of $z_t$ and $h_t$ in (\ref{def: hz in MAB}) allocate the contribution of the current best arm $j_{t+1}$ into the summation over the other arms as well as it uses $q_t$ that are computable without access to $\beta_{t+1}$.
This allow us to construct an update rule for $\beta_{t+1}$ that depends only on quantities available at the round $t$.
Nevertheless, to justify this construction, we require the following relationship between $q_{t,i}$ and $p_{t,i}$.
\begin{lemma}[Formal version of Lemma~\ref{lem: pt and qt}]\label{lem: formal pt and qt}
    It holds that $q_{t,i}\leq p_{t,i} \leq  2 q_{t,i}$ for $t \geq 3$ when $\alpha \in (1, 2)$, and for $t\in \sN$ when $\alpha \geq 2$. 
\end{lemma}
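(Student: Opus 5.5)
We compare the two surrogates term by term. Both $p_{t,i}$ and $q_{t,i}$ are the minimum of a Pareto-tail term and the common cap $1/\sigma_{t,i}$. The cap is the same for both, since $\sigma_{t,i}$ depends only on $\hat{L}_t$. Writing $x=\hat{\uL}_{t,i}\ge 0$, the lower bound $q_{t,i}\le p_{t,i}$ follows from monotonicity of the learning rates. The schedule in (\ref{def: lr in MAB}) only increases $\beta_t$, since $z_t,h_t>0$ and the extra term for $\alpha\in(1,2)$ is positive. Hence $\eta_t\le\eta_{t-1}$, so $(1+\eta_t x)^{-\alpha}\ge(1+\eta_{t-1}x)^{-\alpha}$, and taking the minimum with the same cap preserves the inequality.

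For the upper bound, note that $\min(a,c)\le 2\min(b,c)$ whenever $a\le 2b$, because $\min(a,c)\le\min(2b,2c)=2\min(b,c)$. It therefore suffices to show
\[
\frac{(1+\eta_{t-1}x)^{\alpha}}{(1+\eta_t x)^{\alpha}}\le 2 \quad\text{for all } x\ge 0 .
\]
With $\rho=\beta_t/\beta_{t-1}\ge 1$, we have $(1+\eta_{t-1}x)/(1+\eta_t x)=(1+\rho u)/(1+u)$, where $u=\eta_t x$. This is increasing in $u$ and bounded by $\rho$. So it suffices to prove $\rho^\alpha\le 2$, that is, $\beta_t\le 2^{1/\alpha}\beta_{t-1}$.

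For $\alpha\ge 2$ this is immediate from the $\min(2^{1/\alpha}\beta_{t-1},\cdot)$ clipping in (\ref{def: lr in MAB}), for every $t\ge 2$. For $t=1$ the statement needs a convention for $\eta_0$, and I would set $\eta_0=\eta_1$, so that $q_{1}=p_{1}$.

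For $\alpha\in(1,2)$ there is no clipping, so the ratio must be bounded by hand. The increment is $\max\big(z_{t-1}/(\beta_{t-1}h_{t-1}),\ \tfrac{4}{2^{1/\alpha}-1}\tfrac1{t-1}\big)$, and we need it to be at most $(2^{1/\alpha}-1)\beta_{t-1}$.

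\emph{First term.} Since $q\le 1$ and $1/\alpha<1-1/\alpha$ fails for $\alpha<2$, we compare exponents the other way: $q^{1/\alpha}\le q^{1-1/\alpha}\cdot q^{2/\alpha-1}\le\dots$ does not directly help. Instead I would use $z_{t-1}/h_{t-1}\le(\alpha-1)\max_i q_{t,i}^{2/\alpha-1}\le\alpha-1$, which holds termwise because $q^{1/\alpha}=q^{1-1/\alpha}q^{2/\alpha-1}$ and $2/\alpha-1>0$. This gives an increment of at most $(\alpha-1)/\beta_{t-1}$. It is at most $(2^{1/\alpha}-1)\beta_{t-1}$ provided $\beta_{t-1}^2\ge(\alpha-1)/(2^{1/\alpha}-1)$. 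I would derive this from $\beta_1\ge 2\alpha K^{1/2-1/\alpha}$ together with the growth forced by the second term.

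\emph{Second term.} We need $\beta_{t-1}\ge\frac{4}{(2^{1/\alpha}-1)^2(t-1)}$. Because of the $\Theta(1/s)$ increments, $\beta_{t-1}$ is at least $\beta_1$ plus roughly $\frac{4}{2^{1/\alpha}-1}\log(t-1)$. Also $1/(t-1)\le 1/2$ for $t\ge3$. This is exactly where the restriction $t\ge 3$ enters: at $t=2$ the forced increment $\frac{4}{2^{1/\alpha}-1}$ may exceed $(2^{1/\alpha}-1)\beta_1$.

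\emph{Main obstacle.} The main obstacle is this constant bookkeeping for $\alpha\in(1,2)$. As $\alpha\to1$ or $\alpha\to2$ we need $\beta_{t-1}\gtrsim 1/(2^{1/\alpha}-1)^2$ from the accumulated $1/s$ terms. I would first check that after two steps $\beta_2\ge\beta_1+\frac{4}{2^{1/\alpha}-1}$ already suffices for all $t\ge3$. Since $2^{1/\alpha}-1\in(\sqrt2-1,1)$ is bounded away from zero, I expect this to reduce to an explicit numeric inequality.
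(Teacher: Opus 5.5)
Your route is the paper's: monotonicity of $\beta_t$ gives $q_{t,i}\le p_{t,i}$; the common cap $1/\sigma_{t,i}$ and $(1+\rho u)/(1+u)\le\rho$ reduce everything to $\beta_t\le 2^{1/\alpha}\beta_{t-1}$; the clipping settles $\alpha\ge2$. For $\alpha\in(1,2)$ you use $z/h\le\alpha-1$, which comes from $q^{1/\alpha}\le q^{1-1/\alpha}$, together with the growth forced by the $1/s$ increments. (You do not need a convention for $\eta_0$: $\hat L_1=0$ gives $p_{1,i}=q_{1,i}=1/\sigma_{1,i}$.)

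\textbf{The failing step.} What fails is your closing check that $\beta_2\ge\beta_1+\frac{4}{2^{1/\alpha}-1}$ suffices for every $t\ge3$.
\begin{itemize}
\item At $t=3$ the forced increment is $\frac{2}{2^{1/\alpha}-1}$. The bound $\beta_2\ge\frac{4}{2^{1/\alpha}-1}$ alone gives $\frac{2}{2^{1/\alpha}-1}\le(2^{1/\alpha}-1)\beta_2$ only if $2^{1/\alpha}\ge 3/2$, i.e.\ $\alpha\le\log 2/\log(3/2)\approx1.71$.
\item For larger $\alpha<2$ you would need $\beta_1\ge\frac{6-4\cdot2^{1/\alpha}}{(2^{1/\alpha}-1)^2}$. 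The constraint $\beta_1\ge2\alpha K^{1/2-1/\alpha}$ does not provide this, because its right-hand side tends to $0$ as $K\to\infty$.
\end{itemize}

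This is not slack in your estimate; the inequality itself fails. Take $\beta_1$ at its lower limit and $K$ huge. The $z/h$ terms are then dominated, so $\beta_2=\beta_1+\frac{4}{2^{1/\alpha}-1}$ and $\beta_3=\beta_2+\frac{2}{2^{1/\alpha}-1}$. Now take an arm whose first-round estimate $\hat\ell_{1,i_1}=M_1\sigma_{1,i_1}\ell_{1,i_1}$ is of order $K$, with $\ell_2=0$. Both Pareto terms then sit far below the cap, so $p_{3,i}/q_{3,i}\approx(\beta_3/\beta_2)^\alpha$. For example, $\alpha=1.9$ and $K=10^{20}$ give $\beta_1\approx1.13$ and $\beta_3/\beta_2\approx1.445$, with $(1.445)^{1.9}>2$. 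So $p_{3,i}\le 2q_{3,i}$ cannot be proved uniformly in $K$ by any bookkeeping.

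\textbf{The fix.} Move the starting round by one, or enlarge $\beta_1$. For $t\ge4$ your plan closes:
\begin{itemize}
\item $\beta_{t-1}\ge\beta_2\ge\frac{4}{2^{1/\alpha}-1}$ gives $(2^{1/\alpha}-1)\beta_{t-1}\ge4$.
\item The forced increment is at most $\frac{4}{3(2^{1/\alpha}-1)}<\frac{4}{3(\sqrt2-1)}<4$.
\item The $z/h$ increment is at most $\frac{\alpha-1}{\beta_{t-1}}<1$.
\end{itemize}

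This is also what the paper's proof actually delivers. It bounds $\beta_{t+1}/\beta_t\le1.36$ for $t\ge3$, using $\beta_t\ge\frac{4\log t}{2^{1/\alpha}-1}$ and $\max\le$ sum. That compares $p_{t+1}$ with $q_{t+1}$, i.e.\ from round $4$ onward. This is harmless for the regret analysis, since a bounded number of initial rounds only affects constants.
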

Lemma~\ref{lem: formal pt and qt} implies that we can still utilize the results of Lemma~\ref{lem: stab and pen} in terms of $q_{t,i}$.
Moreover, it clearly shows a correspondence between $z_{t+1}', h_{t+1}'$ in (\ref{eq: naive SPM MAB}), which are not efficiently computable at round $t$, and $z_{t}, h_{t}$ in (\ref{def: hz in MAB}), which can be easily computed at the end of round $t$.
Therefore, the regret upper bound in (\ref{eq: reg FTPL with naive SPM}) can be roughly expressed as follows, where we ignore the term related to $j_t$ in $h_t', z_t'$ and some constants for the purpose of illustration.
\begin{align*}
    \sum_{t=1}^T \frac{z_t'}{\beta_t} + (\beta_{t+1} - \beta_t)h'_{t+1}  &=\sum_{t=1}^{T-1} \qty(\frac{z_{t+1}'}{\beta_{t+1}} + (\beta_{t+1}-\beta_t)h_{t+1}') + \frac{z_1'}{\beta_1} + (\beta_{T+1} - \beta_T)h_{T+1}' \\
    &\lesssim \sum_{t=1}^{T-1} \qty(\frac{2^{\frac{1}{\alpha}} z_{t}}{\beta_{t+1}} + 2^{1-\frac{1}{\alpha}}(\beta_{t+1}-\beta_t)h_{t})  + \frac{z_1'}{\beta_1}  + \frac{z_T}{\beta_T h_T}h_{T+1}'\\\
    &\lesssim \sum_{t=1}^{T-1} \qty(\frac{2^{\frac{1}{\alpha}} z_{t}}{\beta_{t}} + 2^{1-\frac{1}{\alpha}}(\beta_{t+1}-\beta_t)h_{t}) \approx \sum_{t=1}^{T-1} 2 \frac{z_t}{\beta_t},
\end{align*}
which roughly recovers the formulation in Lemma~\ref{lem: regret decomposition all}.
The second line follows from the definition of $\beta_t$ in (\ref{def: lr in MAB}).
In third line, we ignore constant terms since the last term becomes negligible for large $T$ due to $\beta_T = \Omega(\log T)$.
While it may be possible to avoid the above difficulties directly using $p_t$, we find it technically more complicated due to the use of surrogate, whereas our construction leads to a considerably simpler and more convenient analysis.

\section{Proofs of Lemmas in multi-armed bandits}\label{app: mab}
In this section, we provide the proofs omitted in Section~\ref{sec: mab}.

\subsection{Proof on the relationship between surrogate probability and arm-selection probability}\label{app: ineq}
Here, we show that
\begin{equation*}
    w_{t,i} \leq p_{t,i}, \, \forall i \in [K], \, t \in \sN.
\end{equation*}
\begin{proof}
    Since FTPL plays an arm according to (\ref{def: ftpl it}), where all the perturbations are generated independently from the identical distribution, it is clear that the arm with the smallest cumulative loss $\hat{L}_{t,\cdot}$ will be of the highest probability to be selected.
    When $\hat{L}_{t,i}$ is $\sigma_{t,i}$-th smallest, then its arm-selection probability should be smaller than $1/\sigma_{t,i}$ since there exist $\sigma_{t,i}-1$ arms with smaller cumulative loss.

    The left $\hat{L}_t$ dependent bounds can be directly obtained by the definition of $w_{t,i}$ in (\ref{def: ftpl wt}) as follows.
    \begin{align*}
        w_{t,i} &= \int_0^\infty \frac{\alpha}{(z+\eta_t\hat{\uL}_{t,i}+1)^{\alpha+1}} \prod_{j \ne i} \qty(1- \frac{1}{(z+\eta_t \hat{\uL}_{t,i}+1)^\alpha})\dd z \\
        &\leq \int_0^\infty \frac{\alpha}{(z+\eta_t\hat{\uL}_{t,i}+1)^{\alpha+1}} \dd z = \frac{1}{(1+\eta_t\hat{\uL}_{t,i})^\alpha},
    \end{align*}
    which concludes the proof.
\end{proof}

\subsection{Proof of Lemma~\ref{lem: stab and pen}}\label{app: stab and pen proof}
While the overall proofs follow the results in previous FTPL analysis with Pareto perturbation~\citep{pmlr-v201-honda23a, lee2024follow, kim2025follow}, we provide the detailed proofs since we use the slightly tighter results than their presented results.
\begin{proof}
Let us start from proving the first results, which is the bound for the stability term.
    \paragraph{Stability analysis.}
    For any $\lambda\in \sR^K$, define
    \begin{align*}
        \phi_i'(\lambda)  := \frac{\partial \phi_i(\lambda)}{\partial \lambda_i} &= \int_0^\infty f'(z+\ul_i) \prod_{j \ne i} F(z+\ul_j) \dd z \\
        &=  \int_0^\infty \frac{-\alpha(\alpha+1)}{(z+\ul_i+1)^{\alpha+2}} \prod_{j \ne i} \qty(1- \frac{1}{(z+\ul_i+1)^\alpha})\dd z. \numberthis{\label{eq: decreasing phi'}}
    \end{align*}
    Then, by definition, we can obtain
    \begin{align*}
        \phi_i(\eta_t \hat{L}_t) - \phi_i(\eta_t(\hat{L}_t+ \hat{\ell}_{t,i}e_i) &\leq \int_0^{\eta_t \hat{\ell}_{t,i}} -\phi_i'(\eta_t\hat{L}_t + xe_i) \dd x\\ 
        &\leq \int_0^{\eta_t \hat{\ell}_{t,i}} -\phi_i'(\eta_t\hat{L}_t) \dd x \tag{$\because$ decreasing w.r.t. $\lambda_i$ by (\ref{eq: decreasing phi'})}\\ 
        &\leq-\eta_t \hat{\ell}_{t,i} \phi_i'(\eta_t \hat{L}_t).
    \end{align*}
    Therefore, we have
    \begin{align*}
         \E\qty[\hat{\ell}_{t,i}(\phi_i(\eta_t \hat{L}_t) - \phi_i(\eta_t(\hat{L}_t+ \hat{\ell}_{t,i}e_i)) \middle | \hat{L}_t] &\leq \E\qty[-\eta_t \hat{\ell}_{t,i}^2 \phi_i'(\eta_t \hat{L}_t) \middle | \hat{L}_t ] \numberthis{\label{eq: stab link mab}}\\
         &= \E\qty[-\I[i_t =i]\eta_t \ell_{t,i}^2 M_t^2 \phi_i'(\eta_t \hat{L}_t) \middle | \hat{L}_t ] \\
         &\leq  \E\qty[-\I[i_t =i] \eta_t \ell_{t,i}^2 \frac{2}{w_{t,i}^2}\phi_i'(\eta_t \hat{L}_t) \middle | \hat{L}_t ] \numberthis{\label{eq: varaince of CGR}}\\
         &\leq  \E\qty[\eta_t \frac{-2\phi_i'(\eta_t \hat{L}_t)}{w_{t,i}} \middle | \hat{L}_t ], \tag{$\because \ell_t \in [0,1]^K$}
    \end{align*}
    where (\ref{eq: varaince of CGR}) follows from $\E[M_t^2|\hat{L_t}, i_t] \leq 2/w_{t,i_t}^2$.
    Here, Lemma 9 in \citet{lee2024follow} shows that $-\phi_i'(\lambda)/\phi_i(\lambda)$ is monotonically increasing with respect to $\lambda_j$ for any $j\ne i$.
    Therefore, we have
    \begin{align*}
        \frac{-\phi_i'(\eta_t \hat{L}_t)}{\phi_i(\eta_t \hat{L}_t)} \leq \frac{-\phi_i'(\eta_t L^*)}{\phi_i(\eta_t L^*)}, \text{ where } L^*_j = \begin{cases}
            L_i, &\text{if } \sigma_{t,j} \leq \sigma_{t,i}, \\
            \infty, &\text{if } \sigma_{t,j}> \sigma_{t,i}.
        \end{cases}
    \end{align*}
    By definition of $L^*$, we have
    \begin{align*}
        \frac{-\phi_i'(\eta_t L^*)}{\phi_i(\eta_t L^*)} &= \frac{\int_0^\infty \frac{\alpha(\alpha+1)}{(z+\eta_t \uL_{t,i}+1)^{\alpha+2}} \qty(1-\frac{1}{(z+\eta_t \hat{\uL}_{t,i}+1)^\alpha})^{\sigma_{t,i}-1} \dd z}{\int_0^\infty \frac{\alpha}{(z+\eta_t \uL_{t,i}+1)^{\alpha+1}} \qty(1-\frac{1}{(z+\eta_t \hat{\uL}_{t,i}+1)^\alpha})^{\sigma_{t,i}-1} \dd z} \\
        &= \frac{(\alpha+1)\int_0^{\frac{1}{(1+\eta_t\hat{\uL}_{t,i})^\alpha}} y^{1/\alpha} (1-y)^{\sigma_{t,i}-1} \dd y}{\int_0^{\frac{1}{(1+\eta_t\hat{\uL}_{t,i})^\alpha}}(1-y)^{\sigma_{t,i}-1} \dd y} \tag{$y= 1/(z+\eta_t\hat{\uL}_{t,i}+1)^\alpha$} \\
        &= (\alpha+1)\frac{B(1/(1+\eta_{t}\hat{\uL}_{t,i})^\alpha; 1+1/\alpha, \sigma_{t,i})}{B(1/(1+\eta_{t}\hat{\uL}_{t,i})^\alpha; 1, \sigma_{t,i})} \tag{incomplete Beta function $B(x;a,b)$}\\
        &\leq (\alpha+1) \frac{e\alpha}{(\alpha+1)} \frac{1}{(1+\eta_t\hat{\uL}_{t,i})} \tag{by (36) of \citet{lee2024follow}}
    \end{align*}
    which concludes the proof for the first term in the stability term.
    For the second term, \citet{lee2024follow} showed that
    \begin{align*}
        \frac{B(x; 1+1/\alpha, \sigma_{t,i})}{B(x; 1, \sigma_{t,i})} &\leq \frac{B(1+1/\alpha, \sigma_{t,i})}{B(1, \sigma_{t,i})} \tag{Beta function $B(a,b)$}\\
        &\leq \frac{2\alpha}{\alpha+1} \Gamma\qty(1+\frac{1}{\alpha}) \frac{1}{\sigma_{t,i}^{1/\alpha}} \tag{Gamma function $\Gamma(a)$}
    \end{align*}
    Therefore, we obtain that
    \begin{align*}
         \E\qty[\hat{\ell}_{t,i}(\phi_i(\eta_t \hat{L}_t) - \phi_i(\eta_t(\hat{L}_t+ \hat{\ell}_{t,i}e_i)) \middle | \hat{L}_t] \leq 2\eta_t \min\qty(\frac{e\alpha}{1+\eta_t \hat{\uL}_{t,i}}, \frac{2\alpha\Gamma(1+1/\alpha)}{\sigma_{t,i}^{1/\alpha}}).
    \end{align*}
    Since $\Gamma(1+1/\alpha) < \Gamma(2)=1$ for $\alpha >1$, it concludes the proof for the stability term.

    \paragraph{Penalty analysis.}
     By definition, we have for any $i\in [K]$ that
    \begin{align*}
        \E\qty[\I[i_t= i] r_{t,i}  \middle | \hat{L}_t] &=\int_0^\infty \frac{\alpha}{(z+\eta_t \hat{\uL}_{t,i}+1)^\alpha} \prod_{j \ne i} \qty(1-\frac{1}{(z+\eta_t\hat{\uL}_{t,j}+1)^\alpha}) \dd z \\
        &\leq \int_0^\infty \frac{\alpha}{(z+\eta_t \hat{\uL}_{t,i}+1)^\alpha} \dd z \\
        &\leq \frac{\alpha}{\alpha-1} \frac{1}{(1+\eta_t\hat{\uL}_{t,i})^{\alpha-1}}.
    \end{align*}
    For the second part of penalty term, since $F(z+\lambda_j)$ is increasing with respect to $\lambda_j$, we have
    \begin{align*}
        \E\qty[\I[i_t= i] r_{t,i} \middle | \hat{L}_t] &= \int_0^\infty \frac{\alpha}{(z+\eta_t \hat{\uL}_{t,i}+1)^\alpha} \prod_{j \ne i} \qty(1-\frac{1}{(z+\eta_t\hat{\uL}_{t,j}+1)^\alpha}) \dd z \\
        &\leq \int_0^\infty \frac{\alpha}{(z+\eta_t \hat{\uL}_{t,i}+1)^\alpha} \qty(1-\frac{1}{(z+\eta_t\hat{\uL}_{t,i}+1)^\alpha})^{\sigma_{t,i}-1} \dd z \\
        &= \int_{0}^{\frac{1}{(1+\eta_t\hat{\uL}_{t,i})^\alpha}} y^{-\frac{1}{\alpha}} (1-y)^{\sigma_{t,i}-1} \dd y \tag{$y = 1/(z+\eta_t\hat{\uL}_{t,i}+1)^\alpha$} \\
        &\leq  \int_{0}^{1} y^{-\frac{1}{\alpha}} (1-y)^{\sigma_{t,i}-1} \dd y \\
        &= B\qty(1-\frac{1}{\alpha},\sigma_{t,i}) = \frac{\Gamma(1-1/\alpha)\Gamma(\sigma_{t,i})}{\Gamma(\sigma_{t,i}+1-1/\alpha)}.
    \end{align*}
    By Gautschi's inequality, we have for any $x>0$ and $s \in (0,1)$ that
    \begin{equation*}
        \frac{\Gamma(x)}{\Gamma(x+s)} < \frac{(x+1)^{1-s}}{x}.
    \end{equation*}
    Since $i\in \sN$ and $\alpha>1$, we have
    \begin{align*}
        \frac{\Gamma(\sigma_{t,i})}{\Gamma(\sigma_{t,i}+1-1/\alpha)} \leq \frac{(\sigma_{t,i}+1)^{\frac{1}{\alpha}}}{\sigma_{t,i}} &= \frac{\sigma_{t,i}^{1/\alpha}}{\sigma_{t,i}} \qty(\frac{\sigma_{t,i}+1}{\sigma_{t,i}})^{\frac{1}{\alpha}} \\
        &\leq 2^{1/\alpha}\frac{1}{\sigma_{t,i}^{1-1/\alpha}}.
    \end{align*}
    While we can show that $\frac{\alpha}{\alpha-1} \leq 2^{\frac{1}{\alpha}}\Gamma(1-1/\alpha)$ for $\alpha >1$, for clear $\alpha$ dependency, we show that $2^{\frac{1}{\alpha}}\Gamma(1-1/\alpha) \leq \frac{2\alpha}{\alpha-1}$ for $\alpha >1$.
    This is equivalent to show that
    \begin{align*}
        2^{\frac{1}{\alpha}} \qty(1-\frac{1}{\alpha}) \Gamma\qty(1-\frac{1}{\alpha}) \leq 2.
    \end{align*}
    By the property of the Gamma function, $\Gamma(x+1)=x\Gamma(x)$, this is also equivalent to show
    \begin{equation*}
        2^{\frac{1}{\alpha}} \Gamma\qty(2-\frac{1}{\alpha}) \leq 2.
    \end{equation*}
    Since $\Gamma(x) \leq 1$ for $x \in [1,2]$ and $2^{1/\alpha} <2$ for $\alpha >1$, the above inequality is valid.
\end{proof}

\begin{remark}
    In (36) of \cite{lee2024follow}, the original inequality includes an additional factor $1/\sigma_{t,i}$ in the upper bound, which does not hold in general.
    However, the authors also used a version of the bound without this factor, which is valid. 
    Therefore, this issue does not affect the correctness of their results; we note it here for completeness.
\end{remark}

\subsection{Proof of Lemma~\ref{lem: pt and qt} (Lemma~\ref{lem: formal pt and qt})}\label{app: pt and qt}
\begin{proof}
    For any $i \in [K]$ and $t \in \sN$, by definition of $\beta_t$ in (\ref{def: lr in MAB}), we have
    \begin{align*}
        \frac{\frac{1}{(1+\eta_{t+1}\hat{\uL}_{t+1,i})^\alpha}}{\frac{1}{(1+\eta_{t}\hat{\uL}_{t+1,i})^\alpha}} = \qty(\frac{1+\eta_{t}\hat{\uL}_{t+1,i}}{1+\eta_{t+1}\hat{\uL}_{t+1,i}})^\alpha = \qty(\frac{\beta_{t+1}}{\beta_t}\frac{\beta_t+\hat{\uL}_{t+1,i}}{\beta_{t+1}+\hat{\uL}_{t+1,i}})^\alpha \leq \qty(\frac{\beta_{t+1}}{\beta_t})^\alpha.
    \end{align*}
    \paragraph{When $\alpha \geq 2$.}
    In this case, by definition of $\beta_t$ in (\ref{def: lr in MAB}), the result directly follows.
    \paragraph{When $\alpha \in (1,2)$.}
    By the update rule of the learning rates, it holds that
    \begin{align*}
        \frac{\beta_{t+1}}{\beta_t} = 1 + \frac{1}{\beta_t}\max\qty(\frac{z_t}{\beta_t h_t}, \frac{4}{(2^{1/\alpha}-1)t}),
    \end{align*}
    where
    \begin{equation*}
        \frac{z_t}{h_t} =  \frac{\sum_{i \ne j_{t+1}} \alpha q_{t+1,i}^{1/\alpha}}{\sum_{i\ne j_{t+1}} \frac{\alpha}{\alpha-1}q_{t+1,i}^{1-1/\alpha}} = (\alpha-1) \frac{\sum_{i \ne j_{t+1}}q_{t+1,i}^{1/\alpha}}{\sum_{i\ne j_{t+1}} q_{t+1,i}^{1-1/\alpha}}.
    \end{equation*}
    When $\alpha \in (1,2)$, $\frac{1}{\alpha} \geq 1-\frac{1}{\alpha}$ holds.
    Since $q_{t,i} \in (0,1/2]$ by its definition for any $i \ne j_{t+1}$ and $t \in \sN$, we have $q_{t+1,i}^{\frac{1}{\alpha}} \leq q_{t+1,i}^{1-\frac{1}{\alpha}}$.
    Therefore, $z_t/h_t \leq 1$ for any $t$, which implies
    \begin{align*}
        \frac{\beta_{t+1}}{\beta_t} \leq 1 + \frac{1}{\beta_t} \max\qty(\frac{1}{\beta_t}, \frac{4}{(2^{1/\alpha}-1)t}).
    \end{align*}
    Here, Lemma~\ref{lem: lr lb in MAB} shows that $\beta_t \geq \frac{4\log t}{(2^{1/\alpha}-1)}$ for $\alpha \in (1,2)$.
    This implies that for $t \geq 3$ 
    \begin{align*}
        \frac{\beta_{t+1}}{\beta_t} \leq 1 + \frac{2^{1/\alpha}-1}{4\log t} \max\qty( \frac{2^{1/\alpha}-1}{4\log t}, \frac{4}{(2^{1/\alpha}-1)t}) &\leq 1 + \frac{(2^{1/\alpha}-1)^2}{16 \log^2 t}+  \frac{1}{t \log t} \\
        &\leq 1 + \frac{1}{16 \log^2 3}+  \frac{1}{3\log 3} \\
        &\leq 1.36.
    \end{align*}
    Since $(1.36)^2 < 2$, we obtain the desired results.
\end{proof}

\subsection{Proof of Lemma~\ref{lem: regret decomposition all}}
To decompose the regret in the desired formulation, we need to relocate the contribution from $j_t$ in the stability to those from $i\ne j_t$.
For this purpose, we need the following lemma, whose proof is given in Appendix~\ref{app: jt decomp MAB}.
\begin{lemma}\label{lem: jt decomposition MAB}
    Algorithm~\ref{alg: FTPL MAB} with shape $\alpha>1$ and $\beta_t$ defined in (\ref{def: lr in MAB}) satisfies that
    \begin{align*}
        \E\qty[\hat{\ell}_{t,j_t}(\phi_{j_t}(\eta_t \hat{L}_t) - \phi_{j_t}(\eta_t(\hat{L}_t+ \hat{\ell}_{t,j_t}e_{j_t})) \middle | \hat{L}_t] \leq \gO\qty(\frac{\sum_{i \ne j_t} \alpha p_{t,i}^{1/\alpha}}{\beta_t}).
    \end{align*}
    The above inequality holds for all $t$ in the case of $\alpha \in (1,2)$ and $t\geq t_0(\alpha,K)$ when $\alpha \geq 2$.
\end{lemma}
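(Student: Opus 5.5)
The plan is to split according to the event $\gE_{t,\alpha}$ in (\ref{def: event for decomposition for small alpha}), with the threshold made $\alpha$-dependent for $\alpha\ge 2$ as indicated in Section~\ref{sec: FTPL CGR}. On the complement of $\gE_{t,\alpha}$ I would simply reuse Lemma~\ref{lem: stab and pen}. On $\gE_{t,\alpha}$ itself I would rewrite the best arm's stability through the other arms, using that $\phi$ is invariant under shifts.

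\textbf{Off the event.} By Lemma~\ref{lem: stab and pen} with $i=j_t$ (so $\sigma_{t,j_t}=1$ and $p_{t,j_t}=1$), the left-hand side is at most $2e\alpha/\beta_t$. It therefore suffices to show $\sum_{i\ne j_t}p_{t,i}^{1/\alpha}\ge c$ for an absolute constant $c$ whenever $\sum_{i\ne j_t}(1+\eta_t\hat{\uL}_{t,i})^{-\alpha}\ge 1/2$. Since $p_{t,i}\le 1$, we have $p_{t,i}^{1/\alpha}\ge p_{t,i}$, so it is enough to lower bound $\sum_{i\ne j_t}\min(a_i,1/\sigma_{t,i})$ with $a_i=(1+\eta_t\hat{\uL}_{t,i})^{-\alpha}$. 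Two cases cover this. If any $a_i$ is truncated, then, because $a_i$ decreases in the rank, all arms of smaller rank are also truncated, and $\sum_{n=2}^{m}1/n$ already gives a constant. If no $a_i$ is truncated, the sum is at least $1/2$.

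\textbf{On the event.} Here $w_{t,j_t}$ is bounded below by a constant. Because $\sigma_{t,j_t}=1$ and $\hat{\uL}_{t,j_t}=0$, CGR~II reduces to plain geometric resampling for $j_t$. Hence $\hat{\ell}_{t,j_t}=M_t\ell_{t,j_t}$ with $\E[M_t^2\mid\hat{L}_t,i_t=j_t]\le 2/w_{t,j_t}^2=\gO(1)$, and $M_t\le G_t$.

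\emph{Shift invariance.} Since $\phi$ is invariant under adding a constant to all coordinates, adding $x$ to coordinate $j_t$ is the same as subtracting $x$ from every other coordinate. Thus
\[
\phi_{j_t}(\lambda)-\phi_{j_t}(\lambda+xe_{j_t})=\sum_{i\ne j_t}\bigl(\phi_i(\lambda+xe_{j_t})-\phi_i(\lambda)\bigr).
\]
Each summand is an increase of $\phi_i$ caused by lowering the gap $\ul_i$. By the fundamental theorem of calculus it is at most $\int_0^{x}-\phi_i'(\cdot)$, evaluated along the path where the gap of arm $i$ shrinks. I would bound $-\phi_i'\le \frac{e\alpha}{1+\ul_i}\phi_i$ using the ratio estimate from the proof of Lemma~\ref{lem: stab and pen}. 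Combined with $\phi_i\le p_i$, this gives $-\phi_i'\lesssim \alpha p_i^{1+1/\alpha}\le \alpha p_i^{1/\alpha}$.

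\emph{Controlling the shift.} The path bound is only useful while $\eta_t x$ is small compared with the gaps $\eta_t\hat{\uL}_{t,i}$, so that $p_i$ stays within a constant factor along the path. To ensure this I would use $x\le G_t$ together with the lower bound on $\beta_t$. For $\alpha\in(1,2)$, $G_t=2\log t$ on the event, and $\beta_t\ge 4\log t/(2^{1/\alpha}-1)$ by the $\Theta(1/t)$ term in (\ref{def: lr in MAB}). Then $\eta_t x\le 1/2$, so $1+\ul_i-\eta_t x\ge (1+\ul_i)/2$ and $p_i$ changes by at most a factor $2^\alpha$. Multiplying by $\hat{\ell}_{t,j_t}$ and taking expectations then gives
\[
\eta_t\,\E[M_t^2]\,w_{t,j_t}\sum_{i\ne j_t}\alpha p_{t,i}^{1/\alpha}=\gO\Bigl(\frac{\alpha}{\beta_t}\sum_{i\ne j_t}p_{t,i}^{1/\alpha}\Bigr).
\]

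\textbf{Main obstacle: $\alpha\ge 2$.} In this regime $G_t$ stays $K\log t$, and nothing forces $\beta_t$ to grow. I expect this to be the delicate step. I would first try the argument above for $t\ge t_0(\alpha,K)$, using that $\beta_t$ is nondecreasing and $\beta_1\ge 2\alpha K^{1/2-1/\alpha}$. This should give $\eta_t K\log t$ small once $t$ is large enough, which is plausibly where $t_0=\gO(\alpha^2K^2\log^2(\alpha K))$ comes from. If that fails, I would handle separately the low-probability event that $M_t$ exceeds $\gO(\log t)$: on that event I would bound the stability crudely by $\phi_{j_t}\le 1$ times $\hat{\ell}_{t,j_t}$, and use the geometric tail $(1-w_{t,j_t})^{m}$ with $w_{t,j_t}\ge 1/2$.
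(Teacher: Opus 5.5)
\textbf{Parts that match the paper.} Your treatment of $\gE_{t,\alpha}^c$, and of $\gE_{t,\alpha}$ for $\alpha\in(1,2)$, is essentially the paper's argument. The paper imports from Lemma~25 of \citet{lee2024follow} the bound you derive from $\sum_i\phi_i=1$ and the ratio estimate. It then uses $\hat{\ell}_{t,j_t}\le G_t=2\log t\le\zeta_\alpha\beta_t$, as you do.

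There is one slip off the event. A truncated arm does not force all arms of smaller rank to be truncated; for example, take $a_2=0.4<1/2$ and $a_3=0.35\ge 1/3$. What does hold, with your $a_i$, is $p_{t,j}\ge\min(a_i,1/\sigma_{t,j})\ge 1/\sigma_{t,i}$ for every $j$ with $\sigma_{t,j}\le\sigma_{t,i}$. This gives $\sum_{j\ne j_t}p_{t,j}\ge 1-1/\sigma_{t,i}\ge 1/2$, as in the paper.

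\textbf{The gap: the case $\alpha\ge 2$.} Your premise that nothing forces $\beta_t$ to grow is false. The substitute you propose cannot work: monotonicity plus $\beta_1\ge 2\alpha K^{1/2-1/\alpha}$ only gives $\eta_t K\log t\le K\log t/\beta_1$, which diverges, so no $t_0$ exists along that route.

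The missing fact is Lemma~\ref{lem: lr lb in MAB}. For $\alpha\ge 2$,
$z_t/h_t=(\alpha-1)\sum_{i\ne j_{t+1}}q_{t+1,i}^{1/\alpha}/\sum_{i\ne j_{t+1}}q_{t+1,i}^{1-1/\alpha}\ge\alpha-1\ge 1$,
because $q_{t+1,i}\le 1/2$ for $i\ne j_{t+1}$ and $1/\alpha\le 1-1/\alpha$. The capped branch raises $\beta_t$ by $(2^{1/\alpha}-1)\beta_t\ge 2\log 2>1/\beta_t$, using $\beta_t\ge\beta_1\ge 2\alpha$. Hence $\beta_{t+1}^2\ge\beta_t^2+2$, so $\beta_t\ge\sqrt{\beta_1^2+2(t-1)}$.

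With this growth, $K\log t\le\zeta_\alpha\beta_t$ (where $\zeta_\alpha=1-2^{-1/(\alpha+1)}$, of order $1/\alpha$) holds once $K\log t\le\zeta_\alpha\sqrt{2(t-1)}$. Solving that inequality via the Lambert $W$ function is exactly where $t_0=\gO(\alpha^2K^2\log^2(\alpha K))$ comes from. After that point, your $\alpha\in(1,2)$ argument applies verbatim with $G_t=K\log t$.

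\textbf{The fallback does not close the gap as written.}
\begin{enumerate}
\item On $\{M_t\le C\log t\}$ the path argument still needs $\eta_t C\log t$ small, which is precisely growth of $\beta_t$.
\item Bounding the tail through $w_{t,j_t}\ge 1/2$ yields an additive term of order $2^{-m}(m+1)$. This cannot be dominated by $\frac{\alpha}{\beta_t}\sum_{i\ne j_t}p_{t,i}^{1/\alpha}$, because that quantity can be arbitrarily small on $\gE_{t,\alpha}$.
\end{enumerate}
A tail argument of the required per-round form would need a different construction. For instance, threshold at $\lfloor\zeta_\alpha\beta_t\rfloor$, and make the tail proportional to $1-w_{t,j_t}\le\sum_{i\ne j_t}(1+\eta_t\hat{\uL}_{t,i})^{-\alpha}$ rather than to an absolute constant.
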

\begin{proof}
\textbf{of Lemma~\ref{lem: regret decomposition all} }
By using Lemma 4 of \citet{kim2025follow} with Lemma 18 of \citet{lee2024follow}, it holds that
\begin{multline}\label{eq: decomposition original form}
   \Reg_{\text{FTPL}}(T) \leq \sum_{t=1}^T\E\qty[ \inp{\hat{\ell}_t}{\phi(\eta_t \hat{L}_t) -\phi(\eta_t \hat{L}_{t+1}) }]\\ + \qty(\beta_{t+1} - \beta_t)\E\qty[r_{t+1,i_{t+1}}- r_{t+1,i^*}]  + \beta_1 \qty(\frac{\alpha}{\alpha-1})^2 K^{\frac{1}{\alpha}}.
\end{multline}
For the penalty term, which is the second term of (\ref{eq: decomposition original form}), we have
\begin{equation*}
    \E\qty[r_{t+1,i_{t+1}}- r_{t+1,i^*}] = \E\qty[\E\qty[r_{t+1,i_{t+1}}- r_{t+1,i^*}\middle| \hat{L}_{t+1}]] = \E\qty[\E\qty[r_{t+1,i_{t+1}}- r_{t+1,j_{t+1}}\middle| \hat{L}_{t+1}]]
\end{equation*}
since $j_{t+1}$ is fixed given $\hat{L}_{t+1}$ and $r_{t,i}$s are independently distributed from the identical distribution. 

For the stability term, the first term of (\ref{eq: decomposition original form}), Lemmas~\ref{lem: stab and pen} and~\ref{lem: jt decomposition MAB} imply that
\begin{align*}
   \E\qty[ \inp{\hat{\ell}_t}{\phi(\eta_t \hat{L}_t) -\phi(\eta_t \hat{L}_{t+1}) }\middle| \hat{L}_t] &\leq  \frac{\sum_{i\ne j_t} 2e\alpha p_{t,i}^{\frac{1}{\alpha}}}{\beta_t} + \E\qty[\hat{\ell}_{t,j_t}(\phi_{j_t}(\eta_t \hat{L}_t) - \phi_{j_t}(\eta_t(\hat{L}_t+ \hat{\ell}_t)) \middle | \hat{L}_t]\\
   &\leq \gO\qty(\frac{\alpha \sum_{i \ne j_t} p_{t,i}^{1/\alpha}}{\beta_t}),
\end{align*}
for all $t \in \sN$ if $\alpha \in (1,2)$ and $t \geq t_0(\alpha,K)$ for $\alpha \geq 2$.
Therefore with Lemma~\ref{lem: stab and pen}, (\ref{eq: decomposition original form}) is written
\begin{align*}
    \Reg_{\text{FTPL}}(T) \leq \sum_{t=1}^T \gO\qty(\frac{\alpha \sum_{i \ne j_t} p_{t,i}^{\frac{1}{\alpha}}}{\beta_t}) + (\beta_{t+1}-\beta_t)\sum_{i\ne j_{t+1}} \frac{2\alpha}{\alpha-1} p_{t+1,i}^{1-\frac{1}{\alpha}}  + \beta_1 \qty(\frac{\alpha}{\alpha-1})^2 K^{\frac{1}{\alpha}}.
\end{align*}
Here, we have
\begin{align*}
    \sum_{t=1}^T \frac{\alpha \sum_{i \ne j_t} p_{t,i}^{\frac{1}{\alpha}}}{\beta_t}& = \frac{\alpha \sum_{i \ne j_{1}} p_{1,i}^{1/\alpha}}{\beta_1} + \sum_{t=1}^{T-1} \frac{\alpha \sum_{i \ne j_{t+1}} p_{t+1,i}^{\frac{1}{\alpha}}}{\beta_{t+1}} \\
    &\leq \frac{\alpha \sum_{i \ne j_{1}} p_{1,i}^{1/\alpha}}{\beta_1} + \sum_{t=1}^{T-1} \frac{\alpha \sum_{i \ne j_{t+1}} p_{t+1,i}^{\frac{1}{\alpha}}}{\beta_{t}} \tag{$\because \beta_{t+1} \geq \beta_t$} \\
    &= \frac{\alpha \sum_{n=2}^K n^{-1/\alpha}}{\beta_1} + \sum_{t=1}^{T-1} \frac{\alpha \sum_{i \ne j_{t+1}} p_{t+1,i}^{\frac{1}{\alpha}}}{\beta_{t}}  \tag{$\because \hat{L}_1 =0$} \\
    &\leq \frac{\alpha^2}{\alpha-1} \frac{K^{1-\frac{1}{\alpha}}}{\beta_1} + \sum_{t=1}^{T-1} \frac{\alpha \sum_{i \ne j_{t+1}} p_{t+1,i}^{\frac{1}{\alpha}}}{\beta_{t}},
\end{align*}
which implies that 
\begin{align*}
    \Reg_{\text{FTPL}}(T) &\leq  \sum_{t=1}^{T-1} \gO\qty(\frac{\alpha \sum_{i \ne j_{t+1}}p_{t+1,i}^{1/\alpha}}{\beta_t}) + (\beta_{t+1}-\beta_t) \gO\qty(\frac{\alpha }{\alpha-1}\sum_{i \ne j_{t+1}}p_{t+1,i}^{1-1/\alpha}) \\
    &\hspace{3em}+ \frac{1}{\beta_1} \frac{\alpha^2 K^{1-1/\alpha}}{\alpha-1} + \beta_1 \qty(\frac{\alpha}{\alpha-1})^2 K^{\frac{1}{\alpha}} + \frac{z_T h_{T+1}'}{\beta_T h_T} +\I[\alpha \geq 2] t_0(\alpha,K) 
\end{align*}
Since $\beta_{T+1} -\beta_T = \frac{z_T}{\beta_T h_T} \leq \gO(\alpha K/\beta_T)$, whenever $T$ is sufficiently large, the last term becomes negligible since $\beta_T \geq \log T$ for any $\alpha >1$.
Then, Lemma~\ref{lem: formal pt and qt} (Lemma~\ref{lem: pt and qt}) implies that
\begin{align*}
    \Reg_{\text{FTPL}}(T) &\leq  \sum_{t=1}^{T-1} \gO\qty(\frac{\alpha \sum_{i \ne j_{t+1}}q_{t+1,i}^{1/\alpha}}{\beta_t}) + (\beta_{t+1}-\beta_t) \gO\qty(\frac{\alpha }{\alpha-1}\sum_{i \ne j_{t+1}}q_{t+1,i}^{1-1/\alpha}) \\
    &\hspace{1em}+ \frac{1}{\beta_1} \frac{\alpha^2 K^{1-1/\alpha}}{\alpha-1} + \beta_1 \qty(\frac{\alpha}{\alpha-1})^2 K^{\frac{1}{\alpha}}+\I[\alpha \geq 2] t_0(\alpha,K) + \I[\alpha \in (1,2)]2,
\end{align*}
which concludes the proof.
Note that the additional $\log T$ term for $\alpha \in (1,2)$ comes from the summation of $\frac{4}{(2^{1/\alpha}-1)t} \cdot \frac{\alpha}{\alpha-1} \sum_{i \ne j_{t+1}}q_{t+1,i}^{1-1/\alpha}$.
\end{proof}
\subsection{Proof of Lemma~\ref{lem: jt decomposition MAB}}\label{app: jt decomp MAB}
Before the proof of Lemma~\ref{lem: jt decomposition MAB}, we provide a lower bound on the learning rates $\beta_t$ defined in~(\ref{def: lr in MAB}). Although this bound follows directly from its definition, we include the proof for completeness in Appendix~\ref{app: lr lb}.
\begin{lemma}\label{lem: lr lb in MAB}
    For $\beta_t$ defined in (\ref{def: lr in MAB}) satisfies that
    \begin{equation*}
        \beta_t \geq \begin{cases}
            \sqrt{\beta_1^2 + 2(t-1)}, &\text{if } \alpha \geq 2, \\
            \beta_1 + \frac{4\log t}{(2^{1/\alpha}-1)}, &\text{if } \alpha \in (1,2).
        \end{cases}
    \end{equation*}
\end{lemma}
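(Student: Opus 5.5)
We prove the two cases separately. Each is a direct induction on $t$ from the update rule (\ref{def: lr in MAB}), and the base case $t=1$ is immediate in both.

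\emph{Case $\alpha \in (1,2)$.} The update gives
\[
\beta_{t+1}-\beta_t \geq \frac{4}{(2^{1/\alpha}-1)}\frac{1}{t}.
\]
Summing over rounds yields
\[
\beta_t \geq \beta_1 + \frac{4}{2^{1/\alpha}-1}\sum_{s=1}^{t-1}\frac{1}{s}.
\]
We then use $\sum_{s=1}^{t-1} 1/s \geq \log t$, which follows from comparing the sum with $\int_1^t \dd x/x$.

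\emph{Case $\alpha\geq 2$.} Here the argument is less trivial, because the increment is capped: $\beta_{t+1}=\min\bigl(2^{1/\alpha}\beta_t,\ \beta_t + z_t/(\beta_t h_t)\bigr)$. We aim for the squared recursion $\beta_{t+1}^2 \geq \beta_t^2 + 2$, which telescopes to $\beta_t^2\geq \beta_1^2+2(t-1)$.

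We first lower bound $z_t/h_t$. For $\alpha\geq 2$ and $q\in(0,1]$ we have $q^{1/\alpha}\geq q^{1-1/\alpha}$, since $1/\alpha \leq 1-1/\alpha$. Therefore
\[
\frac{z_t}{h_t}=(\alpha-1)\frac{\sum_{i\ne j_{t+1}} q_{t+1,i}^{1/\alpha}}{\sum_{i\ne j_{t+1}} q_{t+1,i}^{1-1/\alpha}} \geq \alpha-1\geq 1.
\]
This sum is nonempty for $K\geq2$, and each $q_{t+1,i}>0$ because $\hat{L}_{t+1}$ is finite.

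If the second branch of the minimum is active, then $\beta_{t+1}\geq \beta_t + 1/\beta_t$. Squaring gives $\beta_{t+1}^2 \geq \beta_t^2 + 2 + 1/\beta_t^2 \geq \beta_t^2+2$.

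If the first branch is active, then $\beta_{t+1}^2 = 2^{2/\alpha}\beta_t^2$. It suffices that $(2^{2/\alpha}-1)\beta_t^2\geq 2$. Using $2^{x}-1\geq x\log 2$, this holds whenever $\beta_t^2 \geq \alpha/\log 2$.

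Since $\beta_t$ is nondecreasing, it is enough to check $\beta_1^2 \geq \alpha/\log 2$. The assumption $\beta_1\geq 2\alpha K^{1/2-1/\alpha}\geq 2\alpha$ (valid as $K\geq1$ and $\alpha\geq2$) gives $\beta_1^2\geq 4\alpha^2 \geq \alpha/\log 2$.

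The only potential obstacle is this capped branch. It is resolved by the requirement on $\beta_1$ in (\ref{def: lr in MAB}), which guarantees that the multiplicative cap $2^{1/\alpha}$ still produces an additive increase of at least $2$ in $\beta_t^2$.
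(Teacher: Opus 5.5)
Your proof is correct and follows essentially the same route as the paper: for $\alpha\in(1,2)$ you telescope the $\frac{4}{(2^{1/\alpha}-1)t}$ increment, and for $\alpha\ge 2$ you use $z_t/h_t\ge 1$ together with the size of $\beta_1$ to handle the capped branch. The only cosmetic difference is the order of steps: the paper first shows the capped increment satisfies $(2^{1/\alpha}-1)\beta_t\ge 2\log 2\ge 1/\beta_t$, so that both branches give $\beta_{t+1}\ge\beta_t+1/\beta_t$ before squaring, whereas you square each branch separately.
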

\begin{proof}
\textbf{of Lemma~\ref{lem: jt decomposition MAB} }
    Define a variable $\xi_\alpha$ as
    \begin{equation*}
        \xi_\alpha = \begin{cases}
            \frac{1}{2}, &\text{if } \alpha \in (1,2), \\
            \frac{1}{(2-2^{-1/(\alpha+1)})^\alpha} , &\text{if } \alpha \geq 2.
        \end{cases}
    \end{equation*}
    Note that $\xi_\alpha \geq 1/2$ for all $\alpha>1$ by definition.
    Then, we consider the event $\gE_{t,\alpha}$, which is
    \begin{equation*}
        \gE_{t,\alpha} = \qty{\sum_{i \ne j_t} \frac{1}{(1+\eta_t\hat{\uL}_{t,i})^\alpha} < \xi_\alpha}.
    \end{equation*}
    Note that by the relationship $w_{t,i}\leq p_{t,i} \leq 1/(1+\eta_t\hat{\uL}_{t,i})^\alpha$, (see Section~\ref{app: ineq} for the first inequality), that
     \begin{equation}\label{eq: wjt lower bound on E}
         \sum_{i \ne j_t} w_{t,i} \leq 
            \xi_\alpha, \text{ and } w_{t,j_t} \geq 
            1-\xi_\alpha.
     \end{equation}
     Then, we consider the case $\gE_{t,\alpha}$ and $\gE_{t,\alpha}^c$ separately.
     \subsubsection{The case of $\gE_{t,\alpha}^c$}\label{app: Etc case for jt}
        Even on $\gE_{t,\alpha}^c$, the results in Lemma~\ref{lem: stab and pen} are still valid for $j_t$, which implies that
        \begin{equation*}
            \E\qty[\hat{\ell}_{t,j_t}\qty(\phi_{j_t}(\eta_t \hat{L}_t) - \phi_{j_t}(\eta_t \hat{L}_{t+1}))\middle| \hat{L}_t] \leq \frac{e\alpha}{\beta_t} p_{t,j_t}^{1/\alpha} = \frac{e\alpha}{\beta_t},
        \end{equation*}
        where the last equality holds since $\sigma_{t,j_t}=1$ and $\hat{\uL}_{t,j_t}=0$ must hold by definition.
        Therefore, it suffices to show that $1\leq a\sum_{i \ne j_t} p_{t,i}^{1/\alpha}$ for some $K$-independent constant $a$, where we show the results for the case of $a=2$.
        Note that $\sum_{i\ne j_t} 1/(1+\eta_t\hat{\uL}_{t,i})^\alpha \geq \xi_\alpha \geq 1/2$ by definition of $\gE_{t,\alpha}^c$.
        
        Firstly, assume for all $i\ne j_t$ that
        \begin{equation*}
            \frac{1}{(1+\eta_t \hat{\uL}_{t,i})^\alpha} \leq \frac{1}{\sigma_{t,i}} \iff p_{t,i} = \frac{1}{(1+\eta_{t}\hat{\uL}_{t,i})^\alpha}.
        \end{equation*}
        Then by definition of $\gE_{t,\alpha}^c$, we obtain that
        \begin{align*}
           \frac{1}{2} \leq \sum_{i \ne j_t} \frac{1}{(1+\eta_t\hat{\uL}_{t,i})^\alpha} \leq \sum_{i \ne j_t} \frac{1}{1+\eta_t\hat{\uL}_{t,i}} = \sum_{i \ne j_t} p_{t,i}^{\frac{1}{\alpha}}.
        \end{align*}
        Next, let us consider the case where there exists an arm $i \ne j_t$ such that $1/(1+\eta_t\hat{\uL}_{t,i})^\alpha \geq 1/\sigma_{t,i}$.
        In this case, since $1/(1+z)^\alpha$ is decreasing with respect to $z$, arms $j$ with $\sigma_{t,j} \leq \sigma_{t,i}$ should satisfy that
        \begin{equation*}
            \frac{1}{(1+\eta_t \hat{\uL}_{t,j})^\alpha} \geq \frac{1}{(1+\eta_t \hat{\uL}_{t,i})^\alpha} \geq \frac{1}{\sigma_{t,i}}, \, \forall j \text{ s.t. }\sigma_{t,j} \leq \sigma_{t,i}.
        \end{equation*}
        This implies that
        \begin{align*}
            \sum_{i\ne j_t} p_{t,i}^{\frac{1}{\alpha}} \geq  \sum_{i\ne j_t} p_{t,i} \geq \sum_{j : \sigma_{t,j} \leq \sigma_{t,i}, j\ne j_t} \frac{1}{\sigma_{t,i}} = 1-\frac{1}{\sigma_{t,i}} \geq \frac{1}{2}. 
        \end{align*}
        Therefore, for any cases, we obtain that
        \begin{equation*}
            \E\qty[\I[\gE_{t,\alpha}]\hat{\ell}_{t,j_t}\qty(\phi_{j_t}(\eta_t \hat{L}_t) - \phi_{j_t}(\eta_t \hat{L}_{t+1}))\middle| \hat{L}_t] \leq \I[\gE_{t,\alpha}]\frac{e\alpha}{\beta_t} \leq \I[\gE_{t,\alpha}]\frac{2e\alpha}{\beta_t} \sum_{i\ne j_t}p_{t,i}^{\frac{1}{\alpha}},
        \end{equation*}
        which concludes the proof for the case on $\gE_{t,\alpha}^c$.
        
     \subsubsection{The case of $\gE_{t,\alpha}$}
        On $\gE_{t,\alpha}$, it is clear that $\eta_t \hat{\uL}_{t,i} \geq \xi_\alpha^{-1/\alpha}-1>0$ holds.
        Let $\zeta_\alpha$ be an $\alpha$-dependent dependent constant in $(0, \xi_\alpha^{-1/\alpha}-1)$, specified later.
        Then, whenever $\hat{\ell}_{t,j_t} \leq \zeta_\alpha/\eta_t$, we can apply the same techniques in Lemma 25 of \citet{lee2024follow}, which shows that
        \begin{align*}
            \E&\qty[\I[\gE_{t,\alpha},\hat{\ell}_{t,j_t}\leq \zeta_\alpha \beta_t]\hat{\ell}_{t,j_t}\qty(\phi_{j_t}(\eta_t \hat{L}_t) - \phi_{j_t}(\eta_t(\hat{L}_t + \hat{\ell}_{t,j_t}e_{j_t}))) \middle| \hat{L}_t] \tag{$\beta_t = 1/\eta_t$}\\
            &\leq \E\qty[\I[\gE_{t,\alpha},\hat{\ell}_{t,j_t}\leq \zeta_\alpha \beta_t] e^2(1-e^{-1})\hat{\ell}_{t,j_t}^2 \sum_{i \ne j_t} \frac{\eta_t \alpha}{(1+\eta_t(\hat{\uL}_{t,i}-\zeta_\alpha))^{\alpha+1}} \middle| \hat{L}_t] \\
            &\leq \E\qty[\I[\gE_{t,\alpha},\hat{\ell}_{t,j_t}\leq \zeta_\alpha \beta_t] e^2(1-e^{-1}) \frac{2\ell_{t,j_t}^2 \I[i_t = j_t]}{w_{t,j_t}^2}\sum_{i \ne j_t} \frac{\eta_t \alpha}{(1+\eta_t(\hat{\uL}_{t,i}-\zeta_\alpha))^{\alpha+1}} \middle| \hat{L}_t] \\
            &= \E\qty[\I[\gE_{t,\alpha},\hat{\ell}_{t,j_t}\leq \zeta_\alpha \beta_t] \frac{2e^2(1-e^{-1})\ell_{t,j_t}^2}{w_{t,j_t}}\sum_{i \ne j_t} \frac{\eta_t \alpha}{(1+\eta_t(\hat{\uL}_{t,i}-\zeta_\alpha))^{\alpha+1}} \middle| \hat{L}_t] \\
            &\leq \frac{2e^2(1-e^{-1})}{1-\xi_\alpha} \sum_{i \ne j_t} \frac{\eta_t \alpha}{(1+\eta_t(\hat{\uL}_{t,i}-\zeta_\alpha))^{\alpha+1}} , \numberthis{\label{eq: Et decomposition all alpha}}
        \end{align*}
        where the last inequality follows from (\ref{eq: wjt lower bound on E}) and $\ell_{t} \in [0,1]^K$.
        \paragraph{When $\alpha \in (1,2)$.}
        In this case, we set $\xi_\alpha =1/2$, where $\xi_\alpha^{-1/\alpha}-1 \in (\sqrt{2}-1,1)$.
        Here, we take $\zeta_\alpha = (\xi_\alpha^{-1/\alpha}-1)/2$ for analytical simplicity, which implies $\hat{\ell}_{t,j_t} \leq \hat{\uL}_{t,i}/2.$
        Then, (\ref{eq: Et decomposition all alpha}) satisfies that
        \begin{align*}
            \frac{2e^2(1-e^{-1})}{1-\xi_\alpha} \sum_{i \ne j_t} \frac{\eta_t \alpha}{(1+\eta_t(\hat{\uL}_{t,i}-\zeta_\alpha))^{\alpha+1}} &\leq \sum_{i\ne j_t} \eta_t\frac{2^{\alpha+2}e^2(1-e^{-1})\alpha}{(2+\eta_t \hat{\uL}_{t,i})^{\alpha+1}} \\
            &\leq \sum_{i\ne j_t} \eta_t\frac{2^{\alpha+2}e^2(1-e^{-1})\alpha}{(1+\eta_t\hat{\uL}_{t,i})^{\alpha+1}}. \numberthis{\label{eq: Et bound for small alpha}}
        \end{align*}
        Since $\alpha \in (1,2)$, the multiplicative constant is at most $16e^2(1-e^{-1}) \lesssim 75$.
        Then, on $\gE_{t,\alpha}$ and $\hat{\ell}_{t,j_t} \leq \zeta_\alpha \beta_t$, it remains to show how (\ref{eq: Et bound for small alpha}) provides the desired results, i.e., the upper bounds in terms of $\sum_{i \ne j_t} p_{t,i}^{1/\alpha}$.

        By definition of $\gE_{t,\alpha}$, as mentioned above, $\hat{\uL}_{t,i} >0$ holds for any $i \ne j_t$ on $\gE_{t,\alpha}$.
        Since $1/(1+x)^\alpha$ is decreasing with respect to $x$, for any $i \ne j_t$, it holds that
        \begin{align*}
            \sum_{i\ne j_t} \frac{1}{(1+\eta_t \hat{\uL}_{t,i})^\alpha}  \geq \frac{\sigma_{t,i}-1}{(1+\eta_t\hat{\uL}_{t,i})^{\alpha}},
        \end{align*}
        which implies that on $\gE_{t,\alpha}$ for $\alpha \in (0,1)$
        \begin{equation}\label{eq: jt decomp last qt and alpha}
            \frac{1}{(1+\eta_t\hat{\uL}_{t,i})^\alpha} \leq \frac{1}{2} \frac{1}{\sigma_{t,i}-1} \leq \frac{1}{\sigma_{t,i}} \implies p_{t,i} = \frac{1}{(1+\eta_t\hat{\uL}_{t,i})^\alpha},
        \end{equation}
        where the last inequality holds since $\sigma_{t,i}\geq 2$ for $i \ne j_t$ by definition.
        Therefore, we obtain that
        \begin{align*}
            &\E\qty[\I[\gE_{t,\alpha},\hat{\ell}_{t,j_t}\leq \zeta_\alpha \beta_t]\hat{\ell}_{t,j_t}\qty(\phi_{j_t}(\eta_t \hat{L}_t) - \phi_{j_t}(\eta_t(\hat{L}_t + \hat{\ell}_{t,j_t}e_{j_t}))) \middle| \hat{L}_t] \\
            &\hspace{15em}\leq \sum_{i\ne j_t} \eta_t \frac{2^{\alpha+2}e^2(1-e^{-1})\alpha}{\sigma_{t,i}} \frac{1}{1+\eta_t\hat{\uL}_{t,i}} \\
            &\hspace{15em}\leq\sum_{i\ne j_t} 2^{\alpha+1}e^2(1-e^{-1})\alpha  \frac{\eta_t}{1+\eta_t\hat{\uL}_{t,i}} \\
            &\hspace{15em}= \sum_{i\ne j_t} 2^{\alpha+1}e^2(1-e^{-1})\alpha  \frac{p_{t,i}^{1/\alpha}}{\beta_t} \tag{$\beta_t = 1/\eta_t$},
        \end{align*}
        as desired.
        Finally, it remains to consider the case $\hat{\ell}_{t,j_t} > \zeta_\alpha \beta_t$ on $\gE_{t,\alpha}$.
        We show that this event cannot occur under the design of Algorithm~\ref{alg: FTPL MAB}, which employs CGR II-biased with the number of maximum resampling steps $G_t$.
        Therefore, it suffices to show that when $\1[\gE_{t,\alpha}, i_t=j_t]=1$,
        \begin{align*}
           \hat{\ell}_{t,j_t} \leq \ell_{t,j_t} G_t \leq G_t \leq \zeta_\alpha \beta_t.
        \end{align*}
        For $\alpha \in (1,2)$, we set $G_t = 2\log t$ when both $\gE_{t,\alpha}$ and $i_t = j_t$ occur.
        Therefore, Lemma~\ref{lem: lr lb in MAB} concludes the proof.

        \paragraph{When $\alpha \geq 2$.}
        Let $\zeta_\alpha = 1-2^{-1/(\alpha+1)}$, such that $\frac{1}{(1-\zeta_\alpha)^{\alpha+1}}= 2$.
        Then, on $\gE_{t,\alpha}$, it is clear that $\eta_t\hat{\uL}_{t,i} \geq \zeta_\alpha$ for all $i \ne j_t$ by the choice of $\xi_\alpha$ and $\zeta_\alpha$. 
        Whenever $\hat{\ell}_{t,j_t} \leq \zeta_\alpha/\eta_t$, we can apply the same techniques as the case of $\alpha \in (1,2)$, which shows that
        \begin{align*}
            \E&\qty[\I[\gE_{t,\alpha},\hat{\ell}_{t,j_t}\leq \zeta_\alpha \beta_t]\hat{\ell}_{t,j_t}\qty(\phi_{j_t}(\eta_t \hat{L}_t) - \phi_{j_t}(\eta_t(\hat{L}_t + \hat{\ell}_{t,j_t}e_{j_t}))) \middle| \hat{L}_t] \\
            &\leq \E\qty[\I[\gE_{t,\alpha},\hat{\ell}_{t,j_t}\leq \zeta_\alpha \beta_t] \frac{ e^2(1-e^{-1})}{(1-\zeta_\alpha)^{\alpha+1}} \hat{\ell}_{t,j_t}^2 \sum_{i \ne j_t} \frac{\eta_t \alpha}{(1+\eta_t\hat{\uL}_{t,i})^{\alpha+1}} \middle| \hat{L}_t] \\
            &\leq \E\qty[\I[\gE_{t,\alpha},\hat{\ell}_{t,j_t}\leq \zeta_\alpha \beta_t] \frac{ e^2(1-e^{-1})}{(1-\zeta_\alpha)^{\alpha+1}} \frac{2\ell_{t,j_t}^2 \I[i_t = j_t]}{w_{t,j_t}^2}\sum_{i \ne j_t} \frac{\eta_t \alpha}{(1+\eta_t\hat{\uL}_{t,i})^{\alpha+1}} \middle| \hat{L}_t] \\
            &= \E\qty[\I[\gE_{t,\alpha},\hat{\ell}_{t,j_t}\leq \zeta_\alpha \beta_t] \frac{4e^2(1-e^{-1})\ell_{t,j_t}^2}{w_{t,j_t}}\sum_{i \ne j_t} \frac{\eta_t \alpha}{(1+\eta_t\hat{\uL}_{t,i})^{\alpha+1}} \middle| \hat{L}_t] \\
            &\leq \frac{4e^2(1-e^{-1})}{1-\xi_\alpha}  \sum_{i \ne j_t} \frac{\eta_t \alpha}{(1+\eta_t\hat{\uL}_{t,i})^{\alpha+1}} \tag{by (\ref{eq: wjt lower bound on E}) and $\ell_{t} \in [0,1]^K$} \\
            &\leq 13e^2(1-e^{-1})\alpha \sum_{i \ne j_t} \frac{\eta_t}{(1+\eta_t\hat{\uL}_{t,i})^{\alpha+1}},
        \end{align*}
        where the last inequality follows from that $\frac{1}{\xi_\alpha-1}=\frac{(1+\zeta_\alpha)^\alpha}{(1+\zeta_\alpha)^\alpha-1}$ is decreasing with respect to $\alpha>1$ and its value at $\alpha=2$ is less than $3.2$.

        Similarly, it remains to consider the case $\hat{\ell}_{t,j_t} > \zeta_\alpha \beta_t$ on $\gE_{t,\alpha}$ for $\alpha \geq 2$ case.
        As one can easily expect, since $\beta_t = \Omega(\sqrt{t})$, it is possible to directly apply Lemma~\ref{lem: honda}, which provides an additional term whose summation over $t$ is at most $\gO(K^{2/\alpha-1}) \leq \gO(1)$ for $\alpha \geq 2$.
        Since this direct application requires to modify the constant term appear on $\gE_{t,\alpha}^c$ case, for the coherence with $\alpha \in (1,2)$, we show that
        \begin{align*}
            \hat{\ell}_{t,j_t} \leq \ell_{t,j_t} G_t \leq K\log t \leq \zeta_\alpha \sqrt{\beta_1^2 + 2(t-1)}.
        \end{align*}
        for $t \geq t_0(\alpha,K)$.
        Since $\beta_1^2 = \gO(\alpha^2 K^{1-2/\alpha})$ by our choice, it suffices to find the solution of $\log t \leq a\sqrt{t}$ for $a=\zeta_\alpha\sqrt{2}/K$.
        Let $t=y^2$.
        Then,
        \begin{align*}
            \log y = ay/2 \iff y =e^{\frac{a}{2}y} \iff ze^z = -\frac{a}{2}. \tag{$z= -\frac{a}{2}y$}
        \end{align*}
        Since $-\frac{a}{2} = -\frac{\zeta_\alpha \sqrt{2}}{K} \in (-1/e, 0)$, the above equality admits two real solution $z= W_{n}(-a/2)$, where $W_n(\cdot)$ denotes the Lambert W function with branch $n$ and $n \in \{-1,0\}$ (i.e., only the principal branch since we consider the real value)~\citep[Section 4]{olver2010nist}.
        Therefore, we obtain the desired results for any $t \geq t_0:=\frac{ K^2}{2\zeta_\alpha^2}\qty(W_{-1}\qty(-\frac{\zeta_\alpha \sqrt{2}}{K}))^2$.
        For sufficiently small $a$, we can approximate the Lambert W function as~\citep[4.13.11]{olver2010nist}
        \begin{equation*}
            -W_{-1}(-a) \approx \log \frac{1}{a} + \log\log \frac{1}{a} \implies t_0(\alpha,K) \approx \qty(\frac{K}{\sqrt{2} \zeta_\alpha})^2 \qty(\log \frac{K}{\sqrt{2}\zeta_\alpha}+\log\log \frac{K}{\sqrt{2} \zeta_\alpha})^2,
        \end{equation*}
        which implies that $t_0(\alpha, K) \leq \gO(\alpha^2 K^2\log^2 (\alpha K))$ with the current choice of $\zeta_\alpha$.

        In sum, for $t \geq t_{0}(\alpha,K)$, we obtain
        \begin{align*}
            \E\qty[\I[\gE_{t,\alpha}]\hat{\ell}_{t,j_t}\qty(\phi_{j_t}(\eta_t \hat{L}_t) - \phi_{j_t}(\eta_t \hat{L}_{t+1}))\middle| \hat{L}_t] \leq  \sum_{i\ne j_t} \gO\qty(\frac{\eta_t\I[\gE_{t,\alpha}]}{(1+\eta_t \hat{\uL}_{t,i})^{\alpha+1}}).
        \end{align*}
        Similarly to the case of $\alpha\in (1,2)$ in (\ref{eq: jt decomp last qt and alpha}), by definition, we have for $i \ne j_t$
        \begin{align*}
            \frac{1}{(1+\eta_t \hat{\uL}_{t,i})^{\alpha}}\leq \frac{\xi_\alpha}{\sigma_{t,i}-1} \leq 2\xi_\alpha \frac{1}{\sigma_{t,i}}, \tag{$\because \sigma_{t,i}\geq 2, \, \forall i \ne j_t$}
        \end{align*}
        which concludes the proof.
\end{proof}

\begin{remark}
    In the current analysis, we obtain a loose bound in constant, especially the bound on $\gE_{t,\alpha}$, e.g. $75$ in $\alpha\in(1,2)$ case.
    However, we can tune both $\xi_\alpha$ and $\zeta_\alpha$ to reduce the constant term.
    In $\alpha \in (1,2)$ example, we can choose $\zeta_\alpha = 4(\xi_{\alpha}^{-1/\alpha}-1)/5$, we change the term $2^{\alpha+1}$ to $(5/4)^{\alpha+1}$, which results in around $18$ (and thus $9$ in the final bound) instead of $75$ in the multiplicative constant.
    Note that when we modify the choice of $\zeta_\alpha$, this will affect the bound on $\gE_{t,\alpha}^c$, where multiplicative constant $2$ becomes $1/\xi_\alpha$ and we need to modify the constant in $1/t$ term in $\beta_t$ in (\ref{def: lr in MAB}) since $4/(2^{1/\alpha}-1)$ is chosen to satisfy $G_t \approx \log t/(1-\xi_\alpha) \leq \zeta_\alpha \beta_t$.
    Therefore, the choice of $\xi_\alpha$ and $\zeta_\alpha$ should take multiple factors into account simultaneously.
\end{remark}

\subsection{Proof of Lemma~\ref{lem: spm regret}}
Although the proof of Lemma~\ref{lem: spm regret} can be directly obtained by Lemmas 9 and 10 in \citet{pmlr-v247-ito24a}, we provide the results for the completeness since our $\beta_{t}$ in (\ref{def: lr in MAB}) for $\alpha \in (1,2)$ includes additional term and for $\alpha \geq 2$ includes the restriction on the exponential growth.
Before the proof, we introduce the following results.
\begin{lemma}[Lemma 10 of \citet{pmlr-v247-ito24a}]\label{lem: ito 10}
It holds that
\begin{equation*}
    \sum_{t=1}^T \frac{z_t}{\sqrt{\sum_{s=1}^t \frac{z_s}{h_s}}} \leq \gO\qty(\min\qty{\sqrt{\log T \sum_{t=1}^T h_tz_t }+ \sqrt{h_{\max} z_{\max}}, \sqrt{h_{\max}\sum_{t=1}^T z_t}}).
\end{equation*}
\end{lemma}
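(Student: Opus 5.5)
The second bound in the minimum is elementary, and the first follows from a Cauchy--Schwarz argument plus a logarithmic sum. Throughout, write $x_t := z_t/h_t$ and $X_t := \sum_{s=1}^t x_s$, so the left-hand side is $\sum_{t} z_t/\sqrt{X_t}$.

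\textbf{The bound $\sqrt{h_{\max}\sum_t z_t}$.} Since $h_s \le h_{\max}$, we have $X_t \ge Z_t/h_{\max}$ with $Z_t := \sum_{s\le t} z_s$. Hence
\begin{equation*}
\sum_{t=1}^T \frac{z_t}{\sqrt{X_t}} \le \sqrt{h_{\max}} \sum_{t=1}^T \frac{z_t}{\sqrt{Z_t}} \le 2\sqrt{h_{\max} Z_T}.
\end{equation*}
The last step is the standard estimate $\sum_t a_t/\sqrt{A_t}\le 2\sqrt{A_T}$ for nonnegative $a_t$ with partial sums $A_t$. It follows from $a_t/\sqrt{A_t}\le 2(\sqrt{A_t}-\sqrt{A_{t-1}})$ and telescoping.

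\textbf{The bound $\sqrt{\log T\sum_t h_tz_t}+\sqrt{h_{\max}z_{\max}}$.} Write $z_t/\sqrt{X_t} = \sqrt{h_t z_t}\cdot\sqrt{x_t/X_t}$ and apply Cauchy--Schwarz:
\begin{equation*}
\sum_{t} \frac{z_t}{\sqrt{X_t}} \le \sqrt{\sum_t h_t z_t}\,\sqrt{\sum_t \frac{x_t}{X_t}}.
\end{equation*}
The comparison $x_t/X_t \le \log(X_t/X_{t-1})$ for $t\ge 2$ telescopes to
\begin{equation*}
\sum_{t=1}^T \frac{x_t}{X_t} \le 1+\log\frac{X_T}{x_1}.
\end{equation*}
So it remains to make the logarithm $O(\log T)$. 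To do this, I split the rounds at $\tau$, the first $t$ with $X_t \ge X_T/T$.
\begin{itemize}
\item For $t>\tau$, the same telescoping started at $\tau$ gives $\sum_{t>\tau} x_t/X_t \le \log(X_T/X_\tau)\le \log T$. This produces the $\sqrt{\log T\sum_t h_tz_t}$ term.
\item The single round $t=\tau$ is bounded by $\sqrt{h_\tau z_\tau}\le\sqrt{h_{\max}z_{\max}}$, since $x_\tau\le X_\tau$.
\end{itemize}

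\textbf{Main obstacle: the early rounds $t<\tau$.} Here $X_t$ is tiny relative to $X_T$, and the naive log factor $\log(X_\tau/x_1)$ is not controlled by $T$. My first attempt is to use $h_t\le h_{\max}$ and the same square-root telescoping as in the second bound:
\begin{equation*}
\sum_{t<\tau} \frac{h_t x_t}{\sqrt{X_t}} \le 2h_{\max}\sqrt{X_{\tau-1}} \le 2h_{\max}\sqrt{X_T/T}.
\end{equation*}
Then I bound $X_T/T\le \max_t x_t$, so this contribution is at most $2h_{\max}\sqrt{\max_t x_t}$. For this to be $O(\sqrt{h_{\max}z_{\max}})$ one needs $\max_t x_t\lesssim z_{\max}/h_{\max}$. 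This holds only if the $h_t$ are comparable to $h_{\max}$ whenever $x_t$ is large, which is not automatic.

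If that comparability fails, the fallback is to use the structure of the application. In Algorithm~\ref{alg: FTPL MAB}, $q_{t+1,i}\le 1/2$ for $i\ne j_{t+1}$, so $x_t=z_t/h_t$ is a ratio of sums of powers of the same $q$'s. For $\alpha\ge 2$ this makes $x_t$ bounded below polynomially in $1/T$, because $\eta_t\hat{\uL}_{t,i}$ grows at most polynomially. Then $\log(X_T/x_1)=O(\log T)$ directly, and the split is not needed. For $\alpha\in(1,2)$, the explicit $\Theta(1/t)$ term in~(\ref{def: lr in MAB}) plays the same role. I expect pinning down this lower bound on $x_1$ (equivalently, controlling the early-round regime) to be the delicate step. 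Everything else is telescoping and Cauchy--Schwarz.
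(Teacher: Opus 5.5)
Your bound $\sqrt{h_{\max}\sum_t z_t}$ is correct. So are the Cauchy--Schwarz step, the telescoping $x_t/X_t\le\log(X_t/X_{t-1})$, and your treatment of the rounds $t\ge\tau$. The early-round gap you flagged is real, and the fallback does not close it.

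\textbf{Why cutting at $X_T/T$ fails.} $X_T$ is not controlled by $z_{\max}/h_{\max}$. A single round with tiny $h_t$ makes $x_t=z_t/h_t$, and hence $X_T$, arbitrarily large. Such a round contributes almost nothing to either side of the inequality. Nearly all rounds then become early, and replacing $h_t$ by $h_{\max}$ on them is arbitrarily lossy. For example, take $\epsilon<1/T$ and
\begin{equation*}
(h_t,z_t)=(\epsilon,\epsilon)\ \text{for } t\le T-2,\qquad (h_{T-1},z_{T-1})=(1,\epsilon),\qquad (h_T,z_T)=(\epsilon^3,\epsilon).
\end{equation*}
Then $\tau=T$. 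Your early-round bound is $2h_{\max}\sqrt{X_{T-1}}\approx 2\sqrt T$, and your two relaxations are worse still, about $2/(\epsilon\sqrt T)$ and $2/\epsilon$. The right-hand side of the lemma, however, is $O(\sqrt{\epsilon\log T})$.

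\textbf{Why the fallback does not help.} The lemma is a deterministic inequality for arbitrary positive sequences bounded by $z_{\max}$ and $h_{\max}$. The paper also applies it to the subsequence $\mathcal{T}^c$ and to the contextual $z_t$, so an argument tied to Algorithm~1 is not enough. In addition, the polynomial growth of $\eta_t\hat{\uL}_{t,i}$ is asserted, not proved. The $\Theta(1/t)$ term of the learning rate does not even enter $\sum_{s\le t}z_s/h_s$.

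\textbf{The missing idea.} Cut at the absolute scale $\theta:=z_{\max}/h_{\max}$ and at $T^2\theta$, not relative to $X_T$. Since $X_t$ is nondecreasing, this splits the rounds into three consecutive blocks.

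On $\{X_t\le\theta\}$, use $z_t\le h_{\max}x_t$ and your square-root telescoping. This block contributes at most $2h_{\max}\sqrt\theta=2\sqrt{h_{\max}z_{\max}}$.

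On $\{X_t>T^2\theta\}$, use the crude per-round bound
\begin{equation*}
\frac{z_t}{\sqrt{X_t}}\le\frac{z_{\max}}{T\sqrt\theta}=\frac{\sqrt{h_{\max}z_{\max}}}{T}.
\end{equation*}
There are at most $T$ such rounds, so this block contributes at most $\sqrt{h_{\max}z_{\max}}$. This is the block that absorbs arbitrarily large jumps of $X$.

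On the middle block $t_1\le t\le t_2$, apply your Cauchy--Schwarz step together with
\begin{equation*}
\sum_{t=t_1}^{t_2}\frac{x_t}{X_t}\le 1+\log\frac{X_{t_2}}{X_{t_1}}\le 1+2\log T .
\end{equation*}
Altogether, the left-hand side is at most $3\sqrt{h_{\max}z_{\max}}+\sqrt{(1+2\log T)\sum_t h_tz_t}$. This uses only $h_t\le h_{\max}$ and $z_t\le z_{\max}$. It needs no lower bound on $x_1$ and nothing from the algorithm.
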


\begin{proof}
\textbf{of Lemma 3 }
    Define an auxiliary sequence $\beta_t' = \sqrt{\beta_1^2 + 2\sum_{s=1}^{t-1}z_s/h_s}$ so that $\beta_t' \leq \beta_t$ holds by their definitions for $\alpha \in (1,2)$.
    Note that for the case of $\alpha \in (1,2)$, since we take the maximum of two values, $\beta_t'\leq \beta_t$ always holds.
    Then, it remains to follow the proofs in previous results~\citep{pmlr-v247-ito24a, nguyen2025data}.

    Define a set of rounds $\gT := \qty{t\in [T]: \beta_{t+1}' \geq \sqrt{2}\beta_t'}$, where we can rewrite 
    \begin{equation*}
        \sum_{t=1}^T \frac{z_t}{\beta_t} = \sum_{t\in \gT} \frac{z_t}{\beta_t} + \sum_{t \in\gT^c} \frac{z_t}{\beta_t}.
    \end{equation*}
    By definition of $\gT$ and $\beta_1 = \gO(\alpha K^{\frac{1}{2}-\frac{1}{\alpha}})$, we have
    \begin{align*}
        \sum_{t \in \gT} \frac{z_t}{\beta_t} \leq \sum_{t\in \gT} \frac{z_{\max}}{\beta_t'} &\leq \sum_{s=0}^\infty \qty(\frac{1}{\sqrt{2}})^s \frac{z_{\max}}{\beta_1} \\
        &\leq (2+\sqrt{2}) \frac{z_{\max}}{\beta_1} \\
        &\leq (2+\sqrt{2}) \frac{\alpha \sum_{n=2}^K n^{-1/\alpha}}{\beta_1}\\
        &\leq  \frac{\alpha^2(2+\sqrt{2})}{\alpha-1} \frac{(K+1)^{1-1/\alpha}-1}{\beta_1 } \\
        &\leq  \frac{\alpha^2(2+\sqrt{2})}{\alpha-1} \frac{K^{1-1/\alpha}}{\beta_1} = \gO\qty(\frac{\alpha}{\alpha-1}\sqrt{K}). \numberthis{\label{eq: lem3 eq1}}
    \end{align*}
    On the other hand, we have
    \begin{align*}
        \sum_{t \in \gT^c} \frac{z_t}{\beta_t} \leq \sum_{t \in \gT^c} \frac{z_t}{\beta_t'} \leq \sqrt{2} \sum_{t \in \gT^c} \frac{z_t}{\beta_{t+1}'} = \sqrt{2}\sum_{t \in \gT^c} \frac{z_t}{\sqrt{\beta_1^2 + 2\sum_{s=1}^{t}z_s/h_s}} \leq \sum_{t=1}^T \frac{z_t}{\sqrt{\sum_{s=1}^t \frac{z_s}{h_s}}},
    \end{align*}
    which concludes the proof for $\alpha \in (1,2)$ by applying Lemma~\ref{lem: ito 10}.

    For $\alpha \geq 2$, define a set of rounds $\gT := \{t \in [T]: \beta_{t+1} \geq 2^{1/\alpha}\beta_{t} \}$, where $\beta_{t+1} = 2^{1/\alpha}\beta_t$ by definition of $\beta_t$ in (\ref{def: lr in MAB}) for $\alpha \geq 2$.
    Note that for $t \in \gT$, it holds that $\frac{z_t}{\beta_t h_t} \geq \beta_t(2^{1/\alpha}-1)$. 
    Then, by following the same steps in (\ref{eq: lem3 eq1}) with $2^{1/\alpha}$ instead of $\sqrt{2}$, we obtain
    \begin{align*}
        \sum_{t\in \gT} \frac{z_t}{\beta_t} \leq \frac{\alpha}{(\alpha-1)(2^{1/\alpha}-1)} K^{\frac{1}{2}-\frac{1}{\alpha}} \leq \frac{\alpha^2}{(\alpha-1)\log2} K^{\frac{1}{2}-\frac{1}{\alpha}}.
    \end{align*}
    On the other hand, since $\beta_{t+1} = \beta_t + \frac{z_t}{\beta_t h_t}$ holds on $\gT^c$, we have
    \begin{align*}
        \sum_{t\in \gT^c} \frac{z_t}{\beta_t} &\leq \sum_{t \in \gT^c } \frac{2^{1/\alpha} z_t}{\beta_{t+1}} \\
        &\leq \sum_{t \in \gT^c} \frac{2^{1/\alpha} z_t}{\sqrt{\beta_1^2 + 2\sum_{s \in \gT^c \cap [t]}\frac{z_s}{h_s}}} \\
        &\leq  \sum_{t \in \gT^c} \frac{z_t}{\sqrt{\sum_{s \in \gT^c \cap [t]}\frac{z_s}{h_s}}},
    \end{align*}
    where we only consider the effect of updates in $\beta_t$ on $\gT^c$.
    By applying Lemma~\ref{lem: ito 10}, we obtain
    \begin{align*}
          \sum_{t\in \gT^c} \frac{z_t}{\beta_t}& \leq \gO\qty(\min\qty{\sqrt{\log |\gT^c| \sum_{t\in \gT^c} h_tz_t }+ \sqrt{h_{\max} z_{\max}}, \sqrt{h_{\max}\sum_{t\in \gT^c} z_t}}) \\
          &\leq \gO\qty(\min\qty{\sqrt{\log T \sum_{t=1}^T h_tz_t }+ \sqrt{h_{\max} z_{\max}}, \sqrt{h_{\max}\sum_{t=1}^T z_t}}),
    \end{align*}
    which concludes the proof.
\end{proof}

\subsection{Proof of Lemma~\ref{lem: lr lb in MAB}}\label{app: lr lb}
\begin{proof}
    For $\alpha \in (1,2)$, it is clear that
    \begin{align*}
        \beta_{t} \geq \beta_1 + \frac{4}{2^{1/\alpha}-1} \sum_{s=1}^{t-1} \frac{1}{s} \geq \beta_1 + \frac{4}{2^{1/\alpha}-1} \log t.
    \end{align*}
    For $\alpha \geq 2$, by definition of $z_s$ and $h_s$ in (\ref{def: hz in MAB}), we have for $\alpha \geq 2$
    \begin{align*}
        \frac{z_s}{h_s} = \frac{\alpha \sum_{i\ne j_s} q_{s,i}^{1/\alpha}}{{\frac{\alpha}{\alpha-1}\sum_{i\ne j_s} q_{s,i}^{1-1/\alpha}}} = (\alpha-1) \frac{\sum_{i\ne j_s} q_{s,i}^{1/\alpha}}{{\sum_{i\ne j_s} q_{s,i}^{1-1/\alpha}}} \geq  1.
    \end{align*}
     Since $q_{t,i} \in (0,1)$ for $i\ne j_t$ by the choice of $q_{t,i}$ in (\ref{def: surrogate}) and $\frac{1}{\alpha} \leq 1-\frac{1}{\alpha}$, i.e., $q_{t,i}^{1/\alpha} \geq q_{t,i}^{1-1/\alpha}$ always holds.   
    Note that whenever $\beta_{t+1} = 2^{1/\alpha} \beta_t$ occurs, this means $\beta_{t+1} = \beta_t + (2^{1/\alpha}-1)\beta_t$.
    Since $\beta_1 = 2\alpha K^{\frac{1}{2}-\frac{1}{\alpha}}$ and $2^{1/\alpha}-1 \geq \frac{\log 2}{\alpha}$, the increment is always larger than $2\log 2$.
    Therefore, it holds that $\beta_{t+1} \geq \beta_t + \frac{1}{\beta_t}$ for any $t\in \sN$.
    This implies that $\beta_t \geq \sqrt{\beta_1^2 + 2\sum_{s=1}^{t-1}1} \geq \sqrt{2t}$.
\end{proof}

\section{Proof of Theorem~\ref{thm: bobw mab}}
In this section, we prove the BOBW guarantee of Algorithm~\ref{alg: FTPL MAB}.
\subsection{Adversarial regime}
In this regime, it suffices to show that $h_{\max}\sum_{t=1}^T z_t$ is at most $KT$ from the second term in (\ref{lem: ito 10}).
By definition, for any $t$, it holds that
\begin{align*}
    z_t =\alpha \sum_{i\ne j_{t+1}} q_{t+1,i}^{1/\alpha} &\leq \sum_{i\ne j_{t+1}} \frac{\alpha}{\sigma_{t+1,i}^{1/\alpha}} \\
    &= \sum_{n=2}^K \frac{\alpha}{n^{1/\alpha}} \leq \frac{\alpha^2}{\alpha-1} ((K+1)^{1-1/\alpha}-1) \leq  \frac{\alpha^2}{\alpha-1} K^{1-1/\alpha}, \numberthis{\label{eq: z max mab}}
\end{align*}
and
\begin{align*}
    h_t = \frac{\alpha}{\alpha-1}\sum_{i\ne j_{t+1}} q_{t+1,i}^{1-1/\alpha} &\leq \frac{\alpha}{\alpha-1}\sum_{i\ne j_{t+1}} \frac{1}{\sigma_{t+1,i}^{1-1/\alpha}} \\
    &= \frac{\alpha}{\alpha-1}\sum_{n=2}^K \frac{1}{n^{1-1/\alpha}} \leq \frac{\alpha^2((K+1)^{1/\alpha}-1)}{\alpha-1} \leq \frac{\alpha^2}{\alpha-1}K^{1/\alpha}. \numberthis{\label{eq: h max mab}}
\end{align*}
Therefore,
\begin{align*}
    h_{\max}\sum_{t=1}^T z_t \leq \sum_{t=1}^T \qty(\frac{\alpha^2}{\alpha-1})^2 K = \qty(\frac{\alpha^2}{\alpha-1})^2 KT,
\end{align*}
which concludes the proof for the adversarial regime.

\subsection{Adversarial regime with self-bounding constraint}\label{app: self-bounding mab}
To analyze the regret in this regime, we introduce the event $\gD_{t,\alpha}$ defined by
\begin{align*}
    \gD_{t,\alpha} := \qty{\sum_{i\ne j_t} \frac{1}{(2^{1/\alpha}+\eta_t\hat{\uL}_{t,i})^\alpha} \leq \frac{1}{2}},
\end{align*}
which is a slightly modified version of $\gE_{t,\alpha}$.
This definition is to utilize the previous results in \citet{kim2025follow}, while there is a subtle difference due to $j_t$ parts instead of $i^*$.
On this event, we have the following lemma, which slightly improves Lemma 10 in \citet{kim2025follow} in the constant factor.
\begin{lemma}\label{lem: improved q and w}
    For Algorithm~\ref{alg: FTPL MAB} with $\alpha >1$, it holds that on $\gD_{t,\alpha}$
    \begin{align*}
        w_{t,i} \geq \frac{1}{8} \frac{1}{(1+\eta_t\hat{\uL}_{t,i})^\alpha}, \, \forall i \ne j_t.
    \end{align*}
    In addition, the current best arm satisfies $\frac{1}{4} \leq w_{t,j_t}$ on $\gD_{t,\alpha}$. 
\end{lemma}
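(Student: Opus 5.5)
The plan is to bound $w_{t,i}$ directly from the integral representation (\ref{def: ftpl wt}) by cutting off the integral at a suitable point and bounding the product of distribution functions from below there. Write $\lambda = \eta_t\hat{L}_t$, so $\ul_i = \eta_t\hat{\uL}_{t,i}$ and $\ul_{j_t}=0$. Then
\begin{equation*}
    w_{t,i} = \int_0^\infty \frac{\alpha}{(z+\ul_i+1)^{\alpha+1}}\prod_{j\ne i}\qty(1-\frac{1}{(1+z+\ul_j)^\alpha})\dd z .
\end{equation*}
I will restrict the integration to $z \geq z_0 := 2^{1/\alpha}-1$, so that $1+z \geq 2^{1/\alpha}$ for every $z$ in the range.

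The first step is to bound the product $\prod_{j\ne i}F(z+\ul_j)$ from below on $z\ge z_0$. I split it into the factor from $j_t$ (present when $i\ne j_t$) and the factors from $j\notin\{i,j_t\}$.
\begin{itemize}
\item For $j_t$, $F(z) = 1-(1+z)^{-\alpha} \geq 1-\tfrac12 = \tfrac12$ whenever $z\ge z_0$.
\item For the remaining arms, use $\prod_j(1-x_j)\ge 1-\sum_j x_j$ together with $(1+z+\ul_j)^{-\alpha}\le (2^{1/\alpha}+\ul_j)^{-\alpha}$. The event $\gD_{t,\alpha}$ then gives $\sum_{j\ne j_t}(1+z+\ul_j)^{-\alpha}\le \tfrac12$, so these factors contribute at least $\tfrac12$.
\end{itemize}
Hence the product is at least $\tfrac14$ for all $z\ge z_0$, for any $i$.

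The second step integrates the density over $[z_0,\infty)$:
\begin{equation*}
    w_{t,i} \ge \frac14\int_{z_0}^\infty \frac{\alpha\,\dd z}{(1+z+\ul_i)^{\alpha+1}} = \frac14\,\frac{1}{(2^{1/\alpha}+\ul_i)^\alpha}.
\end{equation*}
Since $2^{1/\alpha}+\ul_i \le 2^{1/\alpha}(1+\ul_i)$, we get $(2^{1/\alpha}+\ul_i)^{-\alpha}\ge \tfrac12(1+\ul_i)^{-\alpha}$, which yields $w_{t,i}\ge \tfrac18(1+\eta_t\hat{\uL}_{t,i})^{-\alpha}$ for $i\ne j_t$.

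For $i=j_t$, the product runs only over $j\ne j_t$, and the same union bound gives a product of at least $\tfrac12$ on $z\ge z_0$. The integral is then $(2^{1/\alpha})^{-\alpha}=\tfrac12$, so $w_{t,j_t}\ge \tfrac14$.

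The only delicate point is making sure the event $\gD_{t,\alpha}$, phrased with $2^{1/\alpha}$ in place of $1$, matches exactly the shift $z\ge z_0$. It has to control both the union bound and the $j_t$ factor with the same cutoff. With the choice $z_0=2^{1/\alpha}-1$ this works out, so I expect no genuine obstacle beyond the constant bookkeeping.
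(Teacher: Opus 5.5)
Your proof is correct and follows essentially the same route as the paper's: you truncate the integral at $z_0 = 2^{1/\alpha}-1$, use $F(z)\ge \tfrac12$ for the $j_t$ factor and the union bound together with $\gD_{t,\alpha}$ for the remaining factors, and integrate the density to get $\tfrac14 (2^{1/\alpha}+\ul_i)^{-\alpha}$ (and $\tfrac14$ for $j_t$). Your final step $2^{1/\alpha}+\ul_i \le 2^{1/\alpha}(1+\ul_i)$ is just a more direct version of the paper's argument that $(1+x)^\alpha/(2^{1/\alpha}+x)^\alpha$ is increasing in $x\ge 0$ and equals $\tfrac12$ at $x=0$.
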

While \citet{kim2025follow} considered the case $j_t=i^*$, the following results can be directly obtained.
\begin{lemma}[Lemma 11 in \citet{kim2025follow}]\label{lem: lem 11 in kim}
    On $\gD_{t,\alpha}^c$, $w_{t,j_t}\leq \frac{1+e^{-1/2}}{2}$.
\end{lemma}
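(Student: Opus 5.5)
The plan is to bound the integral representation of $w_{t,j_t}$ by splitting the range of the perturbation of $j_t$ at its median.

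Write $\lambda=\eta_t\hat{\uL}_t$, so that $\lambda_{j_t}=0$. By (\ref{def: ftpl wt}),
\begin{equation*}
w_{t,j_t}=\int_0^\infty f(z)\prod_{i\ne j_t}F(z+\lambda_i)\,\dd z ,
\end{equation*}
that is, the expectation over $r_{j_t}=z\sim\gD$ of $\prod_{i\ne j_t}F(z+\lambda_i)$. I will split the integral at the median $z_0$ of the Pareto law, defined by $1+z_0=2^{1/\alpha}$, so that $F(z_0)=1/2$.

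\textbf{Upper part, $z>z_0$.} Here I bound the product by $1$. This part contributes at most $1-F(z_0)=1/2$.

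\textbf{Lower part, $z\le z_0$.} I use $1-x\le e^{-x}$ to get
\begin{equation*}
\prod_{i\ne j_t}F(z+\lambda_i)=\prod_{i\ne j_t}\Bigl(1-\frac{1}{(1+z+\lambda_i)^\alpha}\Bigr)\le \exp\Bigl(-\sum_{i\ne j_t}\frac{1}{(1+z+\lambda_i)^\alpha}\Bigr).
\end{equation*}
Each term $(1+z+\lambda_i)^{-\alpha}$ is decreasing in $z$. Hence for $z\le z_0$ the exponent is at most
\begin{equation*}
-\sum_{i\ne j_t}\frac{1}{(2^{1/\alpha}+\lambda_i)^\alpha},
\end{equation*}
and this quantity is $<-1/2$ on $\gD_{t,\alpha}^c$ by definition. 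So the product is at most $e^{-1/2}$ on $[0,z_0]$. This part contributes at most $F(z_0)e^{-1/2}=e^{-1/2}/2$.

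Adding the two parts gives $w_{t,j_t}\le \frac{1+e^{-1/2}}{2}$, which is the claim.

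The only delicate point is choosing the split point. It has to make the threshold in $\gD_{t,\alpha}$ match exactly: the $2^{1/\alpha}$ in the definition of $\gD_{t,\alpha}$ is precisely $1+z_0$ for the median. Once that is seen, the remaining steps are the monotonicity in $z$ and the elementary bound $1-x\le e^{-x}$.
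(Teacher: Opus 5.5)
Your proof is correct: splitting $w_{t,j_t}=\int_0^\infty f(z)\prod_{i\ne j_t}F(z+\lambda_i)\,\dd z$ at the Pareto median $z_0=2^{1/\alpha}-1$, bounding the product by $1$ above $z_0$, and using $1-x\le e^{-x}$ with monotonicity in $z$ below $z_0$ gives exactly $\frac12+\frac12 e^{-1/2}$. The paper does not prove this statement itself; it imports Lemma~11 of \citet{kim2025follow}, stated there for $i^*$ rather than $j_t$. Your median-split argument is the natural route to that constant, and you carry it out directly for the current best arm $j_t$, which is the version this paper needs.
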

The implication of $\gD_{t,\alpha}$ is that we can link the $q_{t,i}$ and $w_{t,i}$ on this event, which will be not rare when the self-bounding constraint is small, i.e., as close as to stochastic settings.
Specifically, we have
\begin{align*}
    w_{t,i}\geq \frac{1}{8} \frac{1}{(1+\eta_t \hat{\uL}_{t,i})^\alpha} \geq \frac{p_{t,i}}{8} \implies w_{t,i} \in \qty[\frac{p_{t,i}}{8}, p_{t,i}], \text{ and } p_{t,i} \leq 8w_{t,i}.
\end{align*}
Then, by Lemma~\ref{lem: formal pt and qt}, we have
\begin{equation}\label{eq: w q bound in mab}
    q_{t,i} \leq 16 w_{t,i}.
\end{equation}
Then, similarly to the recent BOBW analysis of FTPL~\citep{pmlr-v201-honda23a, lee2024follow}, we consider the analysis on $\gD_{t,\alpha}$ and $\gD_{t,\alpha}^c$ separately.
In terms of Lemma~\ref{lem: ito 10}, what we will show is that
\begin{align*}
    \sum_{t=1}^{T-1} \frac{z_t}{\beta_t} \lesssim \sqrt{\log T \qty(\sum_{t=1}^{T-1} \I[\gD_{t+1,\alpha}]h_t z_t + \sum_{t=1}^{T-1} \I[\gD_{t+1,\alpha}^c]h_t z_t)} + \gO\qty(\frac{\alpha^2}{\alpha-1}\sqrt{K}).
\end{align*}
Note that the last term is directly obtained by the results (\ref{eq: z max mab}) and (\ref{eq: h max mab}) in adversarial regime, where we showed that 
\begin{equation*}
    \sqrt{h_{\max}z_{\max}} \leq \frac{\alpha^2}{\alpha-1}\sqrt{K}.
\end{equation*}

\paragraph{On $\gD_{t,\alpha}$.}
Note that $w_{t,j_t} \geq w_{t,i}$ for any $i \in [K]$ and $t\in \sN$.
By H\"older's inequality, we have
\begin{align*}
   \I[\gD_{t,\alpha}] z_{t-1} =  \I[\gD_{t,\alpha}] \alpha \sum_{i\ne j_t} q_{t,i}^{1/\alpha} &\leq  \I[\gD_{t,\alpha}] \alpha  16^{1/\alpha} \sum_{i\ne j_t} w_{t,i}^{1/\alpha} \tag{by (\ref{eq: w q bound in mab})}\\ 
   &\leq \I[\gD_{t,\alpha}] \alpha  16^{1/\alpha} \sum_{i\ne i^*} w_{t,i}^{1/\alpha} \tag{$w_{t,j_t}\geq w_{t,i}, \forall i$}\\
   &= \I[\gD_{t,\alpha}] \alpha 16^{1/\alpha} \sum_{i\ne i^*} \frac{1}{\Delta_i^{1/\alpha}}(\Delta_i w_{t,i})^{1/\alpha}  \\
   &\leq \I[\gD_{t,\alpha}] \alpha  16^{1/\alpha} \qty(\sum_{i\ne i^*} \frac{1}{\Delta_i^{1/(\alpha-1)}})^{1-\frac{1}{\alpha}}\qty(\sum_{i\ne i^*}\Delta_i w_{t,i})^{1/\alpha} 
\end{align*}
and
\begin{align*}
    \I[\gD_{t,\alpha}] h_{t-1} &=  \I[\gD_{t,\alpha}] \frac{\alpha}{\alpha-1} \sum_{i\ne j_t} q_{t,i}^{1-1/\alpha} \\
    &\leq  \I[\gD_{t,\alpha}] \frac{\alpha}{\alpha-1} 16^{1-1/\alpha} \sum_{i\ne j_t} w_{t,i}^{1-1/\alpha} \\
    &\leq \I[\gD_{t,\alpha}] \frac{\alpha}{\alpha-1}16^{1-1/\alpha} \sum_{i\ne i^*} w_{t,i}^{1-1/\alpha} \\ 
   &= \I[\gD_{t,\alpha}] \frac{\alpha}{\alpha-1}16^{1-1/\alpha} \sum_{i\ne i^*} \frac{1}{\Delta_i^{1-1/\alpha}}(\Delta_i w_{t,i})^{1-1/\alpha}  \\
   &\leq \I[\gD_{t,\alpha}] \frac{\alpha}{\alpha-1}16^{1-1/\alpha} \qty(\sum_{i\ne i^*} \frac{1}{\Delta_i^{\alpha-1}})^{\frac{1}{\alpha}}\qty(\sum_{i\ne i^*}\Delta_i w_{t,i})^{1-1/\alpha} .  \numberthis{\label{eq: ht wt mab}}
\end{align*}
Therefore,
\begin{align*}
    \I[\gD_{t+1,\alpha}] h_t z_t &\leq \I[\gD_{t+1,\alpha}] \omega'(\Delta)\inp{\Delta}{w_{t+1}},
\end{align*}
where
\begin{align*}
    \omega'(\Delta) =  \frac{16\alpha^2}{\alpha-1} \qty(\sum_{i \ne i^*}\Delta_i^{-\frac{1}{\alpha-1}})^{1-\frac{1}{\alpha}} \qty(\sum_{i \ne i^*} \Delta_i^{1-\alpha})^{\frac{1}{\alpha}}. 
\end{align*}
\paragraph{On $\gD_{t,\alpha}^c$.}
In this case, we have
\begin{align*}
    \I[\gD_{t,\alpha}^c]h_{t-1} z_{t-1} \leq  \I[\gD_{t,\alpha}^c] \frac{\alpha^2}{\alpha-1} \sum_{i\ne j_t} q_{t,i}^{1/\alpha} \cdot \sum_{i\ne j_t} q_{t,i}^{1-1/\alpha} &\leq \I[\gD_{t,\alpha}^c] \frac{\alpha^2}{\alpha-1} \frac{\alpha}{\alpha-1}K^{1-1/\alpha} \cdot  \alpha K^{1/\alpha} \\
    &=  \I[\gD_{t,\alpha}^c]  \qty(\frac{\alpha^2}{\alpha-1})^2 K.
\end{align*}
\paragraph{Derivation of the desired results.} 
Therefore, we obtain that
\begin{align*}
    \sum_{t=1}^{T-1} \frac{z_t}{\beta_t} &\lesssim \sqrt{\log T \qty(\sum_{t=1}^{T} \I[\gD_{t,\alpha}]\omega'(\Delta)\inp{\Delta}{w_{t}} + \sum_{t=1}^{T} \I[\gD_{t,\alpha}^c]\qty(\frac{\alpha^2}{\alpha-1})^2 K)} \\
    &\hspace{1em}+ \gO\qty(\frac{\alpha^2}{\alpha-1}\sqrt{K}).
\end{align*}
Combined with the regret upper bounds in Lemmas~\ref{lem: regret decomposition all} and~\ref{lem: spm regret}, we obtain for $\alpha \in (1,2)$ that
\begin{multline}\label{eq: mab regret small alpha}
    \Reg_{\text{FTPL}}(T) \lesssim \gO\qty(\sqrt{\log T\qty(\sum_{t=1}^{T} \omega'(\Delta) \inp{\Delta}{w_t} + \sum_{t=1}^{T}\I[\gD_{t,\alpha}^c]\qty(\frac{\alpha^2}{\alpha-1})^2K) }) \\
    + \gO\qty(\frac{\alpha^2}{\alpha-1} K^{1/\alpha}\log T) + \gO\qty(\frac{\alpha^3\sqrt{K}}{(\alpha-1)^2}).
\end{multline}
and for $\alpha \geq 2$ that
\begin{multline}\label{eq: mab regret large alpha}
     \Reg_{\text{FTPL}}(T) \lesssim \gO\qty(\sqrt{\log T\qty(\sum_{t=1}^{T} \omega'(\Delta) \inp{\Delta}{w_t} + \sum_{t=1}^{T}\I[\gD_{t,\alpha}^c]\qty(\frac{\alpha^2}{\alpha-1})^2 K) }) \\
    + \gO\qty(\frac{\alpha^3}{(\alpha-1)^2} \sqrt{K}) + t_0(\alpha, K).
\end{multline}
Therefore, it remains to control the term related to $\sum_t\I[\gD_{t,\alpha}^c]$.
Recall that, in the adversarial regime with self-bounding constraint $(\Delta,C,T)$, the regret satisfies that
\begin{equation*}
     \Reg(T) \geq \E\qty[\sum_{t=1}^T \inp{\Delta}{w_t}] - C.
\end{equation*}
On $\gD_{t,\alpha}^c$, Lemma~\ref{lem: lem 11 in kim} shows that
\begin{align*}
    \I[\gD_{t,\alpha}^c]\sum_{i=1}^K \Delta_i w_{t,i} =  \I[\gD_{t,\alpha}^c]\sum_{i\ne i^*} \Delta_i w_{t,i} &\geq \I[\gD_{t,\alpha}^c]\Delta_{\min} \sum_{i\ne i^*} w_{t,i} \\
    &\geq \I[\gD_{t,\alpha}^c]\Delta_{\min} \sum_{i\ne j_t} w_{t,i} \tag{$w_{t,j_t}\geq w_{t,i}, \forall i$} \\
    &= \I[\gD_{t,\alpha}^c]\Delta_{\min} (1-w_{t,j_t}) \\
    &\geq \I[\gD_{t,\alpha}^c]\Delta_{\min} \frac{1-e^{-1/2}}{2}.
\end{align*}
This implies that
\begin{equation*}
    \E\qty[\sum_{t=1}^T \I[\gD_{t,\alpha}^c] 0.31 \Delta_{\min} ]\leq \Reg(T) + C \implies \E\qty[\sum_{t=1}^T \I[\gD_{t,\alpha}^c]] \leq \frac{\Reg(T)+C}{0.31 \Delta_{\min}}. 
\end{equation*}
By applying this result into (\ref{eq: mab regret small alpha}) and (\ref{eq: mab regret large alpha}), the regret can be upper bound in the form of
\begin{align}\label{eq: total form in mab}
     \Reg_{\text{FTPL}}(T) \leq \gO\qty(\sqrt{\log T\qty(\omega'(\Delta)+\frac{K}{\Delta_{\min}})(\Reg(T)+C)}) + \gO(\log T).
\end{align}
Therefore, we have
\begin{align*}
     \Reg_{\text{FTPL}}(T) &\leq \gO\qty(\omega(\Delta)\log T + \sqrt{C\omega(\Delta)\log T}) + \gO\qty(\frac{\alpha^3 \sqrt{K}}{(\alpha-1)^2}) \\
    &\hspace{2em}+ \I[\alpha \geq 2]t_0(\alpha,K)  + \I[\alpha \in (1,2)]2,
\end{align*}
where
\begin{align*}
    \omega(\Delta) &= \omega'(\Delta) + \frac{\alpha^2 K}{(\alpha-1)0.31\Delta\min} + \I[\alpha \in (1,2)]\frac{\alpha^2}{\alpha-1} K^{1/\alpha}\\
    &= \gO\qty(\frac{8\alpha^2}{\alpha-1} \qty(\sum_{i \ne i^*} \Delta_i^{-\frac{1}{\alpha-1}})^{1-\frac{1}{\alpha}} \qty(\sum_{i \ne i^*} \Delta_i^{1-\alpha})^{\frac{1}{\alpha}} + \frac{\alpha^4 K}{(\alpha-1)^2 0.31\Delta_{\min}}).
\end{align*}
Note that $\omega(\Delta) \leq \gO\qty(\frac{\alpha^4 K}{(\alpha-1)^2 \Delta_{\min}})$, whose $\alpha$-dependency is square of \citet{pmlr-v247-ito24a}, which can be seen as a drawback of using surrogate instead of explicit probability.
For the comparison with \citet{pmlr-v247-ito24a}, one can see that the relationship between $\gamma$-Tsallis entropy and Fr\'echet-type perturbation with shape $\alpha$, where $\alpha \approx \frac{1}{1-\gamma}$ correspondence observed.
Therefore, $\omega(\Delta)$ can be seen as the results with $\gO\qty(\frac{1}{(\gamma(1-\gamma))^2\Delta_{\min}})$.

\subsection{Bias term by CGR II}\label{app: CGR bias}
So far, we derived the upper bounds of $\Reg_{\text{FTPL}}$, which is the main leading term of the regret.
In this section, we show that $\Reg_{\text{CGR}}$ is at most $\log T$, which does not affect the overall regret.

We start from Lemma 7 of \citet{chen2025geometric} that showed that the expected regret of FTPL with CGR II satisfies
\begin{align*}
    \Reg(T) \leq \sum_{t=1}^T \E\qty[\inp{\hat{\ell}_t}{w_t - e_{i^*}}] + \sum_{t=1}^T \sum_{i=1}^K \E\qty[w_{t,i}\qty(1-\frac{w_{t,i}}{\Pr\qty[\gA_t \middle | \hat{L}_t, i_t=i]})^{G_t}],
\end{align*}
where the second term was denoted by $\Reg_{\textnormal{CGR}}(T)$.
Here, recall the definition of $\gA_t$ in (\ref{def: event for CGR}), which is
\begin{equation*}
    \gA_t = \qty{r_{t,i_t}' = \max_{i  : \sigma_{t,i}\leq \sigma_{t,i_t}}r_{t,i}', \, r_{t,i_t}' \geq \eta_t \hat{\uL}_{t,i_t}}.
\end{equation*}
\begin{lemma}\label{lem: CGR bias bound}
    Algorithm~\ref{alg: FTPL MAB} satisfies that
    \begin{equation*}
        \sum_{t=1}^T \sum_{i=1}^K \E\qty[w_{t,i}\qty(1-\frac{w_{t,i}}{\Pr\qty[\gA_t \middle| \hat{L}_t, i_t=i]})^{G_t}] \leq \log T.
    \end{equation*}
\end{lemma}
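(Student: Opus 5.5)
The plan is to bound each summand pointwise given $\hat{L}_t$, by lower bounding the ratio $w_{t,i}/\Pr[\gA_t \mid \hat{L}_t, i_t=i]$ uniformly by a constant of order $1/K$. This turns $(1-\cdot)^{G_t}$ with $G_t = K\log t$ into a factor of order $1/t$. Summing over $i$ with weights $w_{t,i}$ and then over $t$ gives a harmonic sum, hence the $\log T$ bound.

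\textbf{Step 1: compute $\Pr[\gA_t \mid \hat{L}_t, i_t=i]$.} Here the primed perturbations are fresh i.i.d.\ draws from $\gD$. Write $x = \eta_t\hat{\uL}_{t,i}$ and $\sigma=\sigma_{t,i}$. The event $\gA_t$ says that $r'_i$ is the largest among the $\sigma$ arms with $\sigma_{t,j}\le\sigma$, and that $r'_i\ge x$. The maximum of $\sigma$ i.i.d.\ copies exceeds $x$ with probability $1-F^\sigma(x)$, and by symmetry each of the $\sigma$ arms is the maximiser with probability $1/\sigma$. Hence
\[
\Pr[\gA_t \mid \hat{L}_t, i_t=i]=\frac{1-F^{\sigma}(x)}{\sigma},
\]
which is consistent with the normalisation used in the estimator $\hat{\ell}_{t,i_t}$.

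\textbf{Step 2: show $w_{t,i}\ge \Pr[\gA_t\mid \hat{L}_t, i_t=i]/K$.} On $\gA_t$, arm $i$ already beats every arm $j$ with $\sigma_{t,j}<\sigma$, since those have $\hat{L}_{t,j}\le\hat{L}_{t,i}$ and $r_j< r_i$. Arm $i$ wins the FTPL rule provided in addition $r_j\le r_i$ for every arm $j$ with $\sigma_{t,j}>\sigma$, because $\hat{L}_{t,j}\ge \hat{L}_{t,i}$ for those arms (ties are resolved by the tie-breaking convention). Therefore
\[
w_{t,i}\ge \Pr\bigl[\gA_t,\ r_i=\max_{j\in[K]} r_j\bigr].
\]
Conditioning on $\gA_t$ only forces $r_i$ to be large. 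Since $\Pr[\max_{j:\sigma_{t,j}>\sigma} r_j\le r]$ is increasing in $r$, this conditioning can only increase the chance that $r_i$ is also the overall maximum. Without conditioning on $r_i\ge x$, that conditional probability is exactly $\sigma/K$ by exchangeability, so it is at least $\sigma/K\ge 1/K$. Verifying this monotonicity/association step rigorously, for instance by writing both quantities as integrals against the density of $\max$ of $\sigma$ draws and comparing, is the main technical point I expect to be the obstacle. It is, however, a one-dimensional stochastic-dominance argument that should go through for any continuous perturbation distribution, in particular any Fr\'echet-type one.

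\textbf{Step 3: conclude.} With $G_t=K\log t$, each summand is at most
\[
w_{t,i}(1-1/K)^{K\log t}\le w_{t,i}e^{-\log t}=w_{t,i}/t.
\]
In the reduced-budget case ($\alpha\in(1,2)$, $i=j_t$, $\gE_{t,\alpha}$), we instead use $w_{t,j_t}\ge 1/2$ from (\ref{eq: wjt lower bound on E}) and $\Pr[\gA_t]\le 1$. The ratio is then at least $1/2$, and $(1/2)^{2\log t}=t^{-2\log 2}\le 1/t$. Summing over $i$ with $\sum_i w_{t,i}=1$ gives at most $1/t$ per round, and summing over $t$ gives a harmonic sum of order $\log T$. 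The round $t=1$, where $G_1=0$, contributes a constant that is absorbed into this bound; any leftover additive constant is harmless for the overall regret.
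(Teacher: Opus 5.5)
Your Steps 1 and 3 match the paper. Step 2, however, has a genuine gap. The problem is not the monotonicity step you flagged but the inclusion you treated as obvious. Write $x_j=\eta_t\hat{\uL}_{t,j}$. Arm $i$ beats arm $j$ under the FTPL rule iff $r_i-r_j\ge x_i-x_j$. For an arm with $\sigma_{t,j}<\sigma_{t,i}$ this gap is nonnegative and can be as large as $x_i$ (take $j=j_t$), so $r_j<r_i$ together with $r_i\ge x_i$ does not suffice.

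The event $\gA_t$ is only a \emph{necessary} condition for $i_t=i$; that is why CGR restricts resampling to it. So $\{i_t=i\}\subseteq\gA_t$, not the reverse. Concretely, take $K=2$ and $i\ne j_t$ with $x_i>0$. Then your bound $w_{t,i}\ge\Pr[\gA_t,\ r_i=\max_j r_j]$ reads $w_{t,i}\ge\Pr[\gA_t]$. This is false, because $\gA_t\setminus\{i_t=i\}=\{r_i\ge x_i,\ r_{j_t}\le r_i<r_{j_t}+x_i\}$ has positive probability. The step you worried about is in fact immediate: $\Pr[\gA_t,\ r_i=\max_j r_j]=(1-F^K(x_i))/K\ge\frac{\sigma_{t,i}}{K}\Pr[\gA_t]$.

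Your closing claim that the argument works for any continuous perturbation is also false. With uniform perturbations on $[0,1]$, $K=2$ and $x_i\to 1$, one has $\Pr[\gA_t]=(1-x_i^2)/2$, of order $1-x_i$, while $w_{t,i}=(1-x_i)^2/2$. So $\Pr[\gA_t]/w_{t,i}$ is unbounded, and some property of the perturbation tail must enter the argument.

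The paper's fix is to use the genuinely sufficient event $\gB_{t,i}=\{r_i\ge x_i+\max_{j\ne i}r_j\}$, which is sufficient since $x_j\ge 0$. Writing $r^{\max}_{-i}=\max_{j\ne i}r_j$, this gives $w_{t,i}\ge\E[(1+x_i+r^{\max}_{-i})^{-\alpha}]$. The Pareto-specific inequality $1+x+y\le(1+x)(1+y)$ then decouples the gap from the competition:
$w_{t,i}\ge(1-F(x_i))\,\E[1-F(r^{\max}_{-i})]=(1-F(x_i))/K$.
Combine this with your Step 1 formula and the bound $1-F^{\sigma}(x)\le\sigma(1-F(x))$. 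You get $\Pr[\gA_t\mid\hat{L}_t,i_t=i]\le 1-F(x_i)\le K\,w_{t,i}$, and your Step 3 then goes through unchanged.
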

\begin{proof}
    Similarly to the proof in \citet{chen2025geometric}, where they consider the Fr\'echet distribution, we denote $\Pr\qty[\gA_t \middle | \hat{L}_t , i_t=i]$ by $\Pr[\gA_{t,i}]$ in this proof.
    Then, it is sufficient to prove 
    \begin{equation*}
        \exp\qty(- \frac{w_{t,i}}{\Pr[\gA_{t,i}]}G_t) \leq \frac{1}{t}.
    \end{equation*}
    Let $\ul_{t} = \eta_t \hat{\uL}_t$.
    By definition of $\gA_{t,i}$, it holds that
    \begin{equation*}
        \Pr[\gA_{t,i}] = \int_{\ul_{t,i}}^\infty f(z) F^{\sigma_{t,i}-1}(z) \dd z = \frac{1-F^{\sigma_{t,i}}(\ul_{t,i})}{\sigma_{t,i}} \leq 1-F(\ul_{t,i}).
    \end{equation*}
    Then, we consider the lower bound of $w_{t,i}$.
    Let $r_{t,-i}^{\max} := \max_{j \ne i} r_{t,j}$ and define an event,
    \begin{equation*}
        \gB_{t,i} := \qty{r_{t,i} \geq \ul_{t,i} + r_{t,-i}^{\max}},
    \end{equation*}
    which is the sufficient condition for $\{i_t = i\}$.
    This is because
    \begin{align*}
        \{i_t = i\} &= \qty{ \argmin_{j \in [K]}\qty{\ul_{t,j}-r_{t,j}}=i} \\
        &=\qty{ \argmax_{j \in [K]}\qty{r_{t,j}-\ul_{t,j}}=i} \\
        &= \qty{\forall j \ne i : r_{t,i}\geq r_{t,j} + \ul_{t,i} - \ul_{t,j}} \\
        &\supseteq \qty{\forall j \ne i : r_{t,i}\geq r_{t,j} + \ul_{t,i}}  = \gB_{t,i}.
    \end{align*}
    Therefore, $w_{t,i} \geq \Pr[\gB_{t,i}|\lambda_t]$.
    Let $\bar{\gP}_{\alpha}^K$ denotes the distribution of $K-1$ block maximum for Pareto distributed random variables, i.e., the distribution of $r_{t,-i}^{\max}$.
    Here, by definition of $\gB_{t,i}$ and definition of $\bar{\gP}_{\alpha,K}$, we have
    \begin{align*}
        \Pr[\gB_{t,i}|\lambda_t] = \E_{r_{t,-i}^{\max} \sim \bar{\gP}_{\alpha}^K}\qty[\Pr(r_{t,i} \geq \ul_{t,i} + r_{t,-i}^{\max}|r_{t,-i}^{\max})]&=\E_{r_{t,-i}^{\max}}[1-F(\ul_{t,i} + r_{t,-i}^{\max})]\\
        &= \E_{r_{t,-i}^{\max}}\qty[\frac{1}{(1+\ul_{t,i} + r_{t,-i}^{\max})^\alpha}] \numberthis{\label{eq: frechet type}} \\
        &\geq \E_{r_{t,-i}^{\max}}\qty[\frac{1}{(1+\ul_{t,i})^\alpha (1+r_{t,-i}^{\max})^\alpha}] \\
        &= (1-F(\ul_{t,i}) \E_{r_{t,-i}^{\max}}[1-F(r_{t,-i}^{\max})].
    \end{align*}
    Here,
    \begin{equation*}
        \E_{r_{t,-i}^{\max}}[1-F(r_{t,-i}^{\max})] = \Pr[r_{t,i} \text{ is the maximum among }\{r_{t,j}\}_{j\in[K]}] = \frac{1}{K}.
    \end{equation*}
    Since $r_{t,i}$s are i.i.d.~from the same perturbations, the probability that $r_{t,i}$ is not the maximum over $K$ samples becomes $1-1/K$.
    Therefore,
    \begin{equation*}
        w_{t,i} \geq \frac{1-F(\ul_{t,i})}{K}, 
    \end{equation*}
    which implies that
    \begin{equation*}
        \frac{\Pr[\gA_{t,i}]}{w_{t,i}} \leq K \implies \exp(-\frac{w_{t,i}}{\Pr[\gA_{t,i}]}G_t) \leq \exp(-\frac{G_t}{K}) = \frac{1}{t}.
    \end{equation*}
    \paragraph{When $\alpha \in (1,2)$.}
    In this case, we sometimes set $G_t = 2\log t$ instead of $K\log t$.
    The precise condition is when $i_t=j_t$ and $\gE_{t,\alpha}$ in (\ref{def: event for decomposition for small alpha}) occurs.
    As we mentioned in Section~\ref{sec: FTPL CGR}, the definition of $\gE_{t,\alpha}$ denotes the case when $w_{t,j_t}\geq 1/2$.
    To be precise, $\gE_{t,\alpha}$ means the case when $\sum_{i\ne j_{t}} \frac{1}{(1+\eta_t \hat{\uL}_{t,i})^\alpha} \leq \frac{1}{2}$.
    Since $w_{t,i}\leq \frac{1}{(1+\eta_t\hat{\uL}_{t,i})^{\alpha}}$ for any $i$ and $t$ (see, Appendix~\ref{app: ineq}), this implies that
    \begin{align*}
        \qty{\sum_{i\ne j_t}w_{t,i} \leq \frac{1}{2}} = \qty{w_{t,j_4t} > \frac{1}{2}} \supset \gE_{t,\alpha}.
    \end{align*}
    Since $\Pr[\gA_{t,j_t}] \leq 1$ and $w_{t,j_t}\geq 1/2$, $G_t = 2\log t$ is still valid to obtain the desired result.
\end{proof}
\begin{remark}
    In the current analysis, the only part that depends on the specific form of the Pareto distribution is in (\ref{eq: frechet type}).
    We expect that this argument can be extended to more general Fr\'echet-type distributions. 
    Indeed, the tail function of a Fr\'echet-type distribution can be expressed as $x^{-\alpha}S_F(x)$ for some slowly varying function $S_F$, which means that the tail function can be written as $S_F(x)(x+1)^{-\alpha}$ for the shifted Fr\'echet-type distribution considered in~\citet[Eq.~(7)]{lee2024follow}.
    Therefore, as long as $S_F(x)$ admits a uniform lower bound by a positive constant, incorporating this constant into the choice of $G_t$ will suffice to obtain the same results.
\end{remark}

\subsection{Proof of Lemma~\ref{lem: improved q and w}}
\begin{proof}
    By definition of $w_{t,i}$, it holds that for $i \ne j_t$
    \begin{align*}
       w_{t,i}&= \int_{0}^\infty f(z+\eta_t\hat{\uL}_{t,i})\prod_{j\ne i} F(z+\eta_t\hat{\uL}_{t,j}) \dd z  \\
       &=  \int_{0}^\infty f(z+\eta_t\hat{\uL}_{t,i})F(z)\prod_{j\ne i, j_t} F(z+\eta_t\hat{\uL}_{t,j}) \dd z \tag{$\because \hat{\uL}_{t,j_t}=0$ on $\gD_{t,\alpha},\, i\ne j_t$} \\
       &\geq \int_{2^{1/\alpha}-1}^\infty f(z+\eta_t\hat{\uL}_{t,i})F(z)\prod_{j\ne i, j_t} F(z+\eta_t\hat{\uL}_{t,j}) \dd z \\
       &\geq \frac{1}{2}\int_{2^{1/\alpha}-1}^\infty f(z+\eta_t\hat{\uL}_{t,i})\prod_{j\ne i, j_t} F(z+\eta_t\hat{\uL}_{t,j}) \dd z \\
       &=\frac{1}{2}\int_{2^{1/\alpha}-1}^\infty f(z+\eta_t\hat{\uL}_{t,i})\prod_{j\ne i, j_t} \qty(1-\frac{1}{(z+\eta_t\hat{\uL}_j+1)^\alpha}) \dd z \\
       &\geq \frac{1}{2} \int_{2^{1/\alpha}-1}^\infty f(z+\eta_t\hat{\uL}_{t,i})\qty(1-\sum_{j \ne i,j_t}\frac{1}{(z+\eta_t\hat{\uL}_j+1)^\alpha}) \dd z \tag{$\prod_i (1-x_i) \geq 1- \sum_{i}x_i$} \\
       &\geq\frac{1}{2} \int_{2^{1/\alpha}-1}^\infty f(z+\eta_t\hat{\uL}_{t,i})\qty(1-\sum_{j \ne i,j_t}\frac{1}{(2^{1/\alpha}+\eta_t\hat{\uL}_j)^\alpha}) \dd z \\
       &\geq \frac{1}{4}  \int_{2^{1/\alpha}-1}^\infty f(z+\eta_t\hat{\uL}_{t,i}) \dd z = \frac{1}{4} \frac{1}{(2^{1/\alpha}+\eta_t \hat{\uL}_{t,j})^\alpha}.
    \end{align*}
    Since $\frac{(x+1)^\alpha}{(x+2^{1/\alpha})^\alpha}$ is increasing with respect to $x \geq 0$ for any $\alpha >1$, we have
    \begin{equation*}
        \frac{(1+\eta_t \hat{\uL}_{t,i})^\alpha}{(2^{1/\alpha}+\eta_t \hat{\uL}_{t,j})^\alpha} \geq \frac{1}{2},
    \end{equation*}
    which concludes the proof for $i\ne j_t$.
    For $i = j_t$, we have
    \begin{align*}
        w_{t,j_t} &= \int_0^\infty \frac{\alpha}{(z+1)^{\alpha+1}} \prod_{i \ne j_t} \qty(1-\frac{1}{(z+\eta_t\hat{\uL}_{t,j}+1)^\alpha}) \dd z \\
        &\geq \int_{2^{1/\alpha}-1}^\infty \frac{\alpha}{(z+1)^{\alpha+1}} \prod_{i \ne j_t} \qty(1-\frac{1}{(z+\eta_t\hat{\uL}_{t,j}+1)^\alpha}) \dd z \\
        &\geq \int_{2^{1/\alpha}-1}^\infty \frac{\alpha}{(z+1)^{\alpha+1}} \qty(1-\sum_{i\ne j_t}\frac{1}{(2^{1/\alpha}+\eta_t\hat{\uL}_{t,j})^\alpha}) \dd z  \\
        &\geq \frac{1}{2}\int_{2^{1/\alpha}-1}^\infty \frac{\alpha}{(z+1)^{\alpha+1}} \dd z = \frac{1}{4},
    \end{align*}
    which concludes the proof.
\end{proof}

\section{Proofs of Lemmas in bandit problems with expert advices}\label{app: lem on contextual}
In this section, we provide the proofs for Lemma~\ref{lem: stability in contextual overall}, which is required to prove BOBW guarantee in the contextual bandit settings.
To prove this lemma, we need the following results.
\begin{lemma}\label{lem: stability in contextual}
    For any $t \in \sN$, Algorithm~\ref{alg: FTPL contextual} with $\alpha >1$ satisfies that for any $i \in [K]$
    \begin{align*}
        \E\qty[\hat{\ell}_{t,i}(\phi_i(\eta_t \hat{L}_t) - \phi_i(\eta_t(\hat{L}_t+ \hat{\ell}_{t})) \middle | \hat{L}_t] \leq \sum_{a=1}^N \E\qty[\frac{2e\alpha}{\beta_t}p_{t,i}^{1/\alpha} \frac{w_{t,i}\pi_{t,i,a}}{P_{t,a}} \middle | \hat{L}_t] .
    \end{align*}
\end{lemma}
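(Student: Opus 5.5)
The plan is to mimic the stability analysis in the proof of Lemma~\ref{lem: stab and pen}, with two changes. In the contextual setting the whole vector $\hat{\ell}_t$ is updated, not a single coordinate. The second moment of the estimator is also governed by $P_{t,a_t}$ rather than $w_{t,i}$.

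\textbf{Step 1: reduce to a diagonal bound.} Since $\hat{\ell}_t \geq 0$ coordinatewise, and $\phi_i$ is nondecreasing in every $\lambda_j$ with $j \ne i$ (a larger competitor gap only helps arm $i$), we have
\[
\phi_i(\eta_t(\hat{L}_t + \hat{\ell}_t)) \geq \phi_i(\eta_t(\hat{L}_t + \hat{\ell}_{t,i} e_i)).
\]
Hence
\[
\phi_i(\eta_t\hat{L}_t) - \phi_i(\eta_t(\hat{L}_t+\hat{\ell}_t)) \le \phi_i(\eta_t\hat{L}_t) - \phi_i(\eta_t(\hat{L}_t+\hat{\ell}_{t,i}e_i)).
\]
Because $\phi_i'$ is monotone, as in (\ref{eq: decreasing phi'}), the right-hand side is at most $-\eta_t\hat{\ell}_{t,i}\phi_i'(\eta_t\hat{L}_t)$, exactly as in the multi-armed case. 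This gives the analogue of (\ref{eq: stab link mab}):
\[
\E\qty[\hat{\ell}_{t,i}\bigl(\phi_i(\eta_t \hat{L}_t) - \phi_i(\eta_t(\hat{L}_t+ \hat{\ell}_{t}))\bigr) \middle| \hat{L}_t] \le \eta_t\bigl(-\phi_i'(\eta_t\hat{L}_t)\bigr)\,\E\qty[\hat{\ell}_{t,i}^2 \middle| \hat{L}_t].
\]
One caveat: $\phi_i$ is really a function of $\ul$, so subtracting the minimum could interfere. I would handle this by working with the unnormalized FTPL probability, which is shift invariant and so makes the normalization harmless.

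\textbf{Step 2: second moment of the GR estimator.} Given $\hat{L}_t$ and $a_t=a$, the count $M_t$ is geometric with success probability $P_{t,a}$. Therefore $\E[M_t^2 \mid \hat{L}_t, a_t=a] \le 2/P_{t,a}^2$. Since $a_t=a$ has probability $P_{t,a}$ and $\ell_{t,a}\le 1$,
\[
\E[\hat{\ell}_{t,i}^2\mid \hat{L}_t] \le \sum_a P_{t,a}\,\pi_{t,i,a}^2 \frac{2}{P_{t,a}^2} = \sum_a \frac{2\pi_{t,i,a}^2}{P_{t,a}} \le \sum_a \frac{2\pi_{t,i,a}}{P_{t,a}},
\]
using $\pi_{t,i,a}\le 1$.

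\textbf{Step 3: the ratio bound.} It remains to show
\[
-\phi_i'(\eta_t\hat{L}_t) \le e\alpha\, p_{t,i}^{1/\alpha}\,\phi_i(\eta_t\hat{L}_t) = e\alpha\, p_{t,i}^{1/\alpha} w_{t,i}.
\]
This is exactly what was established in the proof of Lemma~\ref{lem: stab and pen}. There, the monotonicity of $-\phi_i'/\phi_i$ in $\lambda_j$ (Lemma 9 of \citet{lee2024follow}) reduces to the configuration $L^*$. The incomplete-Beta ratio is then bounded by $\frac{e\alpha}{\alpha+1}(1+\eta_t\hat{\uL}_{t,i})^{-1}$ and, via Gautschi, by $\frac{2\alpha}{\alpha+1}\Gamma(1+1/\alpha)\,\sigma_{t,i}^{-1/\alpha}$. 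Taking the minimum of the two bounds gives $p_{t,i}^{1/\alpha}$ up to the constant $e\alpha$.

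Combining Steps 1--3 yields $\frac{2e\alpha}{\beta_t}p_{t,i}^{1/\alpha}\sum_a \frac{w_{t,i}\pi_{t,i,a}}{P_{t,a}}$, which is the claim.

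I expect Step 1 to be the main obstacle. The reduction from the full-vector update to the single-coordinate update needs the monotonicity of $\phi_i$ in the other coordinates, and that has to hold after renormalization by the minimum. This matters because the update can change which arm is the leader $j_t$. Step 2 is routine; the only thing to note is that it uses no truncation, since Algorithm~\ref{alg: FTPL contextual} runs GR without a budget.
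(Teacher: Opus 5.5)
Your proposal is correct and follows the paper's proof essentially line by line. It uses the same chain: monotonicity of $\phi_i$ in the other coordinates reduces to the single-coordinate update bounded by $-\eta_t\hat{\ell}_{t,i}^2\phi_i'(\eta_t\hat{L}_t)$, then the geometric second moment $\E[M_t^2\mid \hat{L}_t, a_t=a]\le 2/P_{t,a}^2$ with $\pi_{t,i,a}^2\le \pi_{t,i,a}$, and finally the ratio bound $-\phi_i'/\phi_i\le e\alpha\, p_{t,i}^{1/\alpha}$ taken from the proof of Lemma~\ref{lem: stab and pen}; your normalization caveat is harmless because $\phi_i$ is shift invariant in $\lambda$, which the paper uses implicitly.
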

Note that $\hat{\ell}_{t,i} \ne 0$ is possible even when $i_t \ne i$ in contextual setting since $\hat{\ell}_{t,i}$ can be updated by using $\ell_{t,a_t}$ and $\pi_{t,i,a_t}$.
\begin{lemma}\label{lem: jt decomposition contextual}
    For any $t \in \sN$, Algorithm~\ref{alg: FTPL contextual} with $\alpha \geq 2$ satisfies that
    \begin{align*}
        \E\qty[\hat{\ell}_{t,j_t}(\phi_{j_t}(\eta_t \hat{L}_t) - \phi_{j_t}(\eta_t(\hat{L}_t+ \hat{\ell}_{t})) \middle | \hat{L}_t] \leq \sum_{a=1}^N \E\qty[\sum_{i\ne j_t} \gO\qty(\frac{\alpha}{\beta_t}p_{t,i}^{1/\alpha}) \frac{w_{t,j_t}\pi_{t,j_t,a}}{P_{t,a}} \middle | \hat{L}_t] + g_t(\alpha; \nu),
    \end{align*}
    where $g_t(\alpha)$ is a function such that $\sum_t g_t(\alpha; \nu)= \gO(\alpha^2/\nu)$.
\end{lemma}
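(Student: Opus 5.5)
We mirror the proof of Lemma~\ref{lem: jt decomposition MAB}, splitting on the event $\gE_{t,\alpha}$ with the threshold $\xi_\alpha$ used there for $\alpha\ge 2$. One difference from the multi-armed case shapes the whole argument. Here $\hat{\ell}_t=M_t\ell_{t,a_t}\pi_{t,\cdot,a_t}$ moves all coordinates at once, and $\E[M_t^2\mid \hat{L}_t,a_t]\le 2/P_{t,a_t}^2$. The stability bound for coordinate $j_t$ therefore starts from $\E[-\eta_t\hat{\ell}_{t,j_t}^2\phi'_{j_t}]$, as in (\ref{eq: stab link mab}), with $\hat{\ell}_{t,j_t}^2$ replaced by $M_t^2\pi_{t,j_t,a_t}^2$. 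Averaging over $a_t\sim P_t$ gives the weights $\pi_{t,j_t,a}^2/P_{t,a}\le \pi_{t,j_t,a}/P_{t,a}\cdot w_{t,j_t}\pi_{t,j_t,a}/w_{t,j_t}$. This produces the factor $w_{t,j_t}\pi_{t,j_t,a}/P_{t,a}$ in the statement once we use $-\phi'_{j_t}/\phi_{j_t}$ and $\phi_{j_t}=w_{t,j_t}$, exactly as in the proof of Lemma~\ref{lem: stability in contextual}, which we take as given.

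\textbf{Case $\gE_{t,\alpha}^c$.} We apply Lemma~\ref{lem: stability in contextual} with $i=j_t$, where $p_{t,j_t}=1$. This gives the bound $\sum_a \E[\frac{2e\alpha}{\beta_t}\frac{w_{t,j_t}\pi_{t,j_t,a}}{P_{t,a}}]$. The argument in Appendix~\ref{app: Etc case for jt} shows $1\le 2\sum_{i\ne j_t}p_{t,i}^{1/\alpha}$ on $\gE_{t,\alpha}^c$; its case analysis uses only $\sigma_{t,i}$ and $\hat{\uL}_t$, so it carries over verbatim. This yields the first term of the bound with constant $4e\alpha$.

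\textbf{Case $\gE_{t,\alpha}$.} Here $\eta_t\hat{\uL}_{t,i}\ge \xi_\alpha^{-1/\alpha}-1$ for all $i\ne j_t$, and $w_{t,j_t}\ge 1-\xi_\alpha$ by (\ref{eq: wjt lower bound on E}). Take $\zeta_\alpha=1-2^{-1/(\alpha+1)}$ as before, and split further on whether $\hat{\ell}_{t,j_t}\le\zeta_\alpha\beta_t$.

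On $\{\hat{\ell}_{t,j_t}\le \zeta_\alpha\beta_t\}$ we need the Lemma~25 argument of \citet{lee2024follow}. It bounds $\phi_{j_t}(\lambda)-\phi_{j_t}(\lambda+x e_{j_t})$ by $x\sum_{i\ne j_t}\alpha/(1+\lambda_i-\zeta_\alpha)^{\alpha+1}$ times a constant. Since the other coordinates also increase under $\hat{\ell}_t$, which only lowers $\phi_{j_t}$ further, we first upper bound $\phi_{j_t}(\eta_t\hat{L}_t)-\phi_{j_t}(\eta_t\hat{L}_{t+1})$ by the change in the $j_t$-coordinate alone. This holds because $\phi_{j_t}$ is increasing in $\lambda_i$ for $i\ne j_t$. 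We then get $\eta_t\hat{\ell}_{t,j_t}^2\sum_{i\ne j_t}\frac{\alpha}{(1+\eta_t\hat{\uL}_{t,i})^{\alpha+1}}\cdot 2e^2(1-e^{-1})$. Taking expectation with the second-moment bound above gives weight $\sum_a \pi_{t,j_t,a}^2/P_{t,a}$. We rewrite this as $\frac{1}{w_{t,j_t}}\sum_a\frac{w_{t,j_t}\pi_{t,j_t,a}}{P_{t,a}}\pi_{t,j_t,a}$ and absorb $1/w_{t,j_t}\le 1/(1-\xi_\alpha)$ and $\pi\le 1$. Finally, $\frac{\eta_t}{(1+\eta_t\hat{\uL}_{t,i})^{\alpha+1}}\le \eta_t p_{t,i}^{1/\alpha}$, since on $\gE_{t,\alpha}$ we have $p_{t,i}=(1+\eta_t\hat{\uL}_{t,i})^{-\alpha}$ up to the factor $2\xi_\alpha$, as at the end of the previous proof. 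This yields the first term.

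\textbf{The tail event and the main obstacle.} The step I expect to be hardest is the remaining event $\{\gE_{t,\alpha},\hat{\ell}_{t,j_t}>\zeta_\alpha\beta_t\}$, because Algorithm~\ref{alg: FTPL contextual} uses plain geometric resampling with no budget $G_t$. Here Assumption~\ref{asm: on advice} enters. The estimator is nonzero only when $\pi_{t,j_t,a_t}>0$, i.e.\ $\pi_{t,j_t,a_t}\ge\nu$. Then $P_{t,a_t}\ge w_{t,j_t}\nu\ge(1-\xi_\alpha)\nu$, so $M_t$ is geometric with success probability at least $c\nu$. The contribution is at most $\E[\hat{\ell}_{t,j_t}\I[M_t>\zeta_\alpha\beta_t]]$, using $\phi\le 1$, which is $\lesssim \E[M_t\I[M_t>\zeta_\alpha\beta_t]]\lesssim \frac{1}{\nu}(\zeta_\alpha\beta_t+\frac1\nu)e^{-c\nu\zeta_\alpha\beta_t}$; we set $g_t(\alpha;\nu)$ equal to this. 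By Lemma~\ref{lem: lr lb in MAB}, $\beta_t\ge\sqrt{2t}$, so summing over $t$ gives $\sum_t g_t\lesssim \frac{1}{\nu}\int(\zeta_\alpha\sqrt{s}+\frac1\nu)e^{-c\nu\zeta_\alpha\sqrt{s}}\,ds$. With $\zeta_\alpha\asymp 1/\alpha$ this is polynomial in $\alpha/\nu$. Getting exactly $\gO(\alpha^2/\nu)$ rather than a higher power of $1/\nu$ needs care. I would first try bounding $\hat{\ell}_{t,j_t}$ against the event via Markov-type or Honda-style lemmas (Lemma~\ref{lem: honda}), using $\E[M_t\mid a_t]\le 1/P_{t,a_t}$ together with $\sum_a P_{t,a}\cdot\frac{1}{P_{t,a}}\I[\pi_{t,j_t,a}>0]\le N$-free bounds. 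If that fails, I would accept the looser power of $\nu$ and only then refine the constants.
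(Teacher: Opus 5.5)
Up to the tail event, your argument is the paper's. You handle $\gE_{t,\alpha}^c$ via Lemma~\ref{lem: stability in contextual} with $p_{t,j_t}=1$ and $1\le 2\sum_{i\ne j_t}p_{t,i}^{1/\alpha}$. You handle $\gE_{t,\alpha}\cap\{\hat\ell_{t,j_t}\le\zeta_\alpha\beta_t\}$ via the Lemma~25 argument of \citet{lee2024follow}, the GR second moment $2/P_{t,a}^2$, and $1/w_{t,j_t}\le 1/(1-\xi_\alpha)$. (One wording slip: increasing $\lambda_i$ for $i\ne j_t$ \emph{raises} $\phi_{j_t}$. That is why dropping those coordinates gives an upper bound, as your next sentence correctly says.)

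The gap is the tail event, which is exactly the claim $\sum_t g_t(\alpha;\nu)=\gO(\alpha^2/\nu)$ that you leave open. As written, your route gives $\gO(\alpha^2/\nu^4)$. Removing the superfluous prefactor $1/\nu$ improves this only to $\gO(\alpha^2/\nu^3)$; for geometric $M_t$ with parameter $P$ one has exactly $\E[M_t\I[M_t>m]]=(1-P)^m(m+1/P)$. The loss comes from discarding $\pi_{t,j_t,a_t}$. You replace $\{\hat\ell_{t,j_t}>\zeta_\alpha\beta_t\}$ by $\{M_t>\zeta_\alpha\beta_t\}$, yet lower bound the success probability only by $(1-\xi_\alpha)\nu$. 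The exponent is then $c\nu\zeta_\alpha\beta_t$, so nothing decays until $t\gtrsim(\nu\zeta_\alpha)^{-2}$.

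The missing idea, which the paper uses, is to keep the advice weight on both sides. Given $a_t=a$ with $\pi:=\pi_{t,j_t,a}\ge\nu$, the event forces $M_t>\zeta_\alpha\beta_t/\pi$. On $\gE_{t,\alpha}$ you also have $P_{t,a}\ge w_{t,j_t}\pi\ge(1-\xi_\alpha)\pi$. With $m_0=\lfloor\zeta_\alpha\beta_t/\pi\rfloor$ this gives
\[
(1-P_{t,a})^{m_0}\le e^{-(1-\xi_\alpha)\pi m_0}\le e^{-(1-\xi_\alpha)(\zeta_\alpha\beta_t-1)},
\]
an exponent free of $\pi$ and $\nu$. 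The factor $1/\pi\le 1/\nu$ then survives only in the prefactor $m_0+1/P_{t,a}$. Summing over $t$ with $\beta_t\gtrsim\sqrt t$ and $\zeta_\alpha\asymp 1/\alpha$ yields $\gO(\alpha^2/\nu)$.

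If you also keep the factor $\pi$ in $\hat\ell_{t,j_t}=M_t\ell_{t,a_t}\pi$, you get, on $\gE_{t,\alpha}$,
\[
\E[\hat\ell_{t,j_t}\I[\hat\ell_{t,j_t}>\zeta_\alpha\beta_t]\mid\hat L_t,a_t=a]\le e^{-(1-\xi_\alpha)(\zeta_\alpha\beta_t-1)}\bigl(\zeta_\alpha\beta_t+\tfrac{1}{1-\xi_\alpha}\bigr).
\]
This sums to $\gO(\alpha^2)$ with no $\nu$ at all.

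A side remark: Lemma~\ref{lem: lr lb in MAB} is proved for the multi-armed $z_t$. With the contextual $z_t$ in (\ref{def: hz in contextual}) one only has $z_t/h_t\ge(\alpha-1)N/(K-1)$, so $\beta_t\ge\sqrt{2t}$ is not automatic. The paper also asserts $\beta_t\ge\sqrt t$ here without proof, so this point does not separate your argument from the paper's.
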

\begin{proof}
\textbf{of Lemma~\ref{lem: stability in contextual overall} }
    From Lemmas~\ref{lem: stability in contextual} and~\ref{lem: jt decomposition contextual}, we obtain that
\begin{align*}
     \E&\qty[\inp{\hat{\ell}_t}{\phi(\eta_t\hat{L}_t) - \phi(\eta_t \hat{L}_{t+1})} \middle | \hat{L}_t] \\
     &\leq \sum_{i =1}^K \E\qty[\hat{\ell}_{t,i}(\phi_i(\eta_t\hat{L}_t) - \phi_i(\eta_t \hat{L}_{t}+\hat{\ell}_{t,i} e_i))\middle| \hat{L}_t] \\
    &\leq \sum_{a=1}^N \sum_{i\ne j_t} \E\qty[\frac{2e\alpha}{\beta_t} p_{t,i}^{1/\alpha}\frac{w_{t,i}\pi_{t,i,a}}{P_{t,a}}\middle | \hat{L}_t] + \sum_{a=1}^N \sum_{i\ne j_t}\E\qty[\gO\qty(\frac{\alpha}{\beta_t}) p_{t,i}^{1/\alpha}\frac{w_{t,j_t}\pi_{t,j_t,a}}{P_{t,a}}\middle | \hat{L}_t] + g_t(\alpha; \nu)\\
    &\leq \sum_{a=1}^N \E\qty[ \gO\qty(\frac{\alpha}{\beta_t}) \max_{j \ne j_t}  p_{t,j}^{1/\alpha}  \cdot \frac{\sum_{i=1}^K w_{t,i}\pi_{t,i,a}}{P_{t,a}}\middle | \hat{L}_t] + g_t(\alpha; \nu)\\
    &\leq \gO\qty(\frac{\alpha N}{\beta_t}) \max_{i \ne j_t}p_{t,i}^{1/\alpha}  + g_t(\alpha; \nu),
\end{align*}
which concludes the proof.
\end{proof}
\subsection{Proof of Lemma~\ref{lem: stability in contextual}}
\begin{proof}
    While the loss estimators can be updated for all experts $i \in \{j\in [K]: \pi_{t,j,a_t} >0\}$ in the contextual setting, we can apply the intermediate results in Lemma~\ref{lem: stab and pen}.
    The main observation is the increasing property of $\phi_i(\lambda)$ with respect to $\lambda_j$ for $i\ne j$, which is obvious from (\ref{def: ftpl it}) with i.i.d.~perturbations. 
    Therefore, for any $i\in [K]$, we have
    \begin{align*}
        \E\qty[\hat{\ell}_{t,i} \qty(\phi_i(\eta_t\hat{L}_t) - \phi_i (\eta_t \hat{L}_{t+1}))\middle | \hat{L}_t] &\leq \E\qty[\hat{\ell}_{t,i}  \qty(\phi_i(\eta_t\hat{L}_t) - \phi_i (\eta_t(\hat{L}_t + \hat{\ell}_{t,i}e_i)))\middle | \hat{L}_t] \\
        &\leq \E\qty[-\eta_t\hat{\ell}_{t,i}^2 \phi_i'(\eta_t \hat{L}_t)\middle | \hat{L}_t] \tag{by (\ref{eq: stab link mab})}\\ 
        &\leq \E\qty[-\eta_t\ell_{t,a_t}^2 M_t^2\pi_{t,i,a_t}^2\phi_i'(\eta_t \hat{L}_t) \middle | \hat{L}_t] \\
        &\leq  \E\qty[-2\eta_t\pi_{t,i,a_t}^2 \frac{\phi_i'(\eta_t \hat{L}_t)}{P_{t,a_t}^2} \middle | \hat{L}_t] \tag{GR and $\ell_{t} \in [0,1]^N$}\\
        &= \E\qty[-2\eta_t \phi_i'(\eta_t \hat{L}_t)\sum_{a=1}^N \frac{\pi_{t,i,a}^2}{P_{t,a}}\middle | \hat{L}_t] \\
        &\leq \E\qty[-2\eta_t \phi_i'(\eta_t \hat{L}_t)\sum_{a=1}^N \frac{\pi_{t,i,a}}{P_{t,a}}\middle | \hat{L}_t] \\
        &= \E\qty[-2\eta_t \frac{\phi_i'(\eta_t \hat{L}_t)}{w_{t,i}}\sum_{a=1}^N \frac{w_{t,i}\pi_{t,i,a}}{P_{t,a}}\middle | \hat{L}_t] \\
        &\leq \sum_{a=1}^N \E\qty[2e\alpha\eta_t p_{t,i}^{1/\alpha}\frac{w_{t,i}\pi_{t,i,a}}{P_{t,a}}\middle | \hat{L}_t],
    \end{align*}
    where the last inequality follows from the results on the ratio $-\phi_i/\phi_i$ in Appendix~\ref{app: stab and pen proof}.
\end{proof}

\subsection{Proof of Lemma~\ref{lem: jt decomposition contextual}}
Since Algorithm~\ref{alg: FTPL contextual} adopts naive GR, $\beta_t \geq \sqrt{t}$, and $\alpha \geq 2$, we can utilize Lemma 11 in \citet{pmlr-v201-honda23a}, given as follows.
\begin{lemma}[Partial results of Lemma 11 in \citet{pmlr-v201-honda23a}]\label{lem: honda}
    For any $\hat{L}_t \in \sR^K$, $\zeta\in(0,1)$ and $i\in [K]$, if $w_{t,i}\geq w'$ for some fixed constant $w'$, then it holds that
    \begin{align*}
        \qty[\I[\hat{\ell}_{t,i} > \zeta \beta_t] \hat{\ell}_{t,i}\middle | \hat{L}_t] \leq \frac{1}{(1-w')}(1-w')^{\zeta \beta_t}(\zeta \beta_t + 1/w').
    \end{align*}
    Moreover, when $\beta_t \geq a\sqrt{t}$ for some $a >0$, it holds that
    \begin{equation*}
        \sum_{t=1}^\infty  \frac{1}{(1-w')}(1-w')^{\zeta \beta_t}(\zeta \beta_t + 1/w') \leq \gO(1/a^2).
    \end{equation*}
\end{lemma}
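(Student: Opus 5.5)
The statement to prove is Lemma~\ref{lem: honda}. Its first part is a tail bound for the geometric-resampling estimator. Its second part is a summability statement.

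\textbf{First part.} Condition on $\hat{L}_t$ and $i_t=i$. In the plain GR of \citet{neu2016importance}, each resampling step selects $i$ independently with probability $w_{t,i}$. So $M_t$ is geometric with success probability $w_{t,i}\ge w'$, and $\hat{\ell}_{t,i}=M_t\ell_{t,i}\le M_t$. Hence
\begin{equation*}
\E\qty[\I[\hat{\ell}_{t,i}>\zeta\beta_t]\hat{\ell}_{t,i}\middle|\hat{L}_t]\le \E\qty[\I[M_t>\zeta\beta_t]M_t\middle|\hat{L}_t].
\end{equation*}
I then bound the geometric tail expectation directly. Put $m_0=\lfloor\zeta\beta_t\rfloor$ and $p=w_{t,i}$. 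Then
\begin{equation*}
\sum_{m>m_0}m(1-p)^{m-1}p=(1-p)^{m_0}\qty(m_0+1/p).
\end{equation*}
This expression is nonincreasing in $p$ on the relevant range, because the decay of $(1-p)^{m_0}$ dominates the growth of $1/p$. So I may replace $p$ by $w'$. Using $m_0\ge\zeta\beta_t-1$ gives $(1-w')^{m_0}\le(1-w')^{\zeta\beta_t}/(1-w')$, and using $m_0\le\zeta\beta_t$ gives the linear factor $\zeta\beta_t+1/w'$.

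In the expert-advice setting the estimator is instead $\hat{\ell}_{t,i}=M_t\pi_{t,i,a_t}\ell_{t,a_t}\le M_t$. Here $M_t$ is geometric with parameter $P_{t,a_t}$, and $P_{t,a_t}\ge w_{t,i}\pi_{t,i,a_t}$. The same computation applies with the success probability bounded below by the relevant constant $w'$. This is where Assumption~\ref{asm: on advice} will enter downstream.

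\textbf{Second part.} Monotonicity in $\beta_t$ is needed only for large $\beta_t$, so I handle small indices separately. Write $c=\log(1/(1-w'))>0$. The function $x\mapsto e^{-c\zeta x}(\zeta x+1/w')$ is eventually decreasing. Therefore $\beta_t\ge a\sqrt{t}$ lets me bound each summand by its value at $x=a\sqrt t$, up to finitely many initial terms, which are each bounded by a constant. I then compare the sum with an integral:
\begin{equation*}
\int_0^\infty e^{-c\zeta a\sqrt{s}}\qty(\zeta a\sqrt{s}+1/w')\,\dd s.
\end{equation*}
Substituting $u=a\sqrt s$, so that $\dd s=2u\,\dd u/a^2$, turns this into $a^{-2}$ times a finite gamma-type integral depending only on $w'$ and $\zeta$. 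That gives the claimed $\gO(1/a^2)$.

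\textbf{Main obstacle.} The main obstacle is the monotonicity and replacement step. The tail bound must hold uniformly for $p\ge w'$, and the series must be bounded when $\beta_t$ is only lower-bounded, not exactly equal to $a\sqrt t$. My first approach is to bound $(1-p)^{m_0}(m_0+1/p)$ by checking that its derivative in $p$ is nonpositive whenever $m_0\ge 1/p$. In the complementary regime $m_0<1/p\le 1/w'$, $\beta_t$ is bounded, so only finitely many $t$ are affected. Those terms can be absorbed into the constant, using $\beta_t\ge a\sqrt t$ to count them: there are at most $\gO(1/(a^2\zeta^2 w'^2))$ of them.
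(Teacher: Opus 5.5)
Your proof is correct and follows essentially the same route the paper relies on (the lemma is only cited, but compare (\ref{eq: gt order}) and the $\sqrt{t}$-sum bound that follows it): the memoryless identity $\sum_{m>m_0}m(1-p)^{m-1}p=(1-p)^{m_0}(m_0+1/p)$, replacing $p$ by $w'$, using $\lfloor\zeta\beta_t\rfloor\ge\zeta\beta_t-1$ to get the factor $1/(1-w')$, and an integral comparison with $u=a\sqrt{s}$. Your stated obstacle does not actually arise: $(1-p)^{m_0}$ and $m_0+1/p$ are both nonincreasing in $p$, so there is no competing growth of $1/p$, and $x\mapsto e^{-c\zeta x}(\zeta x+1/w')$ is decreasing on all of $[0,\infty)$ since $c\ge w'$. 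Hence no regime split or discarding of initial terms is needed---which matters, because absorbing finitely many $t$ into a constant could never have proved the first, pointwise-in-$t$ inequality.
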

\begin{proof}
\textbf{of Lemma~\ref{lem: jt decomposition contextual} }
    Similarly to the proof of Lemma~\ref{lem: jt decomposition MAB}, we used the events for $\alpha \geq 2$ defined by
    \begin{equation*}
        \bar{\gE}_{t,\alpha} :=\qty{\sum_{i\ne j_t} \frac{1}{(1+\eta_t \hat{\uL}_{t,i})^\alpha} \leq \xi_\alpha}.
    \end{equation*}
    for some $\xi_\alpha \in (0,1)$ specified later.

    Then, following the same steps in Appendix~\ref{app: Etc case for jt}, we can obtain that
    \begin{align*}
        \I[ \bar{\gE}_{t,\alpha}^c]p_{t,j_t}^{1/\alpha} = \I[\bar{\gE}_{t,\alpha}^c] 1 \leq \frac{2}{\xi_\alpha} \sum_{i\ne j_t} p_{t,i}^{\frac{1}{\alpha}}.
    \end{align*}
    On $\bar{\gE}_{t,\alpha}$, we have $\eta_t \hat{\uL}_{t,i} \geq \xi_\alpha^{-1/\alpha}-1 >0$.
    Therefore, by the same trick in Lemma~\ref{lem: stability in contextual}, where we bound the stability term of $j_t$ in contextual bandits by that in multi-armed bandits, we obtain that
    \begin{multline*}
         \E\qty[\I[\bar{\gE}_{t,\alpha}^c \cup \{\bar{\gE}_{t,\alpha}, \hat{\ell}_{t,j_t} \leq \zeta_\alpha \beta_t \}]\hat{\ell}_{t,j_t}(\phi_{j_t}(\eta_t \hat{L}_t) - \phi_{j_t}(\eta_t(\hat{L}_t+ \hat{\ell}_{t})) \middle | \hat{L}_t] \\ 
         \leq \sum_{a=1}^N \E\qty[\sum_{i\ne j_t} \gO\qty(\frac{\alpha}{\beta_t}p_{t,i}^{1/\alpha}) \frac{w_{t,j_t}\pi_{t,j_t,a}}{P_{t,a}} \middle | \hat{L}_t],
    \end{multline*}
    where $\zeta_\alpha$ also can be tuned as in MABs.
    
    Therefore, it remains to consider the case $\bar{\gE}_{t,\alpha} \cup \{\hat{\ell}_{t,j_t} > \zeta_\alpha \beta_t\}$ for some $\zeta_\alpha < \xi_\alpha^{-1/\alpha}-1$.
    Since Algorithm~\ref{alg: FTPL contextual} utilize simple GR and $\beta_t \geq \sqrt{t}$ for $\alpha \geq 2$ (which is the current interest), we can utilize Lemma~\ref{lem: honda}.
    Here, note that the results in Lemma~\ref{lem: honda} considers the IW estimator for MABs, i.e., $\hat{\ell}_{t,i} = \I[i_t =i]M_t \ell_{t,i}$, while our setting is $\hat{\ell}_{t,i} = M_t\ell_{t,i} \pi_{t,i,a_t}$.
    Therefore, the condition $\hat{\ell}_{t,j_t}\geq \zeta_\alpha \beta_t$ is related to the condition of $M_t \pi_{t,j_t,a_t} \geq \zeta_\alpha \beta_t$.
    
    Specifically, on $\bar{\gE}_{t,\alpha}$, as shown in MABs, we have $w_{t,j_t} \geq 1-\xi_\alpha$ on $\bar{\gE}_{t,\alpha}$, which implies that $P_{t,a_t} \geq \pi_{t,j_t, a_t}(1-\xi_\alpha) \geq \nu(1-\xi_\alpha)$ by Assumption~\ref{asm: on advice}.
    Therefore, $\nu(1-\xi_\alpha)$ plays the same role in $w'$ in Lemma~\ref{lem: honda}.
    Here, note that we do not need to consider the case $\pi_{t,j_t, a_t}=0$ since this case is included in the case of $\hat{\ell}_{t,j_t} \leq \zeta_\alpha \beta_t$.
    \begin{align*}
         \E&\qty[\I[\bar{\gE}_{t,\alpha}, \hat{\ell}_{t,j_t} >\zeta_\alpha \beta_t]\hat{\ell}_{t,j_t}(\phi_{j_t}(\eta_t \hat{L}_t) - \phi_{j_t}(\eta_t(\hat{L}_t+ \hat{\ell}_{t})) \middle | \hat{L}_t] \\
         &\leq \E\qty[\I[\bar{\gE}_{t,\alpha}, \hat{\ell}_{t,j_t} >\zeta_\alpha \beta_t]\hat{\ell}_{t,j_t} \middle | \hat{L}_t] \\
         &\leq \frac{1}{1-(1-\xi_\alpha)\nu} (1-(1-\xi_\alpha)\nu)^{\zeta_\alpha  \beta_t} (\zeta_\alpha \beta_t + 1/(1-(1-\xi_\alpha)\nu)) \tag{by Lemma~\ref{lem: honda}} \\
         &=: g_t(\alpha;\nu). 
    \end{align*}
    Then, Lemma~\ref{lem: honda} shows that $\sum_t g_t(\alpha) \leq \gO(1)$.
    More precisely, $g_{t}(\alpha;\nu)$ is the upper bounds of
    \begin{align}\label{eq: gt order}
         P_{t,a_t} \sum_{m=\lfloor\frac{\zeta_\alpha \beta_t}{\pi_{t,j_t, a_t}}\rfloor+1}^{\infty} m (1- P_{t,a_t})^{m-1} &\leq (1-P_{t,a_t})^{\floor{\frac{\zeta_\alpha \beta_t}{\pi_{t,j_t, a_t}}}}\qty(\floor{\frac{\zeta_\alpha \beta_t}{\pi_{t,j_t, a_t}}} + \frac{1}{P_{t,a_t}}),
    \end{align}
    where $P_{t,a_t} = \sum_{i=1}^Kw_{t,i} \pi_{t,i, a_t} \geq w_{t,j_t} \pi_{t,j_t,a_t}$.
    The introduction of $\nu$ is to obtain a general upper bound by removing the dependency of $\pi_{t,j_t,a_t}$ in $g_t(\alpha)$, which we cannot control.
    
    Finally, we show the order of $\sum_t g_t(\alpha)$ in terms of $\nu$.
    From (\ref{eq: gt order}), we consider the order of
    \begin{align*}
        (1-P_{t,a_t})^{\floor{\frac{\zeta_\alpha \beta_t}{\pi_{t,j_t, a_t}}}}\qty(\floor{\frac{\zeta_\alpha \beta_t}{\pi_{t,j_t, a_t}}} + \frac{1}{P_{t,a_t}}) \leq (1- \pi_{t,j_t,a_t})^{\floor{\frac{\zeta_\alpha \beta_t}{\pi_{t,j_t, a_t}}}}\qty(\floor{\frac{\zeta_\alpha \beta_t}{\pi_{t,j_t, a_t}}} + \pi_{t,j_t,a_t}).
    \end{align*}
    Therefore, it is sufficient to consider the order of
    \begin{equation*}
        \sum_{t=1}^\infty (1-a)^{\frac{b\sqrt{t}}{a}}\qty(\frac{b\sqrt{t}}{a} + a),\, \text{where } a,b \in (0,1).
    \end{equation*}
    It is easy to see that it is decreasing with respect to $a \in (0,1)$.
    Since we have
    \begin{align*}
         (1-a)^{\frac{b\sqrt{t}}{a}}\qty(\frac{b\sqrt{t}}{a} + a) &\leq  e^{-b\sqrt{t}}\qty(\frac{b\sqrt{t}}{a} + a) \\
         &= \frac{b\sqrt{t}e^{-b\sqrt{t}}}{a} + ae^{-b\sqrt{t}},
    \end{align*}
    this implies that
    \begin{align*}
         \sum_{t=1}^\infty (1-a)^{\frac{b\sqrt{t}}{a}}\qty(\frac{b\sqrt{t}}{a} + a) \leq \frac{2e^{-b} (b^2+2b+2)}{ab^2} + \frac{2ae^{-b}(b+1)}{b^2} \leq \gO\qty(\frac{1}{ab^2}+\frac{a}{b^2}).
    \end{align*}
    Therefore, $\sum_{t} g_t(\alpha;\nu) = \gO(\alpha^2/\nu)$ if we choose $\zeta_\alpha\approx 1-2^{-1/\alpha}$ as in the multi-armed bandit.
\end{proof}

\section{Proof of Theorem~\ref{thm: bobw contextual}}\label{app: thm on contextual}
In this section, we show the BOBW guarantee of Algorithm~\ref{alg: FTPL contextual}.
The most of the proofs are essentially the same to that of Theorem~\ref{thm: bobw mab}, where the only difference is related to the change of $z_t$.

\subsection{Adversarial regime}
In this regime, it suffices to show that $h_{\max}z_t$ is at most $NK^{1/\alpha}$ from the second term in (\ref{lem: ito 10}).
By definition of $h_t$ and $z_t$ in (\ref{def: hz in contextual}), we obtain
\begin{align*}
    h_t = \sum_{i\ne j_{t+1}} \frac{\alpha}{\alpha-1}q_{t+1,i}^{1-1/\alpha} &\leq \sum_{i\ne j_{t+1}} \frac{\alpha}{\alpha-1} \frac{1}{\sigma_{t+1,i}^{1-1/\alpha}} \\
    &= \sum_{n=2}^K \frac{\alpha}{\alpha-1} \frac{1}{n^{1-1/\alpha}} \leq \frac{\alpha^2}{\alpha-1}((K+1)^{1/\alpha}-1) \leq \frac{\alpha^2}{\alpha-1} K^{1/\alpha}  
\end{align*}
and
\begin{align*}
    z_t = N\alpha \max_{j \ne j_{t+1} }q_{t+1,j}^{1/\alpha} \leq N \alpha 2^{-1/\alpha}.
\end{align*}
Therefore,
\begin{align*}
    h_{\max}\sum_{t=1}^T z_t \leq \frac{\alpha^3}{\alpha-1} (K/2)^{1/\alpha} N T.
\end{align*}
which concludes the proof for the adversarial regime.

\subsection{Adversarial regime with self-bounding constraint}
Basically, we can directly utilize the techniques used to prove BOBW guarantee for MABs given in Appendix~\ref{app: self-bounding mab}.
Therefore, it suffices to show the upper bounds of $h_t z_t$ on $\gD_{t+1,\alpha}$ since we can just use the $\max_t h_t z_t \leq \alpha^3 K^{1/\alpha}N /(\alpha-1) $ in the case of $\gD_{t+1,\alpha}^c$.

Note that $w_{t,j_t} \geq w_{t,i}$ for any $i \in [K]$.
From (\ref{eq: ht wt mab}), we already obtain that
\begin{equation*}
    \I[\gD_{t+1,\alpha}] h_t \leq \I[\gD_{t+1,\alpha}] \frac{\alpha}{\alpha-1} 16^{1-1/\alpha} \qty(\sum_{i\ne i^*} \frac{1}{\Delta_i^{\alpha-1}})^{\frac{1}{\alpha}}\qty(\sum_{i\ne i^*}\Delta_i w_{t+1,i})^{1-1/\alpha}.
\end{equation*}
For $z_t$, by H\"older's inequality, we have
\begin{align*}
   \I[\gD_{t+1,\alpha}] z_t &=  \I[\gD_{t+1,\alpha}] N \alpha \max_{i\ne j_{t+1}} q_{t+1,i}^{1/\alpha} \\
   &\leq  \I[\gD_{t+1,\alpha}] N\alpha 16^{1/\alpha} \max_{i\ne j_{t+1}} w_{t+1,i}^{1/\alpha} \tag{by (\ref{eq: w q bound in mab})}\\ 
   &\leq \I[\gD_{t+1,\alpha}] N\alpha 16^{1/\alpha} \sum_{i\ne j_{t+1}} w_{t+1,i}^{1/\alpha}\\
   &\leq \I[\gD_{t+1,\alpha}] N\alpha 16^{1/\alpha} \sum_{i\ne i^*} w_{t+1,i}^{1/\alpha} \tag{$w_{t+1,j_{t+1}}\geq w_{t+1,i}, \forall i$} \\
   &= \I[\gD_{t+1,\alpha}] N \alpha16^{1/\alpha} \sum_{i\ne i^*} \frac{1}{\Delta_i^{1/\alpha}}(\Delta_i w_{t+1,i})^{1/\alpha}  \\
   &\leq \I[\gD_{t+1,\alpha}] N \alpha 16^{1/\alpha} \qty(\sum_{i\ne i^*} \frac{1}{\Delta_i^{1/(\alpha-1)}})^{1-\frac{1}{\alpha}}\qty(\sum_{i\ne i^*}\Delta_i w_{t+1,i})^{1/\alpha} .
\end{align*}
Therefore, we obtain
\begin{align*}
    \I[\gD_{t+1,\alpha}] h_t z_t &\leq \I[\gD_{t+1,\alpha}] \omega''(\Delta)\inp{\Delta}{w_{t+1}},
\end{align*}
where
\begin{equation*}
    \omega''(\Delta) =  8N\frac{\alpha^2}{\alpha-1} \qty(\sum_{i \ne i^*} \Delta_i^{-\frac{1}{\alpha-1}})^{1-\frac{1}{\alpha}} \qty(\sum_{i \ne i^*} \Delta_i^{1-\alpha})^{\frac{1}{\alpha}},
\end{equation*}
By following the same steps in Appendix~\ref{app: self-bounding mab} until (\ref{eq: total form in mab}), we obtain
\begin{align*}
    \Reg(T) \leq \gO\qty(\sqrt{\log T\qty(\omega''(\Delta) + \frac{\alpha^3 N K^{1/\alpha}}{(\alpha-1)\Delta_{\min}})(\Reg(T)+C)}) + \gO\qty(\sqrt{\frac{NK^{1/\alpha}\alpha^3}{\alpha-1}} + 1/\nu),
\end{align*}
where the last two terms are the term related to $z_1/\beta_1 + \beta_1 K^{1/\alpha}$ and $g_{t}(\alpha,\nu)$.
Note that additional $z_{\max}/\beta_1$ term appear in Lemma~\ref{lem: spm regret} is $\gO(1)$ in this case, so that we exclude.
This result provides
\begin{align*}
    \Reg(T) \leq \gO\qty(\omega'''(\Delta)\log T + \sqrt{C\omega'''(\Delta)\log T} +\sqrt{\frac{NK^{1/\alpha}\alpha^3}{\alpha-1}}+1/\nu),
\end{align*}
where
\begin{align*}
    \omega'''(\Delta) &= \omega''(\Delta) + \frac{\alpha^3 N K^{1/\alpha}}{(\alpha-1)0.31\Delta\min} \\
    &= \gO\qty(\frac{8N\alpha^2}{\alpha-1} \qty(\sum_{i \ne i^*} \Delta_i^{-\frac{1}{\alpha-1}})^{1-\frac{1}{\alpha}} \qty(\sum_{i \ne i^*} \Delta_i^{1-\alpha})^{\frac{1}{\alpha}} + \frac{\alpha^3N K^{1/\alpha}}{(\alpha-1) 0.31\Delta_{\min}}) \\
    &= \gO\qty(\frac{\alpha^3N K^{1/\alpha}}{(\alpha-1) \Delta_{\min}}).
\end{align*}

\end{document}